\newtcolorbox{promptbox}[2][]{%
  enhanced,
  colback=gray!5,
  colframe=gray!40,
  boxrule=0.5pt,
  arc=4pt,
  outer arc=4pt,
  boxsep=5pt,
  left=5pt,
  right=5pt,
  top=5pt,
  bottom=5pt,
  fonttitle=\bfseries,
  title={#2},
  #1
}
\theoremstyle{plain}
\newtheorem{theorem}{Theorem}[section]
\newtheorem{proposition}[theorem]{Proposition}
\newtheorem{lemma}[theorem]{Lemma}
\newtheorem{corollary}[theorem]{Corollary}
\theoremstyle{definition}
\newtheorem{definition}[theorem]{Definition}
\theoremstyle{remark}
\newtheorem{remark}[theorem]{Remark}
\newcommand{\prob}[1]{\mathrm{Pr}\left(#1\right)}
\newcommand{\E}[1]{\mathrm{E}\left[#1\right]}
\newcommand{\kldiv}[2]{D_{\mathrm{KL}}\left(#1 || #2\right)}
\newcommand{\size}[1]{\left|#1\right|}
\newcommand{\id}[1]{\mathbb{I}\left(#1\right)}
\newcommand{\tr}[1]{\mathrm{tr}\left(#1\right)}
\def\R{\mathbb{R}}
\def\suchthat{\;:\;}
\def\cond{\;\big\vert\;}
\title{On Optimal Steering to Achieve Exact Fairness}
\author{%
  Mohit Sharma\thanks{Part of the work was done when the author was an intern at Microsoft Research India.} \\
  Department of Computer Science\\
  IIIT Delhi, India\\
  \texttt{mohits@iiitd.ac.in} \\
  \And
  Amit Jayant Deshpande\\
  Microsoft Research India\\
  \texttt{amitdesh@microsoft.com} \\
  \AND
  Chiranjib Bhattacharyya\\
  Department of Computer Science and Automation\\
  Indian Institute of Science, Bengaluru, India\\
  \texttt{chiru@iisc.ac.in} \\
  \And
  Rajiv Ratn Shah \\
  Department of Computer Science\\
  IIIT Delhi, India\\
  \texttt{rajivratn@iiitd.ac.in} \\
}
\begin{document}

\maketitle

\begin{abstract}
To fix the `bias in, bias out' problem in fair machine learning, it is important to steer feature distributions of data or internal representations of Large Language Models (LLMs) to \emph{ideal} ones that guarantee group-fair outcomes. Previous work on fair generative models and representation steering could greatly benefit from provable fairness guarantees on the model output. We define a distribution as \emph{ideal} if the minimizer of any cost-sensitive risk on it is guaranteed to have exact group-fair outcomes (e.g., demographic parity, equal opportunity)---in other words, it has no fairness-utility trade-off. We formulate an optimization program for optimal steering by finding the nearest \emph{ideal} distribution in KL-divergence, and provide efficient algorithms for it when the underlying distributions come from well-known parametric families (e.g., normal, log-normal). 

Empirically, our optimal steering techniques on both synthetic and real-world datasets improve fairness without diminishing utility (and sometimes even improve utility). We demonstrate affine steering of LLM representations to reduce bias in multi-class classification, e.g., occupation prediction from a short biography in Bios dataset (De-Arteaga et al.). Furthermore, we steer internal representations of LLMs towards desired outputs so that it works equally well across different groups.

    

\end{abstract}

\section{Introduction}


The importance of clean or \emph{ideal} data in fair machine learning cannot be overstated. The principle of \emph{bias in, bias out} is widely recognised as a root cause of unfair outcomes in ML systems \cite{buolamwini2018gender,mayson2019bias,rambachan2020bias,cowgill2020biased}. Models trained on biased data tend to learn, perpetuate, and often amplify such biases. Importantly, the problem of biased data extends beyond training: fairness-constrained training on biased data does not guarantee fairness on (unbiased) test sets, and post-processing using biased validation data fails to ensure fairness at deployment. Moreover, fairness audits based on biased assessment data can be misleading and difficult to reverse \cite{biswas2021fairpreprocessing,bakalar2021fairness}.


Fairness metrics such as demographic parity and equal opportunity are inherently functions of both the model and the data distribution. Thus, unfair outcomes may be addressed either by adjusting the model or by altering the data distribution. While prior work on fair in-processing aims to construct an \emph{ideal} model under fairness constraints \cite{agarwal2018reductions,donini2018empirical}, our work focuses instead on identifying an \emph{ideal} data distribution that supports fairness. In this respect, our approach aligns most closely with the fair pre-processing literature.


Early work by \citet{kamiran2012data} introduced heuristic reweighting methods for binary classification, adjusting the weight of instances from class $i$ and group $a$ using $\prob{\text{class } i} \cdot \prob{\text{group } a} / \prob{\text{class } i, \text{group } a}$. However, this approach ignores the feature distribution and offers no fairness guarantees when combined with accuracy maximisation. \citet{calmon2017optimized} framed fair pre-processing as an optimisation problem that balances distance to the original distribution with group and individual fairness constraints. While convex in some settings, their approach may be infeasible when group and individual fairness are incompatible \cite{friedler2021impossibility}.


Other work explores alternative mechanisms: \citet{jiang2020identifying} address label bias through reweighting; \citet{plecko2020fair,plecko2024fairadapt} propose fair adaptation methods based on causal models; and \citet{xiong2024fairwasp} reformulate pre-processing as a large-scale mixed-integer program, solved via a cutting-plane method. Fair pre-processing remains widely used in practice, often integrated into fairness toolkits alongside in- and post-processing methods \cite{aif360-oct-2018,bird2020fairlearn}. Despite its heuristic nature, the reweighing method of \citet{kamiran2012data} continues to perform well in mitigating bias across standard benchmarks \cite{sharma2023testing,blow2024,xiong2024fairwasp}.

\begin{figure*}
\centering
\begin{subfigure}[b]{0.26\textwidth} 
    \includegraphics[width=\textwidth]{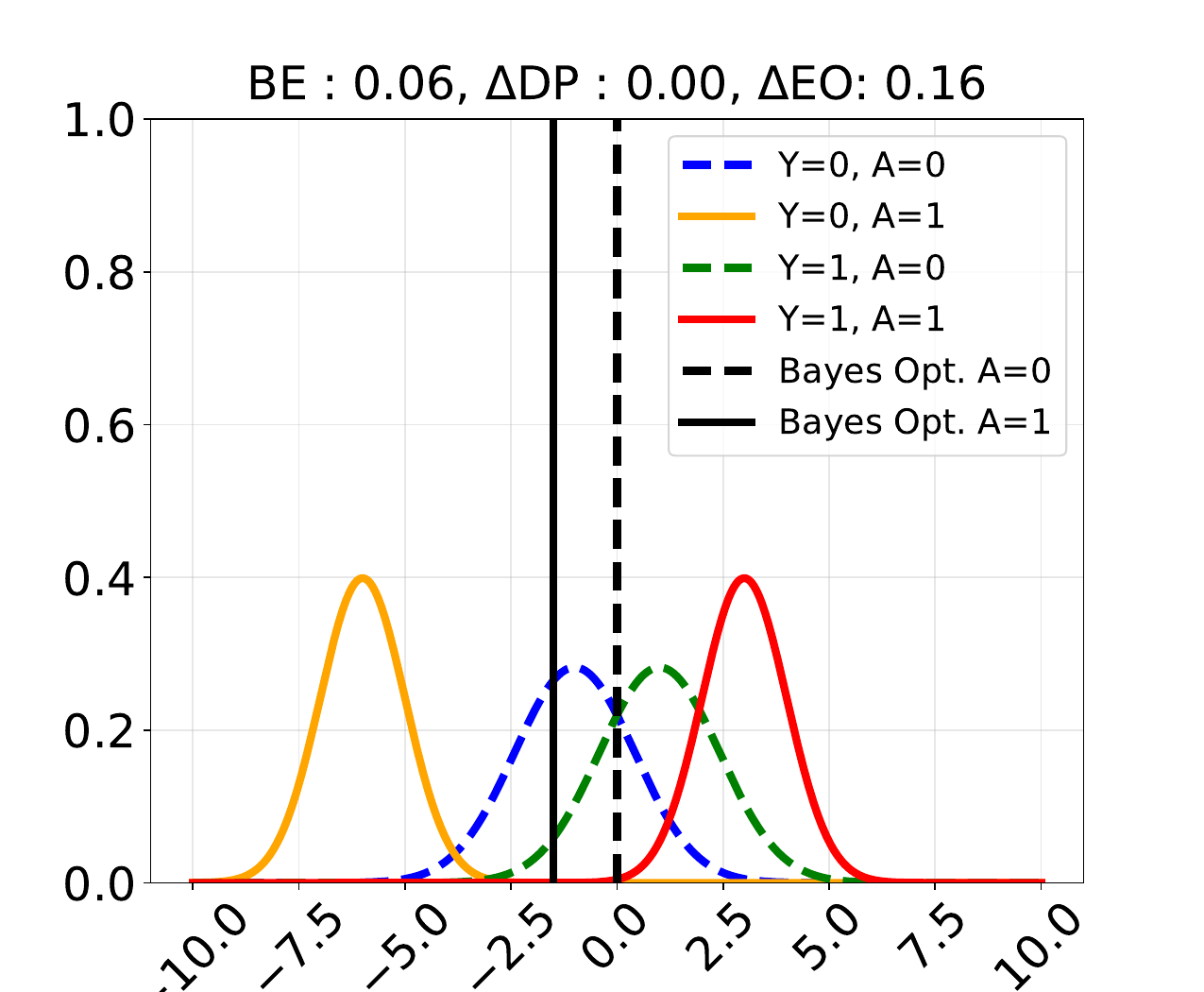}
    \caption{Original}
    \label{subfig:intro_orig}
\end{subfigure}
\hspace{-10pt} 
\begin{subfigure}[b]{0.26\textwidth}
    \includegraphics[width=\textwidth]{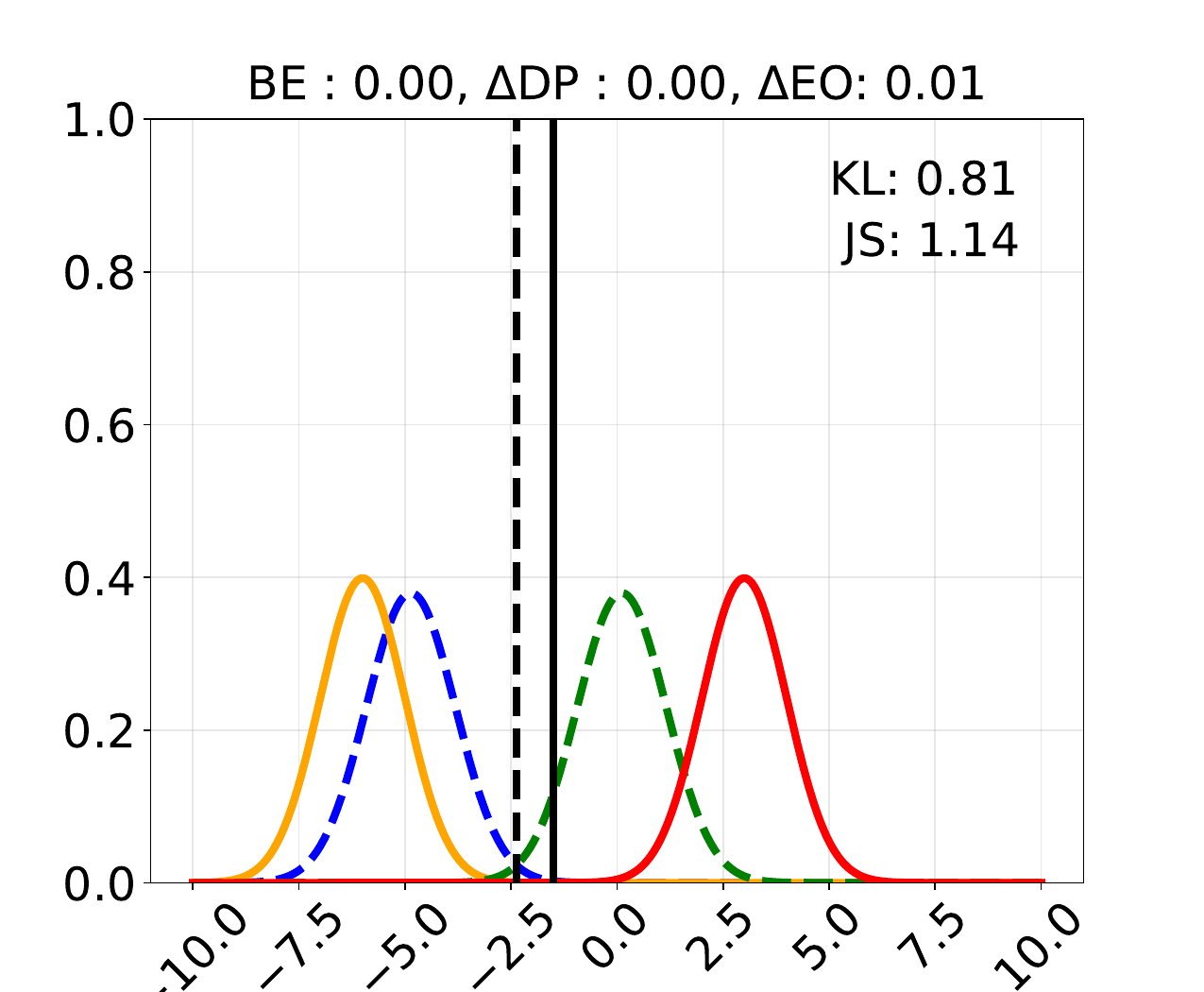}
    \caption{Ideal (Affirmative)}
    \label{subfig:intro_affirmative}
\end{subfigure}
\hspace{-10pt} 
\begin{subfigure}[b]{0.26\textwidth} 
    \includegraphics[width=\textwidth]{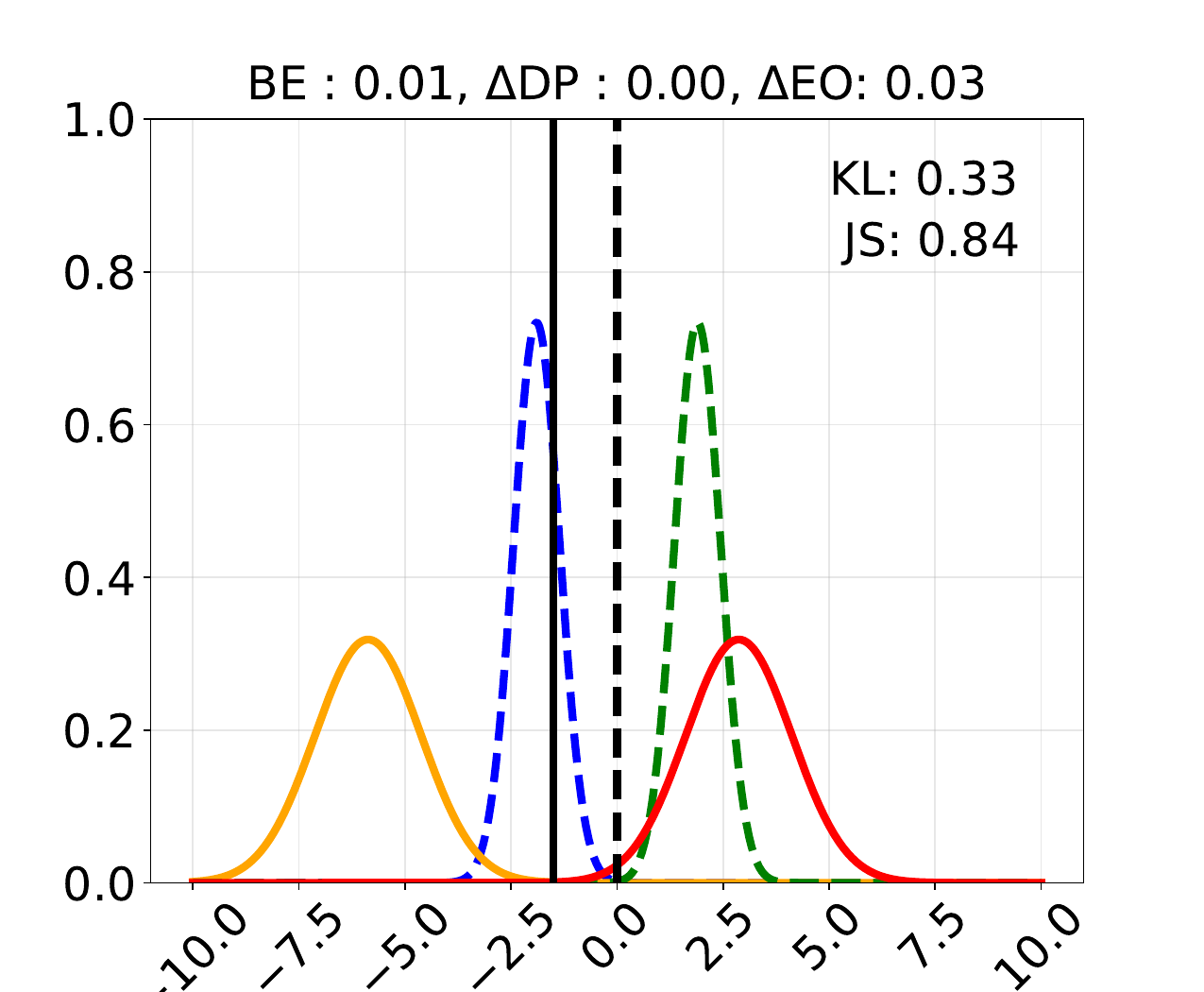}
    \caption{Ideal (Changing All)}
    \label{subfig:intro_all}
\end{subfigure}
\hspace{-10pt} 
\begin{subfigure}[b]{0.26\textwidth} 
    \includegraphics[width=\textwidth]{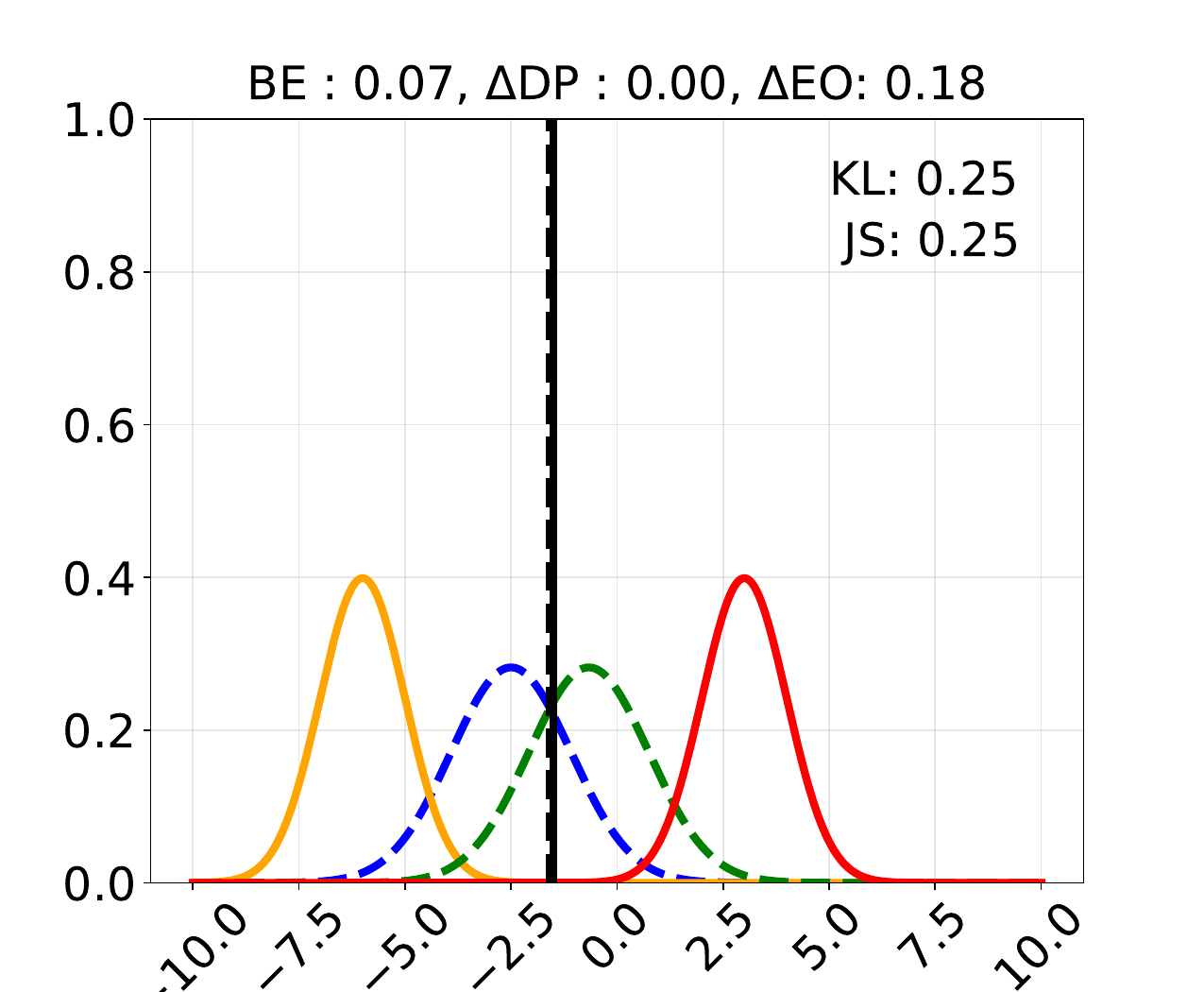}
    \caption{Mean Matching}
    \label{subfig:intro_mean}
\end{subfigure}
\caption{Comparison of different interventions for changing Data Distributions for Exact Fairness. Figure (\ref{subfig:intro_orig}) captures the original distribution, its Bayes error (BE), and the unfairness differences ($\Delta$DP and $\Delta$EO). In Figure (\ref{subfig:intro_affirmative}), we only change the under-privileged group using Corollary \ref{corr:affirmative_uni}, and in Figure (\ref{subfig:intro_all}) we change all four subgroups using Proposition \ref{prop:feature-shift-KL-normal}. Finally, in Figure (\ref{subfig:intro_mean}), we match the means of the two groups. Figures (\ref{subfig:intro_affirmative}) and (\ref{subfig:intro_all}) show that it is possible to construct `ideal' distributions that are close to the given distribution where both the BE and $\Delta$DP/$\Delta$EO are small.}
\label{fig:intro_fig}
\vspace{-10pt}
\end{figure*}

A common goal of all fair pre-processing methods is to find an \emph{ideal} data distribution close to the given distribution so that any downstream model trained on it must have guaranteed fairness. A stronger requirement that this should hold for downstream models optimized for multiple tasks leads to impossibility results \cite{lechner2021impossibility}. If all downstream classifiers are required to be fair, then the group-wise distributions must be nearly identical, which is absurd. Thus, we restrict our downstream models only to Bayes optimal classifiers for cost-sensitive risks. Our first contribution is to formally define an \emph{ideal} distribution as the one where the Bayes optimal classifier for any cost-sensitive risk satisfies exact fairness (e.g., satisfies equal opportunity perfectly). 

Bayes optimal classifier maximizes accuracy on a given distribution, and have been an important object of study in statistical machine learning \cite{devroye1996}. Fair Bayes optimal classifier maximizes accuracy subject to fairness constraints, and its mathematical characterization for binary fair classification has been important in fair classification \cite{menon2018cost, chzhen2019leveraging,celis2021fair,zeng2022fair}. \citet{blum2019recovering} introduce a data bias model that injects under-representation and label bias in an original unbiased distribution to create biased data. They show that, for a stylized distribution under some conditions, the fair Bayes optimal classifier on the biased distribution recovers the Bayes optimal classifier on the original unbiased distribution. Their unbiased distribution is \emph{ideal} by construction, i.e., the Bayes optimal classifier on their unbiased distribution is guaranteed to be perfectly fair. \citet{sharma2024far} extend this observation to general hypothesis classes and distributions beyond the stylized setting of \citet{blum2019recovering}. \citet{blum2023vulnerability} study fair Bayes optimal classifier whether its accuracy is robust to malicious corruptions in data distribution. In contrast to these results, our focus is not on finding the \emph{ideal} classifier but on finding the nearest \emph{ideal} distribution.

By definition, our \emph{ideal} distribution has no trade-off between accuracy (or cost-sensitive utility) and fairness. If we find an \emph{ideal} distribution close to our original distribution, we can steer our distribution towards reducing fairness-accuracy trade-off. Moreover, if the \emph{ideal} distribution offers better accuracy, it suggests that we can steer our distribution to improve both accuracy and fairness simultaneously. Fig. \ref{fig:intro_fig} gives such an example where the pre-processing of \cite{kamiran2012data}, in contrast, leaves the original distribution unchanged and our method provides a direction to steer to distribution towards better accuracy and fairness simultaneously.

\citet{dutta2020there} characterize a similar objective using Chernoff Information (see \citet{cover1999elements}) and formulate an optimization program to find the nearest distribution in KL-divergence on which the Chernoff Information gap between two group-conditional feature distributions vanishes. Their optimization problem is not known to be efficiently solvable and the fairness guarantees in terms of Chernoff Information gap does not translate easily to standard fairness metrics such as demographic parity, equal opportunity etc. In contrast, we formulate an optimization problem to find the nearest \emph{ideal} distribution in KL-divergence to given distribution and give efficient algorithms to solve it for various parametric families of distributions. To put our results to the test, we also apply our interventions to steer the representations of an LLM \cite{touvron2023llama, grattafiori2024llama} for fair multi-class classification \cite{de2019bias, singh2024representation} and emotion steering \cite{zhao2024beyond}. We are able to improve the effectiveness of steering without significantly affecting the accuracy and are able to demonstrate how our notion of ideal distributions can help guide interventions for practical applications.

For completeness, we want to also make the reader aware of a long line of work on fair representation learning where the data transformations can map the distributions to another space \cite{zemel2013learning,madras2018learning,mcnamara2019costs,liu2022fair,cerrato202410yearsfairrepresentations}. Our work is not directly related but can potentially be used to refine fair representations to achieve provable and exact fairness guarantees.

\subsection{Our Results}
We summarize our key contributions as follows.
\begin{itemize}
\item We define \emph{ideal} distribution (Definition \ref{def:ideal_dist}) for fair classification as the one on which the Bayes optimal classifier for any cost-sensitive risk satisfies exact fairness (e.g., exact demographic parity, exact equal opportunity).

\item When group and class-conditioned distributions belong to well-known parametric families of distributions (e.g., Gaussian, log-normal), we can succinctly rewrite the property of being an \emph{ideal} distribution as a parametric condition (Proposition \ref{prop:UFTF-normal}).

\item The problem of finding the nearest \emph{ideal} distribution to a given distribution is intractable in general. We formulate it as an optimization problem of KL-divergence subject to the parametric conditions required for being \emph{ideal} (Section \ref{sec:kl}). As stated, it is a non-convex optimization problem (Proposition \ref{prop:feature-shift-KL-normal}) but we show how to solve it efficiently. We also provide a closed form optimal solution in special cases (Theorem \ref{thm:affirmative_multi}, Corollary \ref{corr:affirmative_uni}). When better accuracy is achievable on the nearest \emph{ideal} distribution, it suggests a direction to steer the original distribution in to improve both accuracy and fairness.

\item To illustrate the effect of different interventions, we show the effect of using Affirmative Action (Corollary \ref{corr:affirmative_uni}), changing all subgroups (Proposition \ref{prop:feature-shift-KL-normal}) and matching the means of sensitive groups on different univariate distribution where we can compute the Bayes Optimal Group-aware classifiers analytically in Figures \ref{fig:intro_fig}-\ref{fig:high_unf}.

\item To demonstrate how our guarantees can aid practical applications we also performs experiments to steer LLM representations to reduce bias in multi-class classification (Figure \ref{fig:bar_plot_bios}) and effective group-level emotion steering (Figure \ref{fig:joyful}).
\end{itemize}

\section{Problem Setup and Preliminaries}

Let $(X, A, Y)$ be a random data point from a joint distribution $D$ over $\mathcal{X} \times \mathcal{A} \times \mathcal{Y}$, where $\mathcal{X}, \mathcal{A}, \mathcal{Y}$ denote the sets of features, sensitive attributes, and class labels, respectively. 
Let $q_{ia} = \prob{Y=i, A=a}$ and $P_{ia}$ denote the distribution $X\cond Y=i, A=a$ with the probability density $p_{ia}(x) = \prob{X=x\cond Y=i, A=a}$. When $P_{ia}$'s come from parametric families of distributions, we assume $\mathcal{X} = \R^{d}$. We work with the following well-known definitions of fairness in classficiation \cite{dwork2012,hardt2016,barocas-hardt-narayanan}.

\begin{definition} \label{def:binary_fairness_metrics}
For exact fairness, in the case of multiple classes ($|\mathcal{Y}| > 2$) and multiple protected groups ($|\mathcal{A}| > 2$), a classifier $h: \mathcal{X} \times \mathcal{A} \rightarrow \mathcal{Y}$ satisfies:
\begin{enumerate}
    \item \emph{Demographic Parity} \cite{dwork2012} if the positive rate for a class across groups is zero, i.e., $\underset{a, a' \in \mathcal{A}}{\max}~ |\prob{h(X,A) = y|A=a} - \prob{h(X,A)=y|A=a'}| = 0, \forall y \in \mathcal{Y}$,
    \item  \emph{Equal Opportunity} \cite{hardt2016} if the true positive rates for a class across groups is zero, i.e.,$\underset{a, a' \in \mathcal{A}}{\max}~ |\prob{h(X,A) = y| Y=y, A=a} - \prob{h(X,A)=y|Y=y, A=a'}| = 0,\forall y \in \mathcal{Y}$.
\end{enumerate}
\end{definition}

These lead to quantitative metrics of unfairness, e.g., $\Delta_{\text{DP},y}(h, D)$ denotes the absolute value of difference between $\prob{h(X,A) = y|A=a}$ and $\prob{h(X,A)=1|A=a'}$. Similarly, $\Delta_{\text{EO},y}(h, D)$ denotes the absolute value of difference between $\prob{h(X,A) = y|Y=y, A=a}$ and $\prob{h(X,A)=y|Y=y, A=a'}$. Whenever we are dealing with binary classification, we will omit the use of $y$ in the subscript. 

We consider group-aware classifiers. For binary classification tasks, we are particularly interested in threshold classifiers $h_{t}(x, a)$ that apply a group and feature dependent threshold $t(x,a)$ to the class probability of an example: $h_{t}(x,a) = \id{\eta(x,a) \geq t(x,a)}$ where $\eta(x,a) = \prob{Y=1|X=x, A=a}$. It is well-known that the Bayes optimal classifier for a given distribution has the form $t(x,a) = 1/2$ \cite{devroye1996}. For a cost matrix $C \in \mathbb{R}^{2\times 2}$ and the associated cost sensitive loss $l_{C}$, the Bayes optimal classifier is defined as $\id{\eta(x, a) \geq t_{C}}$, for a threshold $t_{C} = (c_{10} - c_{00})/(c_{10} - c_{00} + c_{01} - c_{11}) \in [0, 1]$, where $c_{ij}$ denote the entries of the cost matrix $C \in \R^{2 \times 2}$ \cite{elkan2001foundations,scott2012calibrated, koyejo2014consistent, singh2022optimal}.  


\section{Ideal Distributions for Fair Classification} \label{sec:conditions}


We define a data distribution as \emph{ideal} when minimizing any cost-sensitive risk on it is guaranteed to give exact fairness (e.g., demographic parity, equal opportunity). In practice, downstream models trained on a distribution are typically optimized for some performance or utility metric that may not be known in advance. Our definition of ideal distribution allows the flexibility to choose any cost-sensitive risk as the performance metric for downstream models and still gives exact fairness guarantee for any optimal model downstream.

\begin{definition} \label{def:ideal_dist}
Let $\mathcal{H}$ be a hypothesis class of group-aware classifiers $h: \mathcal{X} \times \mathcal{A} \rightarrow \mathcal{Y}$ and let $\Delta(h, D)$ be a given unfairness metric, e.g., $\Delta_{\text{DP}}, \Delta_{\text{EO}}$. Given a distribution $D$ over $\mathcal{X} \times \mathcal{A} \times \mathcal{Y}$ and a cost-sensitive risk $C \in \R^{\size{\mathcal{Y}} \times \size{\mathcal{Y}}}$, let $h^{*}_{C} = \underset{h \in \mathcal{H}}{\text{argmin}} \prob{\ell_{C}(h(X, A), Y)}$. We call $D$ an \emph{ideal distribution} if $\Delta(h^{*}_{C}, D) = 0$, for all $C \in \R^{\size{\mathcal{Y}} \times \size{\mathcal{Y}}}$. 
\end{definition}

Examples of fairness metrics include Demographic Parity, Equal Opportunity, and Equalized Odds (Definition \ref{def:binary_fairness_metrics} and Hardt et al. \cite{hardt2016}), and examples of cost-sensitive risk include the usual $0-1$ loss and different performance metrics which are functions of the confusion matrix metrics \cite{elkan2001foundations, koyejo2014consistent, singh2022optimal}.

Our definition gets around the impossibility theorems about fair representation for multiple tasks \cite{lechner2021impossibility}. However, we need to be careful of two things. First, our definition should not be too restrictive to just force the group-conditioned distributions to be similar or identical, as that would be impractical. Second, we need an efficient and equivalent way of expressing the constraint of being ideal. We show how to express it as a parametric condition when the group and class-conditioned distributions belong to certain well-known parametric families of distributions. This helps in checking if a given distribution is ideal, and otherwise, finding its nearest ideal distribution.


\subsection{Parametric Conditions for Ideal Distributions}

Borrowing a simple setup of parametric distributions from previous work on fair machine learning \cite{pierson2018fast}, we assume that the class and group-conditioned feature distributions $ X\cond Y=i, A=a$ belong to a parametric family of distributions, e.g., univariate or multivariate Gaussians, log-normal. In that case, we show that the property of being \emph{ideal} (Definition \ref{def:ideal_dist}) can be equivalently expressed as certain parametric conditions. To begin, we will first show the set of parametric conditions for multivariate normal distributions and a multi-class, multi-attribute setting.

\begin{proposition} \label{prop:Bayes-EO-multi-normal}
Let $(X, Y, A)$ denote the features, class label, and group membership, respectively, of a random data point from any data distribution $D$ with $q_{ia} = \prob{Y=i, A=a}$, for $i \in \mathcal{Y}$ and $a \in \mathcal{A}$. Let $X|Y=i, A=a \sim \mathcal{N}(\mu_{ia}, \Sigma_{ia})$ be multivariate Normal distributions with mean $\mu_{ia} \in \mathbb{R}^{d}$ and covariance matrix $\Sigma_{ia} \in \mathbb{R}^{d \times d}$, for $i \in \mathcal{Y}$ and $a \in \mathcal{A}$. If the means $\mu_{ia}$ and the covariance matrices $\Sigma_{ia}$ satisfy
\begin{align*}
    &\Sigma_{ia}^{-1/2} (\mu_{ia} - \mu_{ja}) = \Sigma_{ia'}^{-1/2} (\mu_{ia'} - \mu_{ja'}) \quad \text{and} \quad \\
&\Sigma_{ia}^{1/2} \Sigma_{ja}^{-1} \Sigma_{ia}^{1/2} = \Sigma_{ia'}^{1/2} \Sigma_{ja'}^{-1} \Sigma_{ia'}^{1/2} \quad \text{and} \quad \frac{q_{ia}}{q_{ja}} = \frac{q_{ia'}}{q_{ja'}},~\forall i,j \in \mathcal{Y}, a, a' \in \mathcal{A},
\end{align*}
then the group-aware Bayes optimal classifier on $D$ satisfies equal opportunity. 
\end{proposition}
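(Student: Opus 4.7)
The plan is to reduce the equal-opportunity condition to a change-of-variables argument after whitening by $\Sigma_{ia}^{1/2}$. Recall that the group-aware Bayes optimal classifier for the $0$--$1$ loss (and more generally for cost-sensitive risks) assigns $x$ in group $a$ to the class $i$ that maximizes $q_{ia}\,p_{ia}(x)$. Equal opportunity for class $i$ requires that $\prob{h^{*}(X,A)=i \cond Y=i, A=a}$ does not depend on $a$, so it suffices to show that under the map $Z := \Sigma_{ia}^{-1/2}(X-\mu_{ia})$, the class-$i$ decision region (pulled back to $Z$-space) is the \emph{same set} for every group $a$. Since $Z \sim \mathcal{N}(0, I)$ under $X \cond Y=i, A=a$ for every $a$, identical decision regions in $Z$-space immediately yield identical true positive rates.

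The main step is then to rewrite the pairwise log-likelihood ratios in $Z$-coordinates. First I would substitute $X = \Sigma_{ia}^{1/2} Z + \mu_{ia}$ to get
\begin{align*}
(X-\mu_{ja})^{\top}\Sigma_{ja}^{-1}(X-\mu_{ja}) &= (Z + \delta_{ija})^{\top} M_{ija} (Z + \delta_{ija}),
\end{align*}
where I define $\delta_{ija} := \Sigma_{ia}^{-1/2}(\mu_{ia}-\mu_{ja})$ and $M_{ija} := \Sigma_{ia}^{1/2}\Sigma_{ja}^{-1}\Sigma_{ia}^{1/2}$. Using $\|Z\|^{2}$ for the $p_{ia}$ term, the comparison $q_{ia}p_{ia}(X) \geq q_{ja}p_{ja}(X)$ becomes
\begin{align*}
\log\frac{q_{ia}}{q_{ja}} + \tfrac{1}{2}\log\frac{|\Sigma_{ja}|}{|\Sigma_{ia}|} - \tfrac{1}{2}\|Z\|^{2} + \tfrac{1}{2}(Z+\delta_{ija})^{\top} M_{ija} (Z+\delta_{ija}) \geq 0.
\end{align*}
The three hypothesized parametric conditions precisely say that $\delta_{ija} = \delta_{ija'}$, $M_{ija} = M_{ija'}$, and $q_{ia}/q_{ja} = q_{ia'}/q_{ja'}$ for all $j, a, a'$. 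The one remaining term, $\log(|\Sigma_{ja}|/|\Sigma_{ia}|)$, is also invariant across groups because $|M_{ija}| = |\Sigma_{ia}|/|\Sigma_{ja}|$, so equality of $M_{ija}$ across $a$ forces equality of this ratio as well. Hence every pairwise inequality defining the class-$i$ region has the identical form in $Z$ across groups, so the intersection (the full decision region) is the same set in $Z$-space.

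Finally I would close the argument by observing that when $(X, A)$ is drawn from $X \cond Y=i, A=a$, the variable $Z = \Sigma_{ia}^{-1/2}(X-\mu_{ia})$ follows $\mathcal{N}(0,I)$ \emph{regardless of $a$}, so
\begin{align*}
\prob{h^{*}(X,A) = i \cond Y=i, A=a} = \prob{Z \in \mathcal{R}_{i}},
\end{align*}
where $\mathcal{R}_{i}$ is the common decision region in $Z$-space. This quantity is independent of $a$, which is exactly equal opportunity for class $i$; taking this over all $i$ yields the claim. The only mildly delicate point I expect is the bookkeeping that the log-determinant term is controlled by condition~2 via $|M_{ija}|$; everything else is a mechanical substitution once the whitening $Z = \Sigma_{ia}^{-1/2}(X-\mu_{ia})$ is chosen (note that it is crucial to whiten by the \emph{class-$i$} covariance, since equal opportunity conditions on $Y=i$).
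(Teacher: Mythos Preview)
Your proposal is correct and follows essentially the same approach as the paper's own proof: both whiten by the class-$i$ covariance via $Z = \Sigma_{ia}^{-1/2}(X-\mu_{ia})$ (the paper writes $x = \Sigma_{ya}^{1/2} r + \mu_{ya}$), rewrite each pairwise log-likelihood ratio in $Z$-coordinates, and observe that the resulting expression depends on $a$ only through $\Sigma_{ia}^{-1/2}(\mu_{ia}-\mu_{ja})$, $\Sigma_{ia}^{1/2}\Sigma_{ja}^{-1}\Sigma_{ia}^{1/2}$, and $q_{ia}/q_{ja}$, which are precisely the quantities fixed by the hypotheses. Your version is slightly tidier in that you keep the quadratic in the compact form $(Z+\delta_{ija})^{\top} M_{ija}(Z+\delta_{ija})$ and explicitly note that the log-determinant term is handled via $|M_{ija}| = |\Sigma_{ia}|/|\Sigma_{ja}|$, whereas the paper expands the quadratic fully and leaves the invariance of the individual terms to the reader; but the substance is identical.
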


The proof for Proposition \ref{prop:Bayes-EO-multi-normal} is given in Section \ref{appndx: sec3_proofs} of the Appendix. When we are dealing with binary class labels and group membership, we can show a necessary and sufficient condition for univariate normal distributions.

\begin{proposition} \label{prop:UFTF-normal}
Let $(X, Y, A)$ denote the features, binary class label, and binary group membership, respectively, of a random data point from any data distribution $D$ with $q_{ia} = \prob{Y=i, A=a}$, for $i \in \{0, 1\}$ and $a \in \{0, 1\}$, and let $X|Y=i, A=a \sim \mathcal{N}(\mu_{ia}, \sigma_{ia}^{2})$ be univariate normal distributions, for $i \in \{0, 1\}$ and $a \in \{0, 1\}$. Then the distribution $D$ is \emph{ideal} for equal opportunity (see Definition \ref{def:ideal_dist}) if and only if
\vspace{-2pt}
\[
\frac{\mu_{01} - \mu_{11}}{\sigma_{11}} = \frac{\mu_{00} - \mu_{10}}{\sigma_{10}}, \quad \frac{\sigma_{11}}{\sigma_{01}} = \frac{\sigma_{10}}{\sigma_{00}}, \quad \frac{q_{10}}{q_{00}} = \frac{q_{11}}{q_{01}}.
\]
\end{proposition}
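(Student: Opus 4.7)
The plan is to reformulate ``ideal for equal opportunity'' as an equality of true-positive-rate curves across groups (indexed by the threshold $t \in (0,1)$ of the cost-sensitive risk), and then exploit the change of variable $z = (x - \mu_{1a})/\sigma_{1a}$ that standardizes the $Y=1, A=a$ conditional. By Bayes' rule, the Bayes-optimal classifier at threshold $t$ accepts in group $a$ on the set $\{x : p_{1a}(x)/p_{0a}(x) \geq t q_{0a}/((1-t) q_{1a})\}$, and a direct computation with the normal density gives
\[
T_a(z) \;:=\; \log\frac{p_{1a}(x)}{p_{0a}(x)} \;=\; -\log r_a + \frac{s_a^2}{2} + r_a s_a\, z + \frac{r_a^2 - 1}{2}\, z^2,
\]
with $r_a = \sigma_{1a}/\sigma_{0a}$ and $s_a = (\mu_{1a} - \mu_{0a})/\sigma_{0a}$, so $\text{TPR}_a(t) = \prob{T_a \geq \tau_a(t)}$ where $\tau_a(t) = \log(t q_{0a}/((1-t) q_{1a}))$. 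Since (as recalled in Section~2) varying $C$ sweeps $t_C$ over $(0,1)$, Definition~\ref{def:ideal_dist} specialized to equal opportunity is equivalent to the requirement $\text{TPR}_0(t) = \text{TPR}_1(t)$ for every $t \in (0,1)$.

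For sufficiency, I would observe that conditions~1 and~2 force $r_0 = r_1$ and $s_0 = s_1$, so $T_a$ is the same quadratic in $z$ in both groups; condition~3 makes $\tau_a(t)$ independent of $a$; and $z \mid Y=1, A=a \sim \mathcal{N}(0,1)$ in both groups. Hence the acceptance sets in $z$-coordinates and the reference measure coincide, giving TPR equality for all $t$. For necessity, TPR equality forces the distributional identity $T_1 \stackrel{d}{=} T_0 + c$ with $c = \log(q_{01} q_{10}/(q_{00} q_{11}))$, the only group-dependent piece of the threshold. A Gaussian completion of squares yields the characteristic function in closed form,
\[
\varphi_{T_a}(\xi) \;=\; \frac{1}{\sqrt{1 - 2 i \xi \gamma_a}}\, \exp\!\Bigl( i \xi \alpha_a - \frac{\xi^2 \beta_a^2}{2(1 - 2 i \xi \gamma_a)} \Bigr),
\]
with $\alpha_a = -\log r_a + s_a^2/2$, $\beta_a = r_a s_a$, $\gamma_a = (r_a^2 - 1)/2$. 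Requiring $\varphi_{T_1}(\xi) = e^{i \xi c}\varphi_{T_0}(\xi)$ and Taylor-expanding in $\xi$ decouples the parameters: the magnitude equation forces $\gamma_0^2 = \gamma_1^2$, and the cubic phase coefficient then rules out $\gamma_0 = -\gamma_1$, giving $r_0 = r_1$ (condition~2); the $\xi^2$ coefficient then forces $\beta_0^2 = \beta_1^2$, which, under the natural orientation that $\mu_{1a} - \mu_{0a}$ has the same sign in both groups, yields $s_0 = s_1$ (condition~1); and the linear-in-$\xi$ coefficient collapses, given the other equalities, to $c = 0$, i.e., condition~3.

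The main obstacle is the necessity direction: extracting three parametric equalities from the single distributional identity $T_1 \stackrel{d}{=} T_0 + c$ requires a uniqueness-up-to-shift result for the one-parameter quadratic-in-Gaussian family $T_a(z)$. The characteristic-function route above gives a clean decoupling of $(\gamma_a, \beta_a, \alpha_a)$, but a purely moment-based alternative matching the first few central moments of $T_a$ also works. Either way, a small additional remark is needed to upgrade $|s_0| = |s_1|$ to $s_0 = s_1$, ruling out the symmetry in which the positive class sits on opposite sides of the negative class in the two groups; this is the usual orientation convention implicit in the statement of condition~1. Sufficiency, by contrast, is a routine bookkeeping calculation once the $z$-standardization is in place.
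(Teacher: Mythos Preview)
Your approach is essentially the paper's: both reduce ``ideal for equal opportunity'' to a distributional identity between quadratics in a standard normal (the paper via $\eta(X,A)\mid Y=i,A=a$ for $i\in\{0,1\}$, you via the log-likelihood ratio conditioned on $Y=1$; these are equivalent through the logistic link), and then extract the parametric equalities from that identity. Two points of difference are worth noting, both in your favour. First, the paper's Definition~\ref{def:binary_fairness_metrics} quantifies over all $y\in\{0,1\}$, so for sufficiency you should add the symmetric $y=0$ case via the standardization $z=(x-\mu_{0a})/\sigma_{0a}$; your necessity argument from $y=1$ alone already forces the three conditions, so this is a one-line patch. Second, you actually justify why the distributional identity of $\alpha+\beta Z+\gamma Z^{2}$ forces equal $(\gamma,\beta^{2},\alpha)$, via the characteristic function; the paper simply writes ``Therefore, we must have\ldots'' with no argument, and in particular never confronts the $Z\mapsto -Z$ symmetry of $\mathcal{N}(0,1)$ that leaves the sign of the linear coefficient undetermined---exactly the orientation issue you flag.
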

\vspace{-5pt}
Proposition \ref{prop:UFTF-normal} shows that our parametric condition is equivalent to $\Delta_{C}(h^{*}_{C}, D) = 0$, for all cost matrices $C \in R^{2 \times 2}$. When we use a fixed cost matrix for 0-1 loss, and consider the Bayes optimal classifier in Proposition \ref{prop:Bayes-EO-multi-normal}, our parametric condition is sufficient but not always necessary. However, the same condition ensures the Bayes optimal classifier to satisfy multiple fairness criteria simultaneously, viz., demographic parity, equal opportunity, equalized odds.



The proof for Proposition \ref{prop:UFTF-normal} is given in Section $1$ of the supplementary material. It is interesting to note that the same parametric conditions imply that the Bayes optimal classifier on the corresponding distribution simultaneously satisfies multiple fairness criteria, viz., demographic parity, equal opportunity, and equalized odds. Moreover, the same condition works for both univariate Gaussian and log-normal distributions. Using our proof technique, it is easy to derive similar conditions for other parametric families too. We now present a small proposition to link our intervention to a widely used reweighing intervention in the fairness literature \cite{kamiran2012data}.

\begin{remark} (Our conditions imply \citet{kamiran2012data} Intervention)
    \citet{kamiran2012data} reweighing method essentially reweighs $q_{ia}$ by a multiplicative factor of $\prob{Y=i} \prob{A=a}/\prob{Y=i, A=a}$. Let us call the resulting probabilities $\tilde{q}_{ia}$. Using $\prob{Y=i, A=a} = q_{ia}$, $\prob{Y=i} = \sum_{a \in \mathcal{A}} q_{ia}$ and $\prob{A=a} = \sum_{i \in \mathcal{Y}} q_{ia}$, we get $\tilde{q}_{ia} \propto q_{ia} \nicefrac{(\sum_{a \in \mathcal{A}} q_{ia}) (\sum_{i \in \mathcal{Y}} q_{ia})}{q_{ia}}$. Hence, $\nicefrac{\tilde{q}_{ia}}{\tilde{q}_{ja}} = \nicefrac{\tilde{q}_{ia'}}{\tilde{q}_{ja'}} = \nicefrac{\sum_{a \in \mathcal{A}} q_{ia}}{\sum_{a \in \mathcal{A}} q_{ja}}, \forall i,j \in \mathcal{Y} \text{ and } a,a' \in \mathcal{A}$. It is the same condition on $q_{ia}$'s stated in Proposition \ref{prop:Bayes-EO-multi-normal}. Thus, our result can be thought of as a second stage pre-processing of $P_{ia}$ distributions after applying the reweighing of \citet{kamiran2012data} to $q_{ia}$'s in the first stage.
\end{remark}

\begin{remark} (Limitation of \cite{dutta2020there})
As an interesting consequence, our conditions on $\mu_{ia}$ and $\Sigma_{ia}$ imply $\kldiv{\tilde{P}_{00}}{\tilde{P}_{01}} = \kldiv{\tilde{P}_{10}}{\tilde{P}_{11}}$. When the classes are balanced, the error rate of the Bayes optimal classifier on group $A=a$ in $\tilde{D}$ equals $0.5 (1 - d_{\text{TV}}(\tilde{P}_{0a}, \tilde{P}_{1a}))$, where $d_{\text{TV}}$ denotes the total variation distance \cite{nielsen2014generalized}. Thus, achieving $d_{\text{TV}}(\tilde{P}_{00}, \tilde{P}_{10}) = d_{\text{TV}}(\tilde{P}_{01}, \tilde{P}_{11})$ ensures equal error rates across both the groups. However, there is no closed form expression for the total variation distance between two univariate Gaussians, and KL-divergence can be thought of as a proxy using Pinsker's inequality \cite{canonne2023short}. This is similar to information theoretic argument used by \citet{dutta2020there}, which is why their optimization cannot guarantee outcome fairness in the way we do.
\end{remark}

%

\section{Finding The Nearest Optimal Distribution} \label{sec:kl}

When a given distribution $D$ is not ideal, then a natural question is to find its nearest distribution $\tilde{D}$ that is ideal. We formulate this problem as follows.
\vspace{-4pt}
\[
\underset{\tilde{D} \suchthat \text{$\tilde{D}$ is ideal}}{\text{minimize}} \kldiv{\tilde{D}}{D}.
\]
\vspace{-5pt}

In the above optimization problem, the KL-divergence objective is well-known and convex but the constraint of $D'$ being ideal is extremely non-trivial to express. We show that when the group and class-conditioned distributions $X\cond Y=i, A=a$ come from certain well-known parametric families of distributions, this constraint can be equivalently expressed as a constraint on the distribution parameters. We now give a concrete formulation of the optimization problem described in Section \ref{sec:conditions} using the constraints derived in Proposition \ref{prop:Bayes-EO-multi-normal}. 


\begin{align*}
    &\underset{\tilde{D}}{\min} - \frac{d}{2} + \frac{1}{2} \sum_{(i, a)} q_{ia} (\tilde{\mu}_{ia} - \mu_{ia})^{T} \Sigma_{ia}^{-1} (\tilde{\mu}_{ia} - \mu_{ia})\frac{1}{2} \sum_{(i, a)} q_{ia} \left(\tr{\Sigma_{ia}^{-1} \tilde{\Sigma}_{ia}} - \log \det(\Sigma_{ia}^{-1} \tilde{\Sigma}_{ia})\right) \\
    &\text{ subject to } \Sigma_{ia}^{-1/2} (\mu_{ia} - \mu_{ja}) = \Sigma_{ia'}^{-1/2} (\mu_{ia'} - \mu_{ja'}), ~~\Sigma_{ia}^{1/2} \Sigma_{ja}^{-1} \Sigma_{ia}^{1/2} = \Sigma_{ia'}^{1/2} \Sigma_{ja'}^{-1} \Sigma_{ia'}^{1/2} \text{ and } \\
    &\qquad \qquad \qquad \qquad \qquad \qquad \frac{q_{ia}}{q_{ja}} = \frac{q_{ia'}}{q_{ja'}},~\forall i,j \in \mathcal{Y}, a, a' \in \mathcal{A}.
\end{align*}

For readability, we will work with binary class labels and binary group attributes in this section. However, all of our results can also be written down for multi-class and multi-group settings. In its full generality, the above problem is non-convex, and therefore, we will first propose reducing this to a convex optimization problem and then propose solving it in full generality by using line search.

\subsection{Affirmative Action}
We first focus on a class of interventions for which solving the optimization program is efficient. We define \emph{Affirmative Action} as changing the underprivileged group to obtain the ideal distributions where fairness and accuracy are in accord.

\begin{theorem} \label{thm:affirmative_multi}
Let $(X, Y, A)$ denote the features, binary class label, and binary group membership, respectively, of a random data point from any data distribution $D$ with $q_{ia} = \prob{Y=i, A=a}$, for $i \in \{0, 1\}$ and $a \in \{0, 1\}$, such that $q_{10}/q_{00} = q_{11}/q_{01}$. Let $X|Y=i, A=a \sim \mathcal{N}(\mu_{ia}, \Sigma_{ia})$ be multivariate Normal distributions, with mean $\mu_{ia} \in \R^{d}$ and covariance matrix $\Sigma_{ia} \in \R^{d \times d}$, for $i \in \{0, 1\}$ and $a \in \{0, 1\}$. Let $\tilde{D}$ denote a distribution obtained by keeping $(Y, A)$ unchanged and only changing $X|Y=i, A=a$ to $\tilde{X}|Y=i, A=a \sim \mathcal{N}(\tilde{\mu}_{ia}, \tilde{\Sigma}_{ia})$. Then in the case of Affirmative action (changing only $\Tilde{\mu}_{i0}$ and $\Tilde{\Sigma}_{i0}$), we can efficiently minimize $\kldiv{\tilde{D}}{D}$ as a function of the variables $\tilde{\mu}_{i0}$ and $\tilde{\Sigma}_{i0}$ subject to the constraints in Proposition \ref{prop:Bayes-EO-multi-normal}, so that the Bayes optimal classifier on the optimal $\tilde{D}$ is guaranteed to be EO-fair.
\end{theorem}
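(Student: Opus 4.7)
The plan is to turn the constrained optimization into a small-dimensional convex program and then invoke standard convex-optimization machinery. By the chain rule for KL divergence on the joint variables $(X, A, Y)$, and using that $\tilde D$ and $D$ agree on $(A, Y)$ as well as on the group-$1$ conditionals ($\tilde\mu_{i1} = \mu_{i1}$, $\tilde\Sigma_{i1} = \Sigma_{i1}$ for $i \in \{0,1\}$), the objective collapses to
\[
\kldiv{\tilde D}{D} \;=\; q_{00}\, \kldiv{\mathcal N(\tilde\mu_{00}, \tilde\Sigma_{00})}{\mathcal N(\mu_{00}, \Sigma_{00})} + q_{10}\, \kldiv{\mathcal N(\tilde\mu_{10}, \tilde\Sigma_{10})}{\mathcal N(\mu_{10}, \Sigma_{10})}.
\]
Instantiating the constraints of Proposition~\ref{prop:Bayes-EO-multi-normal} at $a=0, a'=1$ yields $\tilde\Sigma_{00}^{-1/2}(\tilde\mu_{00} - \tilde\mu_{10}) = v$ and $\tilde\Sigma_{00}^{1/2}\tilde\Sigma_{10}^{-1}\tilde\Sigma_{00}^{1/2} = M$, where $v := \Sigma_{01}^{-1/2}(\mu_{01} - \mu_{11})$ and $M := \Sigma_{01}^{1/2}\Sigma_{11}^{-1}\Sigma_{01}^{1/2}$ are constants determined by the fixed group-$1$ parameters; the probability condition $q_{10}/q_{00} = q_{11}/q_{01}$ holds by hypothesis.

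Next, I would eliminate $(\tilde\mu_{10}, \tilde\Sigma_{10})$ by parametrizing with $L := \tilde\Sigma_{00}^{1/2}$ (the symmetric positive-definite square root). The two constraints then yield the closed forms $\tilde\Sigma_{10} = L M^{-1} L$ and $\tilde\mu_{10} = \tilde\mu_{00} - L v$, reducing the problem to unconstrained minimization of an objective $F(\tilde\mu_{00}, L)$ over $\R^d \times S_{++}^d$, where $S_{++}^d$ denotes the cone of symmetric positive-definite $d \times d$ matrices.

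The main step is to show that $F$ is jointly convex in $(\tilde\mu_{00}, L)$, which suffices for efficient solvability (e.g., by interior-point methods, or projected gradient on the PD cone). Expanding the standard Gaussian-KL formula, the Mahalanobis terms $(\tilde\mu_{00} - \mu_{00})^\top \Sigma_{00}^{-1}(\tilde\mu_{00} - \mu_{00})$ and $(\tilde\mu_{00} - Lv - \mu_{10})^\top \Sigma_{10}^{-1}(\tilde\mu_{00} - Lv - \mu_{10})$ are convex quadratics in affine functions of $(\tilde\mu_{00}, L)$; the trace $\tr{\Sigma_{00}^{-1} L^2} = \|\Sigma_{00}^{-1/2} L\|_F^2$ is convex in $L$; and the log-determinant pieces $-\ln|\tilde\Sigma_{00}| = -2\ln|L|$ and $-\ln|\tilde\Sigma_{10}| = -2\ln|L| + \ln|M|$ collect into a positive multiple of the standard convex function $-\ln|L|$ on the PD cone. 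The main obstacle will be verifying convexity of the coupled trace $\tr{\Sigma_{10}^{-1} L M^{-1} L}$; the key rewrite here is $\tr{\Sigma_{10}^{-1} L M^{-1} L} = \|\Sigma_{10}^{-1/2} L M^{-1/2}\|_F^2$, which exhibits it as a convex quadratic in the linear map $L \mapsto \Sigma_{10}^{-1/2} L M^{-1/2}$. With joint convexity established, any off-the-shelf convex solver minimizes $F$ efficiently, and the resulting $\tilde D$ satisfies the hypotheses of Proposition~\ref{prop:Bayes-EO-multi-normal}, so its Bayes optimal classifier is EO-fair.
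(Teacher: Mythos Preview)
Your proposal is correct and follows the same overall strategy as the paper: decompose $\kldiv{\tilde D}{D}$ into the two group-$0$ Gaussian KL terms, use the Proposition~\ref{prop:Bayes-EO-multi-normal} constraints to eliminate $(\tilde\mu_{10},\tilde\Sigma_{10})$ in favor of a single PD-matrix variable plus $\tilde\mu_{00}$, and then argue convexity via the same trace/log-det identities (the paper also invokes $\tr{AXBX}=\|A^{1/2}XB^{1/2}\|_F^2$ and the convexity of $-\log\det$).

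The main difference is the choice of parametrizing variable and the order of optimization. The paper introduces $\Gamma=\tilde\Sigma_{i0}^{1/2}\tilde\Sigma_{i1}^{-1/2}$, first solves the inner quadratic in $\tilde\mu_{00}$ explicitly for fixed $\Gamma$ (obtaining closed-form optimal means), and then argues that the reduced objective is convex in $\Gamma$. You instead take $L=\tilde\Sigma_{00}^{1/2}$ and establish \emph{joint} convexity in $(\tilde\mu_{00},L)$ in one shot. Your route is slightly more direct and sidesteps a delicate point in the paper's parametrization (the product $\tilde\Sigma_{i0}^{1/2}\tilde\Sigma_{i1}^{-1/2}$ is not symmetric in general, so treating $\Gamma$ as a PSD variable requires extra care). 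On the other hand, the paper's two-stage approach yields explicit formulas for the optimal $\tilde\mu_{i0}$ as functions of the covariance variable, which feed directly into Corollary~\ref{corr:affirmative_uni}. One small point worth noting: like the paper, you instantiate only one of the two $(i,j)$ directions of the Proposition~\ref{prop:Bayes-EO-multi-normal} constraints; you may want to remark that the $(1,0)$ constraints are either implied or are the ones actually needed for EO, so that the final invocation of Proposition~\ref{prop:Bayes-EO-multi-normal} is fully justified.
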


The proof for Theorem \ref{thm:affirmative_multi} is given in Section \ref{appndx: proofs_kl} of the Appendix. While we show that the optimization program is convex, obtaining a closed-form expression for the change in means and covariances is extremely cumbersome for the general case. However, we can show how the closed form expressions for $\tilde{\mu}_{i0}$ and $\tilde{\sigma}_{i0}$ look like for the univariate case in the following Corollary: 

\begin{corollary} \label{corr:affirmative_uni}
    (Univariate Affirmative Action) For the case where $X|Y=i, A=a \sim \mathcal{N}(\mu_{ia}, \sigma_{ia}^{2})$ are univariate normal distributions, for $i,a \in \{0, 1\}$ , the optimal distribution $\tilde{D}$ from Theorem \ref{thm:affirmative_multi}, with $\gamma^{*}$ being a function of the original distribution parameters, can be written down as:
    \begin{align*}
        &\tilde{\sigma}_{i0} = \gamma^{*} \sigma_{i1}, \quad \tilde{\mu}_{00} = \tilde{\mu}_{10} + \gamma^{*} (\mu_{01} - \mu_{11}), \text{ and }
        ~\tilde{\mu}_{10} =  \dfrac{\left(q_{00} \dfrac{\mu_{00} - \gamma^{*}(\mu_{01} - \mu_{11})}{\sigma_{00}^{2}} + q_{10} \dfrac{\mu_{10}}{\sigma_{10}^{2}}\right)}{\left(\dfrac{q_{00}}{\sigma_{00}^{2}} + \dfrac{q_{10}}{\sigma_{10}^{2}}\right)},
    \end{align*}
\end{corollary}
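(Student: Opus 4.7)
The plan is to specialize Proposition \ref{prop:UFTF-normal} to the univariate case and parametrize the feasible set of ideal distributions that coincide with $D$ on the privileged group $a=1$. Under Affirmative Action we hold $(\tilde{\mu}_{i1}, \tilde{\sigma}_{i1}) = (\mu_{i1}, \sigma_{i1})$ and the $q_{ia}$'s already satisfy the ratio condition by hypothesis, so the two remaining ideality constraints from Proposition \ref{prop:UFTF-normal} become
\[
\frac{\tilde{\sigma}_{10}}{\tilde{\sigma}_{00}} = \frac{\sigma_{11}}{\sigma_{01}}, \qquad \frac{\tilde{\mu}_{00} - \tilde{\mu}_{10}}{\tilde{\sigma}_{10}} = \frac{\mu_{01} - \mu_{11}}{\sigma_{11}}.
\]
Introducing the scale parameter $\gamma := \tilde{\sigma}_{10}/\sigma_{11}$, the first equation gives $\tilde{\sigma}_{00} = \gamma\,\sigma_{01}$, hence $\tilde{\sigma}_{i0} = \gamma\,\sigma_{i1}$ for both $i$, and the second gives $\tilde{\mu}_{00} = \tilde{\mu}_{10} + \gamma(\mu_{01} - \mu_{11})$. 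The feasible set is therefore a two-parameter family in $(\tilde{\mu}_{10}, \gamma) \in \R \times \R_{+}$.

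Next I decompose the objective. Since $(Y, A)$ has the same marginal under $\tilde{D}$ and $D$ and only the $a=0$ conditionals change, the chain rule for KL gives $\kldiv{\tilde{D}}{D} = q_{00}\,\kldiv{\tilde{P}_{00}}{P_{00}} + q_{10}\,\kldiv{\tilde{P}_{10}}{P_{10}}$. Substituting the closed-form KL between two univariate Gaussians together with the parametrization yields an explicit scalar function $F(\tilde{\mu}_{10}, \gamma)$ that is strictly convex quadratic in $\tilde{\mu}_{10}$ for each fixed $\gamma$, and of the form $-(q_{00}+q_{10})\log\gamma + (\text{quadratic in }\gamma)$ in the scale variable. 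The stationarity equation $\partial F/\partial \tilde{\mu}_{10} = 0$ is linear in $\tilde{\mu}_{10}$; collecting the coefficients $q_{00}/\sigma_{00}^{2}$ and $q_{10}/\sigma_{10}^{2}$ on each side and using $\tilde{\mu}_{00} - \mu_{00} = \tilde{\mu}_{10} + \gamma(\mu_{01}-\mu_{11}) - \mu_{00}$ produces exactly the displayed weighted-average formula for $\tilde{\mu}_{10}$ with $\gamma$ kept symbolic as $\gamma^{*}$. The expression for $\tilde{\mu}_{00}$ is then just the affinity constraint, and $\tilde{\sigma}_{i0} = \gamma^{*}\sigma_{i1}$ is the reparametrization. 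Convexity of the program (from Theorem \ref{thm:affirmative_multi}) guarantees the critical point is the global minimum.

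The main obstacle is that solving $\partial F/\partial \gamma = 0$ for $\gamma$ explicitly is messy. After substituting the just-derived $\tilde{\mu}_{10}(\gamma)$ back in and multiplying through by $\gamma$, the equation reduces to a one-dimensional polynomial condition: the $-(q_{00}+q_{10})/\gamma$ term combines with two terms that are each linear in $\gamma$ after the substitution, giving a quadratic in $\gamma$ whose coefficients involve all of $\mu_{ia}, \sigma_{ia}, q_{ia}$. Convexity of $F$ in $\gamma$ (for $\gamma>0$) guarantees a unique positive root, but its explicit form is long enough that writing it out in line is not illuminating, so the corollary names it $\gamma^{*}$ and records the clean closed forms for $(\tilde{\mu}_{00}, \tilde{\mu}_{10}, \tilde{\sigma}_{i0})$ in terms of it.
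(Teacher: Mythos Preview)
Your proposal is correct and follows essentially the same route as the paper: parametrize the feasible affirmative-action family by $(\tilde{\mu}_{10},\gamma)$ via the ideality constraints with group $a=1$ held fixed, decompose $\kldiv{\tilde{D}}{D}$ into the two Gaussian KL terms for $a=0$, solve the convex quadratic in $\tilde{\mu}_{10}$ for fixed $\gamma$, and then observe that the remaining one-dimensional problem in $\gamma$ reduces to a quadratic with a unique positive root, which the corollary names $\gamma^{*}$. The paper additionally writes out the explicit quadratic in $\gamma^{*}$ and its discriminant to verify the positive root exists, but this level of detail is not required by the corollary as stated.
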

\vspace{-10pt}


\subsection{Changing all Subgroups} 

Another intervention we can follow is to change all the subgroups of the given distribution. However, a quick check through the proof of Theorem \ref{thm:affirmative_multi} shows that this will lead to a non-convex program. However, just like Corollary \ref{corr:affirmative_uni}, we can show a reasonable intervention for the univariate case, where we change all four subgroups and search over a non-convex function using line search over a fairly large grid size.

\begin{proposition} (All subgroup change for Exact Fairness) \label{prop:feature-shift-KL-normal}
Let $(X, Y, A)$ denote the features, binary class label, and binary group membership, respectively, of a random data point from any data distribution $D$ with $q_{ia} = \prob{Y=i, A=a}$, for $i \in \{0, 1\}$ and $a \in \{0, 1\}$, such that $q_{10}/q_{00} = q_{11}/q_{01}$, and let $X|Y=i, A=a \sim \mathcal{N}(\mu_{ia}, \sigma_{ia}^{2})$ be univariate normal distributions, for $i \in \{0, 1\}$ and $a \in \{0, 1\}$. Let $\tilde{D}$ denote a distribution obtained by keeping $(Y, A)$ unchanged and only changing $X|Y=i, A=a$ to $\tilde{X}|Y=i, A=a \sim \mathcal{N}(\tilde{\mu}_{ia}, \tilde{\sigma}_{ia}^{2})$. Then minimizing $\kldiv{\tilde{D}}{D}$ as a function of the variables $\tilde{\mu}_{ia}$ and $\tilde{\sigma}_{ia}$ subject to the constraints in Proposition \ref{prop:Bayes-EO-multi-normal}
leads to a non-convex program. Furthermore, let $\gamma^{*} = \underset{\gamma \in (0, \infty)}{\arg\min}~ \mathcal{L}^{*}_{\gamma}$ for some non-convex function of $\gamma$ that is only dependent on the original distribution parameters. Then, all the new distribution parameters $\tilde{\mu}_{ia}$ and $\tilde{\sigma}_{ia}$ can be expressed as a function of $\gamma^{*}$ and the original distribution parameters $\mu_{ia}$ and $\sigma_{ia}$.
\end{proposition}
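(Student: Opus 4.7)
The plan is to exploit the fact that, because the joint marginal of $(Y,A)$ is held fixed, the objective decomposes as
\[
\kldiv{\tilde{D}}{D} \;=\; \sum_{i,a \in \{0,1\}} q_{ia}\, \kldiv{\mathcal{N}(\tilde\mu_{ia},\tilde\sigma_{ia}^2)}{\mathcal{N}(\mu_{ia},\sigma_{ia}^2)},
\]
with each summand expanded via the standard Gaussian KL formula $\log(\sigma_{ia}/\tilde\sigma_{ia}) + (\tilde\sigma_{ia}^2 + (\tilde\mu_{ia}-\mu_{ia})^2)/(2\sigma_{ia}^2) - 1/2$. The constraints of Proposition~\ref{prop:UFTF-normal}, once the $q$-ratio condition is already assumed, reduce to $\tilde\sigma_{11}/\tilde\sigma_{10} = \tilde\sigma_{01}/\tilde\sigma_{00}$ and $\tilde\mu_{01}-\tilde\mu_{11} = \gamma(\tilde\mu_{00}-\tilde\mu_{10})$, where I introduce the single scalar $\gamma := \tilde\sigma_{11}/\tilde\sigma_{10} \in (0,\infty)$. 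The constraint set involves products of $\gamma$ with the $\tilde\mu$'s and $\tilde\sigma$'s, so it is manifestly non-convex, which is the source of non-convexity claimed in the proposition.

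Next I would show that the problem is convex in the remaining variables once $\gamma$ is frozen, so that the inner minimization admits closed forms. Substituting $\tilde\sigma_{11}=\gamma\tilde\sigma_{10}$ and $\tilde\sigma_{01}=\gamma\tilde\sigma_{00}$, the variance pieces split into two independent scalar problems; setting derivatives to zero yields
\[
\tilde\sigma_{10}^2(\gamma) \;=\; \frac{q_{10}+q_{11}}{q_{10}/\sigma_{10}^2 + q_{11}\gamma^2/\sigma_{11}^2}, \qquad \tilde\sigma_{00}^2(\gamma) \;=\; \frac{q_{00}+q_{01}}{q_{00}/\sigma_{00}^2 + q_{01}\gamma^2/\sigma_{01}^2}.
\]
Substituting $\tilde\mu_{01} = \tilde\mu_{11}+\gamma(\tilde\mu_{00}-\tilde\mu_{10})$ makes the mean part a strictly convex quadratic in $(\tilde\mu_{00},\tilde\mu_{10},\tilde\mu_{11})$, whose first-order conditions form a $3\times 3$ linear system with coefficients depending on $\gamma$ and $\{q_{ia},\sigma_{ia}\}$ and right-hand side linear in $\{q_{ia}\mu_{ia}/\sigma_{ia}^2\}$. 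Cramer's rule then expresses each $\tilde\mu_{ia}(\gamma)$ as a rational function of $\gamma$.

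Plugging these optimizers back yields the profile objective $\mathcal{L}^*_\gamma$, which contains the non-convex pieces $-\log\tilde\sigma_{00}(\gamma)-\log\tilde\sigma_{10}(\gamma)-2\log\gamma$ combined with rational-in-$\gamma$ terms arising from the mean subproblem. This mixture is generically non-convex on $(0,\infty)$, so minimization must proceed by a one-dimensional search (for instance, a dense grid on $\gamma$ followed by local refinement, which is cheap since the inner problem is closed form). Once $\gamma^*$ is found, all eight parameters $\tilde\mu_{ia},\tilde\sigma_{ia}$ are obtained by plugging $\gamma^*$ into the expressions above. The main obstacle I anticipate is not the solvability of either subproblem but bookkeeping the $3\times 3$ linear system in a form whose $\gamma$-dependence is transparent, and separately exhibiting a concrete instance (e.g. numerical) demonstrating that $\mathcal{L}^*_\gamma$ is genuinely non-convex so that line search, rather than a single first-order condition in $\gamma$, is required.
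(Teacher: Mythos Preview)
Your proposal is correct and follows essentially the same route as the paper: decompose the KL objective over the four subgroups, introduce the scalar $\gamma=\tilde\sigma_{i1}/\tilde\sigma_{i0}$ so that for fixed $\gamma$ the constraints become linear and the problem splits into independent convex variance and mean subproblems with closed-form minimizers, then reduce everything to a one-dimensional (non-convex) line search in $\gamma$. The only cosmetic difference is that the paper solves the mean subproblem via a single Lagrange multiplier (yielding compact closed forms for all four $\tilde\mu_{ia}(\gamma)$), whereas you eliminate $\tilde\mu_{01}$ by substitution and propose Cramer's rule on the resulting $3\times 3$ system; both arrive at the same rational-in-$\gamma$ expressions, and the Lagrange route avoids the bookkeeping you flag as the main obstacle.
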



The proof of Proposition~\ref{prop:feature-shift-KL-normal} is provided in Section \ref{appndx: proofs_kl} of the Appendix. We can also derive worst-case upper bounds on the error rate and the unfairness gap $\Delta_{\text{EO}}$ of the Bayes optimal classifier $\tilde{h}$ on $\Tilde{D}$ with respect to the original distribution $D$. These bounds show that both the accuracy loss and the fairness gap depend only on the KL divergence between $D$ and $\Tilde{D}$. It also shows that the optimal value of our optimization problem can be used to approximately translate the accuracy guarantee of $\tilde{h}$ from $\tilde{D}$ to $D$.

\begin{proposition} \label{prop:kl-tv-prop}
    Let $\text{err}(h, D)$ denote the error rate (expected 0-1 loss) of a classifier $h$ on the distribution $D$. Let $d_{TV}(\Tilde{D}, D)$ denote the total variation distance between two distributions $\Tilde{D}$ and $D$, while $D_{KL}$ denotes the KL-Divergence between them. Denote the Bayes optimal classifier on the ideal distribution $\Tilde{D}$ as $\tilde{h}$ (and similarly the Bayes optimal classifier $h$. Then, we can bound the error rate and the Equal opportunity of $\tilde{h}$ on the original distribution $D$ as follows:
    \begin{align*}
        |\text{err}(\tilde{h}, D) - \text{err}(\tilde{h}, \Tilde{D})| \leq \sqrt{2D_{KL}(\tilde{D}, D)} \quad \text{ and } \quad \Delta_{EO}(\tilde{h}, D) & \leq \sqrt{8\kldiv{\tilde{D}}{D}}.
    \end{align*}
\end{proposition}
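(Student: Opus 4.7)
The plan is to derive both inequalities from Pinsker's inequality, exploiting the fact that the Bayes optimal classifier $\tilde{h}$ on the ideal distribution $\tilde{D}$ automatically satisfies $\Delta_{EO}(\tilde{h}, \tilde{D}) = 0$, as guaranteed by Proposition \ref{prop:UFTF-normal}. The common strategy is to express the quantity of interest on $D$ as a small perturbation of its value on $\tilde{D}$, bound that perturbation by a total variation distance, and then convert TV to KL via Pinsker.

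For the first inequality, I would note that $\text{err}(\tilde{h}, \cdot)$ is simply the expectation of the $[0,1]$-valued indicator $f(x, a, y) = \id{\tilde{h}(x, a) \neq y}$. The dual characterization of TV gives $|\mathrm{E}_D[f] - \mathrm{E}_{\tilde{D}}[f]| \leq \|f\|_\infty \cdot d_{TV}(D, \tilde{D}) \leq d_{TV}(D, \tilde{D})$, and Pinsker in the form $d_{TV}(D, \tilde{D}) \leq \sqrt{\tfrac{1}{2}\,D_{KL}(\tilde{D}, D)} \leq \sqrt{2\,D_{KL}(\tilde{D}, D)}$ closes the first claim.

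For the fairness bound, since $\Delta_{EO}(\tilde{h}, \tilde{D}) = 0$, the triangle inequality yields
\[
\Delta_{EO}(\tilde{h}, D) \;\leq\; \sum_{a \in \{0,1\}} \left| P_D(\tilde{h}{=}1 \mid Y{=}1, A{=}a) - P_{\tilde{D}}(\tilde{h}{=}1 \mid Y{=}1, A{=}a) \right|.
\]
Because the construction in Theorem \ref{thm:affirmative_multi} and Proposition \ref{prop:feature-shift-KL-normal} leaves the joint $(Y, A)$ marginals unchanged, each summand equals the difference of expectations of $\id{\tilde{h}(\cdot, a) = 1}$ under the class-and-group-conditional feature laws $P_{1a}$ and $\tilde{P}_{1a}$, so it is at most $d_{TV}(P_{1a}, \tilde{P}_{1a})$. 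Applying Pinsker to each conditional, combining with the chain rule $D_{KL}(\tilde{D}\|D) = \sum_{i,a} q_{ia}\, D_{KL}(\tilde{P}_{ia}\|P_{ia})$, and using the square-root-sum estimate $\sqrt{x} + \sqrt{y} \leq \sqrt{2(x+y)}$ reduces the right-hand side to a constant multiple of $\sqrt{D_{KL}(\tilde{D}\|D)}$, and careful tracking of the constants gives $\sqrt{8\,D_{KL}(\tilde{D}\|D)}$.

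The main subtlety will be pinning down the constant $8$ in the fairness bound: the chain rule only yields $D_{KL}(\tilde{P}_{1a}\|P_{1a}) \leq D_{KL}(\tilde{D}\|D)/q_{1a}$, so the factor must absorb a worst-case $1/q_{1a}$, implicitly requiring a mild non-degeneracy assumption on the subgroup marginals (such as $q_{1a} \geq 1/4$), or equivalently the use of the weaker form of Pinsker $d_{TV} \leq \sqrt{2 D_{KL}}$ together with a max-over-groups estimate. Once those constants are granted, the remaining pieces --- triangle inequality, Pinsker, the KL chain rule, and the square-root sum inequality --- are entirely routine.
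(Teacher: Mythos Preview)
Your proposal is correct and follows the same overall scheme as the paper: for both inequalities, control the change of measure by total variation and then apply Pinsker. The error-rate bound is essentially identical to the paper's argument (the paper writes $\text{err}(\tilde h,D)-\text{err}(\tilde h,\tilde D)\le 2\,d_{TV}(\tilde D,D)\le\sqrt{2\,D_{KL}(\tilde D,D)}$ directly, whereas you use the slightly tighter event-form $\le d_{TV}$ and then loosen at the Pinsker step; the endpoint is the same).

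For the equal-opportunity bound your route differs in one respect worth noting. The paper stays with the \emph{joint} distributions throughout: it asserts $|TPR_a(\tilde h,D)-TPR_a(\tilde h,\tilde D)|\le 2\,d_{TV}(\tilde D,D)$ for each group $a$, adds the two bounds (using $\Delta_{EO}(\tilde h,\tilde D)=0$), and applies Pinsker once to obtain $4\,d_{TV}\le\sqrt{8\,D_{KL}}$. You instead pass to the class-and-group \emph{conditional} laws $P_{1a},\tilde P_{1a}$, bound each summand by $d_{TV}(P_{1a},\tilde P_{1a})$, apply Pinsker on the conditionals, and aggregate via the KL chain rule. Your approach is the more careful one, and the subtlety you flag is real: since the $(Y,A)$ marginal is held fixed, the correct bound on a conditional TPR difference in terms of the joint is $|TPR_a(\cdot,D)-TPR_a(\cdot,\tilde D)|\le d_{TV}(\tilde D,D)/q_{1a}$, so the paper's factor~$2$ (and hence the constant~$8$) implicitly requires a lower bound on the subgroup masses $q_{1a}$. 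Your caveat about a mild non-degeneracy assumption on the $q_{1a}$'s therefore applies equally to the paper's own argument as written.
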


The proof for Proposition \ref{prop:kl-tv-prop} is given in Section \ref{appndx: proofs_kl} of the Appendix. The above bounds informs us that when the ideal distributions are close enough to the true distribution, we do not lose much when going towards an ideal distribution.

\vspace{-0.5em}
\section{Case Study on Gaussian Distributions} \label{sec:case_study}

In this section, we adapt a stylized Gaussian setting from prior work (Definition 3.1 in \cite{pierson2018fast}, Section 5.3 in \cite{bakalar2021fairness}) to examine unfairness and Bayes optimal error before and after distributional interventions. We assume homoskedastic Gaussians within each group $A = a$, i.e., $\sigma_{0a} = \sigma_{1a}$, so that the Bayes classifier admits a threshold form. This enables tractable analysis of interventions aimed at achieving a fairness–accuracy trade-off. Further details are provided in Section \ref{appndx: case_study} of the Appendix.


We evaluate four interventions: (a) the Bayes optimal classifier on the original distribution (no correction), (b) \emph{EF Affirmative}: transforming the underprivileged group ($A = 0$) via the univariate KL program in Corollary \ref{corr:affirmative_uni}, (c) \emph{EF-All Subgroups}: modifying all groups to minimize KL divergence to the original distribution under exact fairness constraints (Proposition \ref{prop:feature-shift-KL-normal}), and (d) \emph{Mean Matching}: aligning group means while minimizing KL divergence (Proposition \ref{prop:first_moment} in Section \ref{appndx: proofs_kl} of the Appendix). Note that `EF-All Subgroups' involves non-convex optimization, for which we apply line search to estimate the optimal mean–variance scaling factor $\gamma$. We report Demographic Parity ($\Delta$DP), Equal Opportunity ($\Delta$EO), KL and Jensen–Shannon divergence, as well as Bayes Error (BE) under each intervention. For the univariate Gaussian case, the Bayes-optimal thresholds are known in closed form (Lemma \ref{lem:bayes_optimal_threshold} in Section \ref{appndx: sec3_proofs} of the Appendix), and allow precise computation of $\Delta$DP and $\Delta$EO via standard Gaussian CDFs.

\begin{figure*}
    \begin{subfigure}[b]{0.26\textwidth}
        \includegraphics[width=\textwidth]{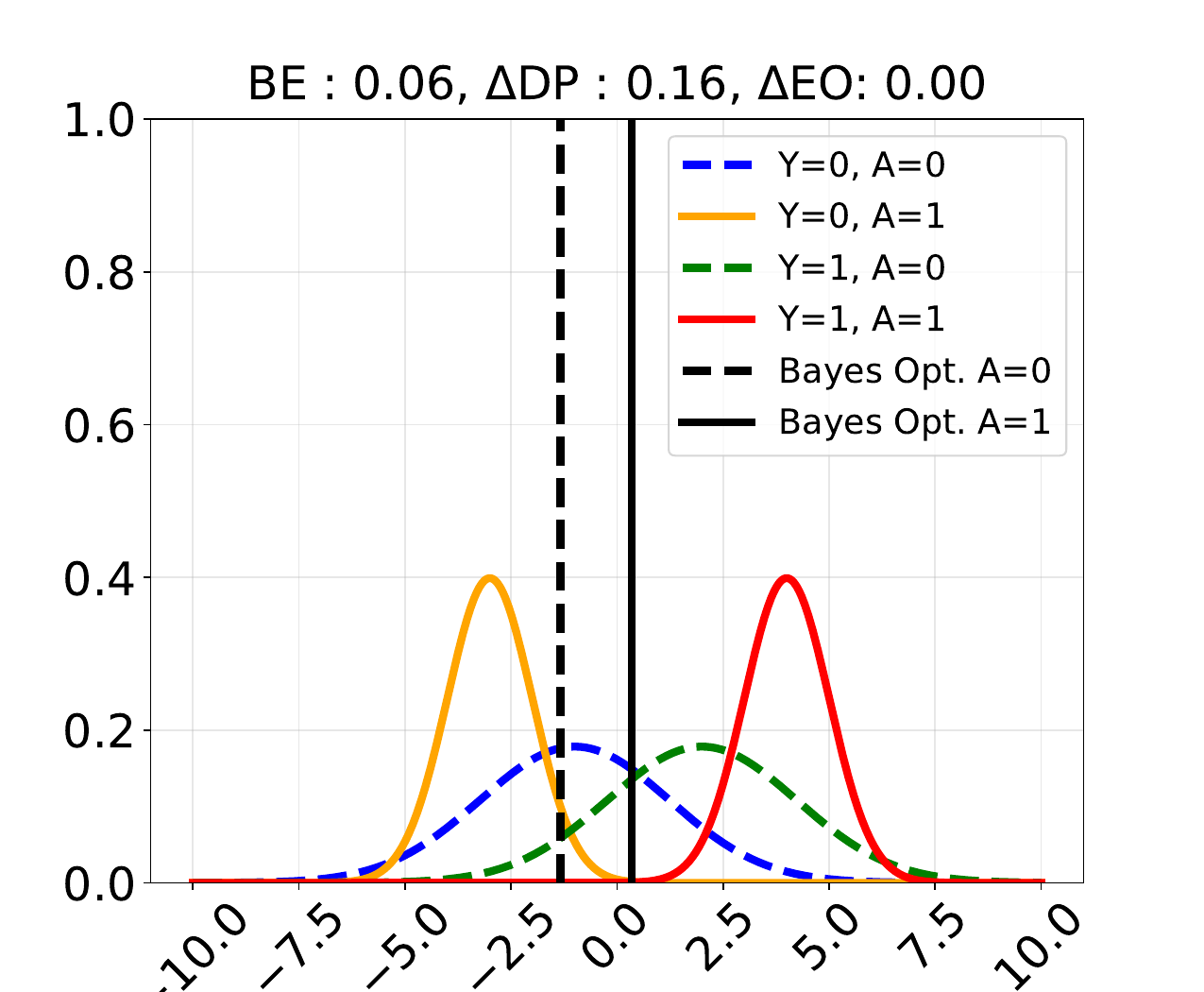}
        \caption{Original Distribution}
    \end{subfigure}
    \hspace{-11pt} 
    \begin{subfigure}[b]{0.26\textwidth}
        \includegraphics[width=\textwidth]{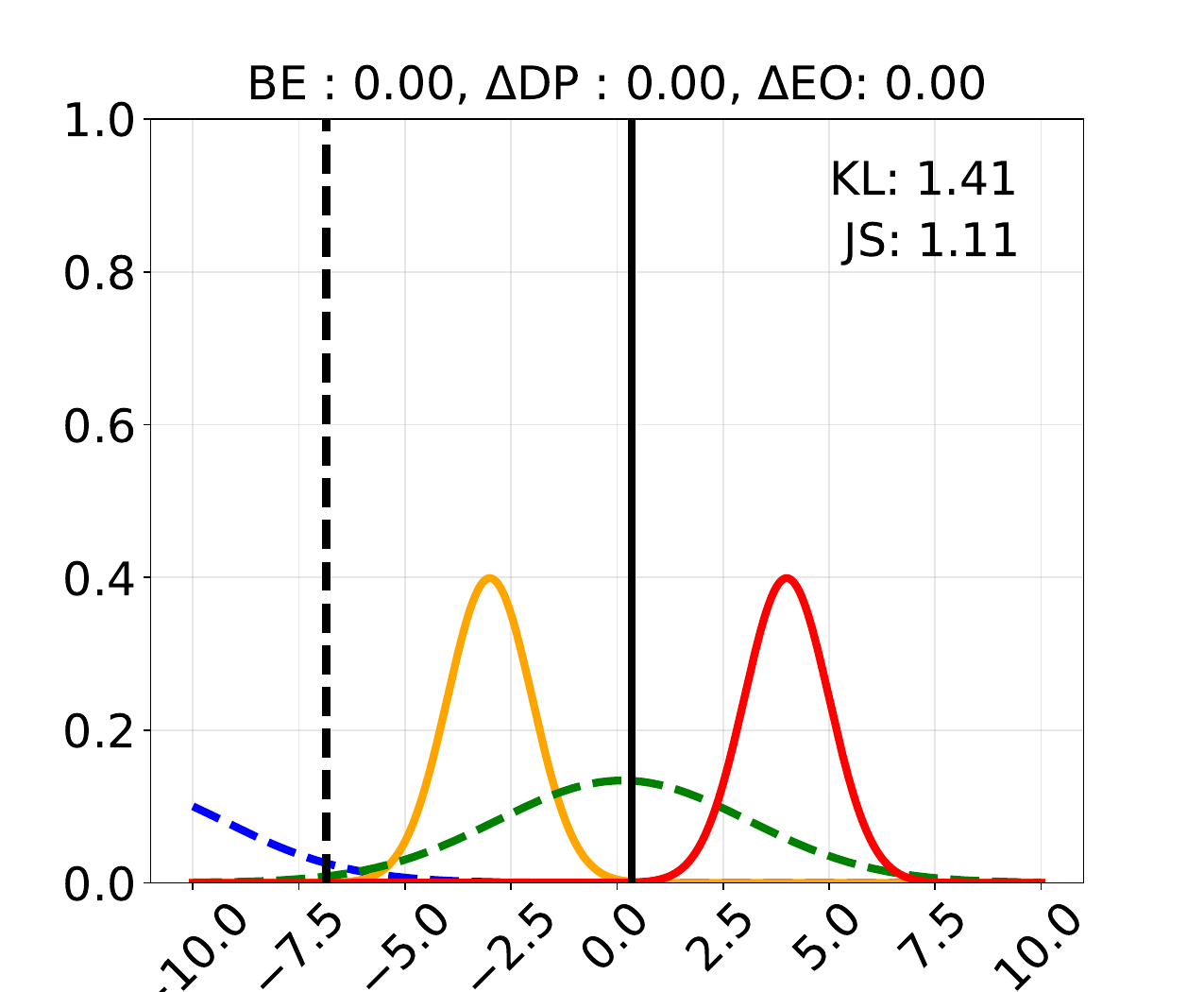}
        \caption{EF Affirmative}
    \end{subfigure}
    \hspace{-11pt} 
    \begin{subfigure}[b]{0.26\textwidth}
        \includegraphics[width=\textwidth]{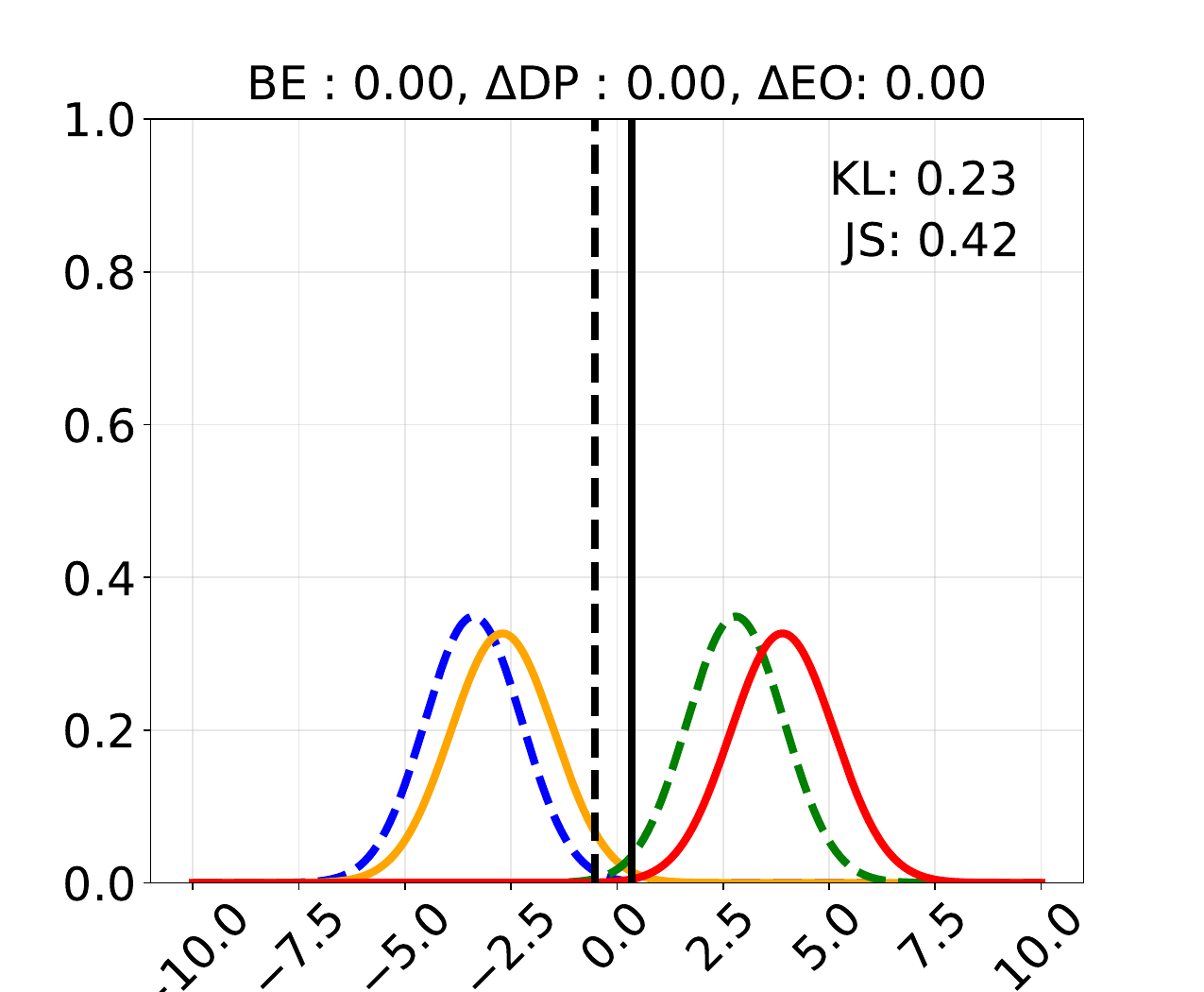}
        \caption{EF-All Subgroups}
    \end{subfigure}
    \hspace{-11pt} 
    \begin{subfigure}[b]{0.26\textwidth}
        \includegraphics[width=\textwidth]{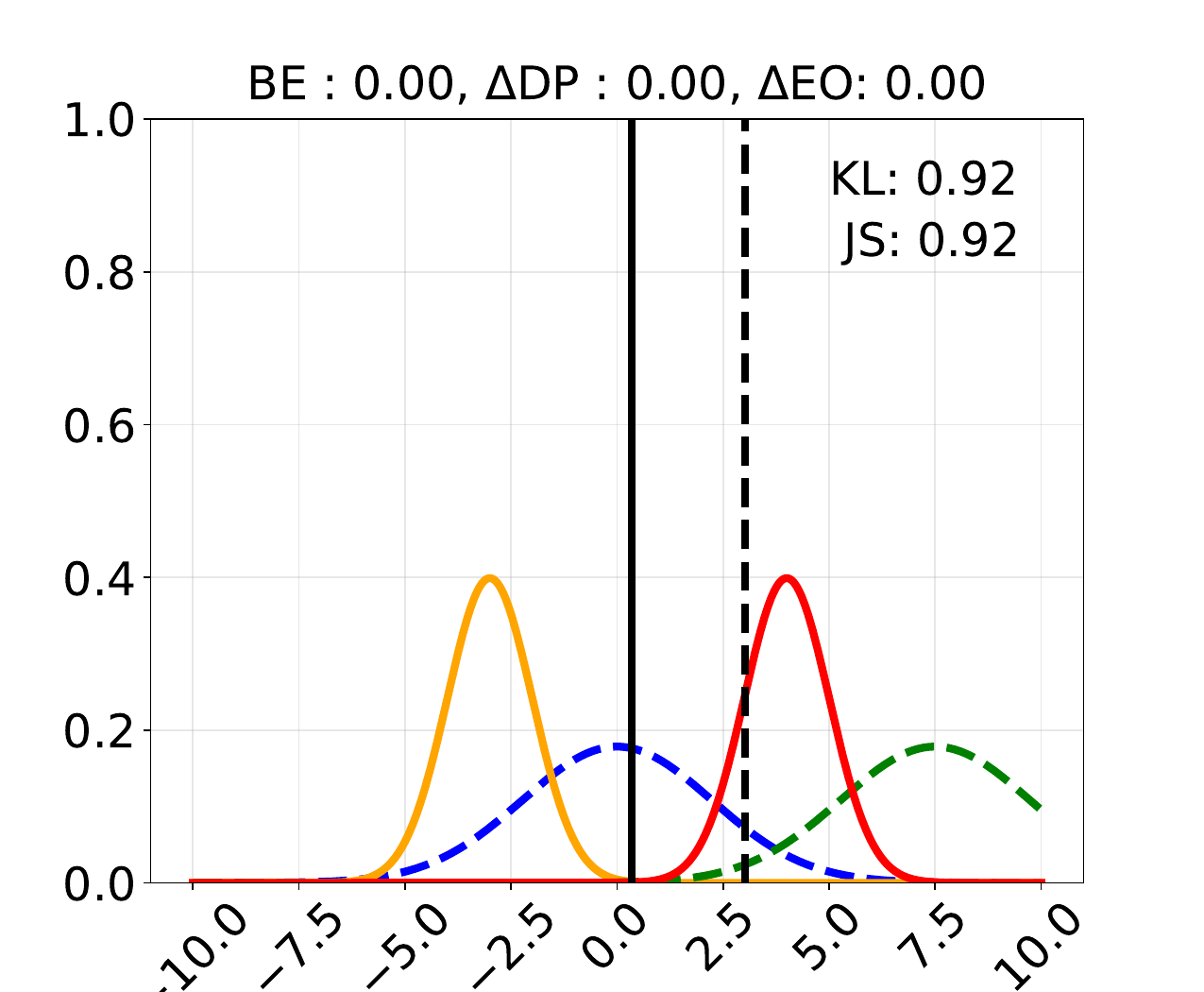}
        \caption{Mean Matching}
    \end{subfigure}
\vspace{-0.5em}
\caption{Comparison of different interventions when the $\Delta$DP on the original distribution is high. In this case, EF-All manages to stay close to the true distribution and achieves perfect fairness and error rate, while others deviate significantly.}
\label{fig:high_unf}
\end{figure*}


We have already demonstrated the effect of different interventions for the case of high $\Delta$EO in Figure \ref{fig:intro_fig}. To cover another interesting case, we look at a distribution where $\Delta$DP is very high in Figure \ref{fig:high_unf}. Here, the affirmative action intervention transforms both the under-privileged subgroups to high-variance ones, which results in a reduction of BE and $\Delta$DP, but at the cost of a high KL/JS-divergence with respect to the true distribution. However, the EF-All intervention simply tries to match the variances of under-privileged and privileged subgroups and, as a result, achieves perfect fairness and accuracy while staying close to the true distribution. Mean Matching is very similar to EF Affirmative in this case and, as a result, has relatively high KL/JS numbers. Due to a lack of space, we show more results for different settings: (1) same but shifted group distributions (Figure \ref{fig:symmetric}), (2) the case of $\Delta$EO being close to $0$ (Figure \ref{fig:no_unf}), and (3) the case of a different cost sensitive risk ($t_{C} = \nicefrac{3}{4}$ on $\eta(x, a)$) (Figure \ref{fig:non_bayes}), in Section \ref{appndx: case_study} of the Appendix.

\vspace{-7pt}
\section{Applications: Steering the Representations of an LLM}


We now empirically demonstrate how our guarantees can improve representation and generation steering in LLMs—an active area with limited theoretical grounding \cite{han2023word, singh2024representation, zhao2024beyond, tan2024analysing}. Adopting the setup of Singh et al. \cite{singh2024representation}, we first apply affine steering to pretrained LLM representations to reduce class-specific TPR gaps on the Bias in Bios dataset \cite{de2019bias}. We then use the framework of Zhao et al. \cite{zhao2024beyond} to steer generations of movie reviews, showing that our intervention improves expressed joy for the underperforming group.


\subsection{Reducing Per-class disparity in Multi-class Classification}

In this experiment, we aim to steer data representations to reduce disparities between groups in a multi-class classification setting. We use the Bias in Bios dataset \cite{de2019bias}, which comprises web-sourced biographies labelled by profession and annotated for gender. The task is to predict the profession from the biography while ensuring fairness with respect to a chosen metric. Our experimental setup closely follows that of Singh et al. \cite{singh2024representation}, who generate representations using the Llama-2 7b model \cite{touvron2023llama} and propose a method called MiMiC (Mean+Covariance Matching), which steers representations via least squares alignment of the first two moments. They measure the TPR-gap (Definition \ref{def:binary_fairness_metrics}), defined as: $\text{TPR-gap}_y(h) := \left|\mathbb{P}[h(X,A) = y \mid Y = y, A = 1] - \mathbb{P}[h(X,A) = y \mid Y = y, A = 0]\right|$.

They demonstrate that MiMiC significantly reduces the TPR-gap on the Llama-2 7b embeddings of the Bios dataset. We adopt the same experimental pipeline as Singh et al. \cite{singh2022optimal}; further details are provided in Section \ref{appndx: singh_et_al} of the Appendix. Our intervention, referred to as \emph{EF Affirmative} in Figure \ref{fig:bar_plot_bios}, begins by estimating the first two moments of the representations from Singh et al.'s pipeline. We then apply our Affirmative Action framework to compute target moments corresponding to an ideal distribution. As in Singh et al., we estimate affine transformation parameters that map the original moments to the target ones and apply this transformation to all data representations. We use a multi-class generalization of our optimization program in Theorem \ref{thm:affirmative_multi} and we lay down the details of the program and intervention in Section \ref{appndx: singh_et_al} of the Appendix.

We additionally evaluate the LEACE transformation \cite{belrose2023leace}, which Singh et al. \cite{singh2022optimal} used as a baseline. Figure \ref{fig:bar_plot_bios} reports the root-mean-square TPR-gaps across classes for various methods. Our intervention consistently reduces TPR-gaps across all classes and, in many cases, outperforms both MiMiC and LEACE. This provides empirical support for our approach: actively searching for and aligning with an ideal distribution can meaningfully reduce group disparities in representation learning.

\begin{figure*}
    \includegraphics[width=\textwidth]{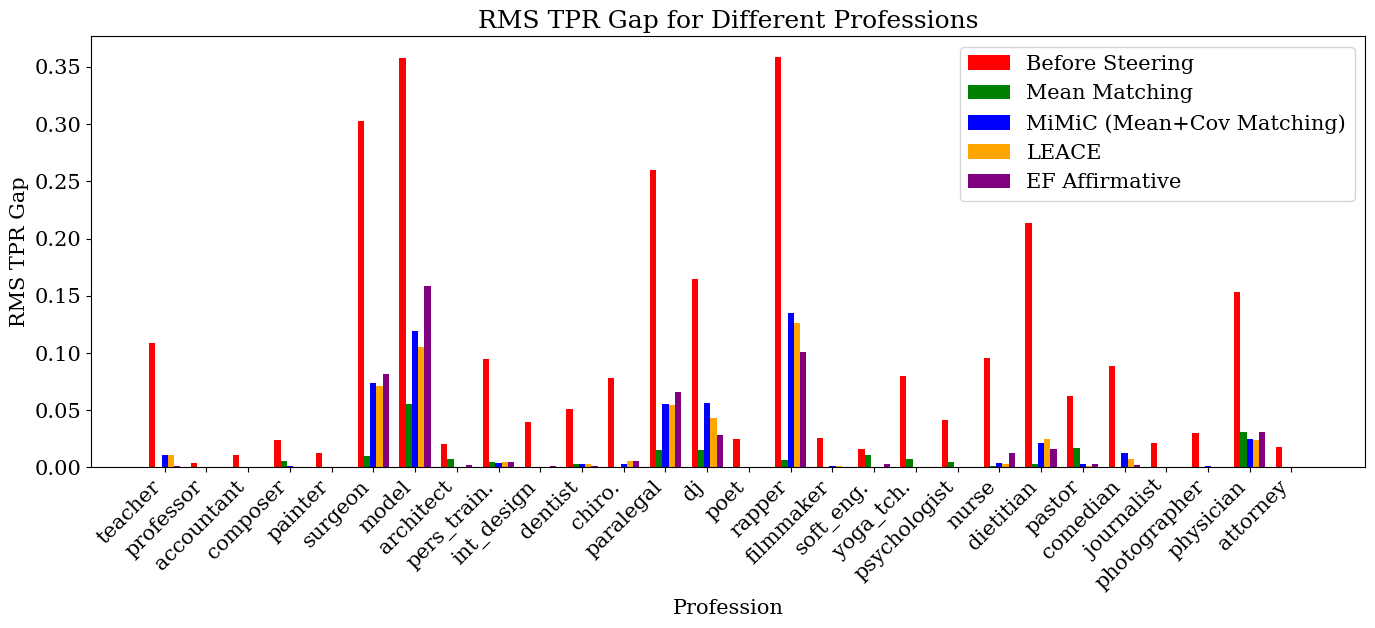}
    \caption{TPR-gap between Gender groups for all professions. All methods to steer feature representations achieve roughly the same accuracy (in the range of 0.77-0.79). Our intervention (EF Affirmative) is able to significantly reduce the TPR-gap for all professions. In many cases, it is even comparable or better than previous interventions \citet{belrose2023leace, singh2024representation}.}
    \label{fig:bar_plot_bios}
    \vspace{-2em}
\end{figure*}

\subsection{Steering Activations for Joyful Generation}

We steer LLM generation towards joyful movie reviews using the Gaussian Concept Steer (GCS) framework of Zhao et al. \cite{zhao2024beyond}, which samples steering vectors from a Gaussian $v_c^l \sim \mathcal{N}(\mu_c^l, \Sigma_c^l)$ for concept $c$ at layer $l$. This improves robustness across models and datasets compared to using a single vector. Concept vectors for joy are estimated from GPT-4o \cite{hurst2024gpt} outputs and used to steer a Llama-3 8B model \cite{grattafiori2024llama}. At each layer $l$, the final token representation $h^l$ is updated as $h^l = (1 - a)\cdot h^l + a\cdot v_c^l$, where $a$ is the strength of the steering. Zhao et al. \cite{zhao2024beyond} experiment with values of $a \in (0.03, 0.08)$. We fix $a=0.03$ for our experiments. Following \cite{zhao2024beyond}, we evaluate the joyful tone using GPT-4 as an oracle with fixed decoding parameters. We apply the GCS framework to steer movie review generation toward joy over anger, focusing on two groups: comedy and horror reviews. For each group, we estimate Gaussian distributions of steering vectors for both directions (joyful $\rightarrow$ angry and angry $\rightarrow$ joyful), with the latter used for intervention. Full details are in Section \ref{appndx: zhao_steering} of the Appendix.



While steering improves joyfulness overall, its effectiveness varies between groups. To address this, we apply our \emph{EF Affirmative} intervention to nudge the joyful vector for the horror group. Let $\mathcal{N}(\mu_c^l, \Sigma_c^l)$ be the original distribution and $\mathcal{N}(\Tilde{\mu}_c^l, \Tilde{\Sigma}_c^l)$ be the distribution obtained after applying EF Affirmative intervention (Theorem \ref{thm:affirmative_multi}). We define $\tilde{v}_c^l = (1 - \alpha)\cdot v_c^l + \alpha\cdot \tilde{v}_c^l$, where $v_c^l \sim \mathcal{N}(\mu_c^l, \Sigma_c^l) \text{ and } \tilde{v}_c^l \sim \mathcal{N}(\Tilde{\mu}_c^l, \Tilde{\Sigma}_c^l)$. The final steering step updates the last-token representation at layer $l$ as $h^l = (1 - a)\cdot h^l + a\cdot \tilde{v}_c^l$.


Figure~\ref{fig:joyful} presents the simulation results. We report the change in joyfulness (denoted as $\Delta$-Joyful) before and after steering for both the comedy and horror review groups. For the horror group, we additionally apply our EF intervention, denoted as \emph{Steering+EF}, to adjust the steering vectors. We vary the nudging parameter $\alpha$, where $\alpha = 0$ corresponds to the original steering vectors from Zhao et al. \cite{zhao2024beyond}. As expected, moderate values of $\alpha$ improve joyfulness in the horror group, but excessive nudging distorts the steering vector, reducing its effectiveness. 

\begin{wrapfigure}[16]{r}{0.45\textwidth} 
\vspace{-40pt} 
\centering
\includegraphics[width=0.45\textwidth]{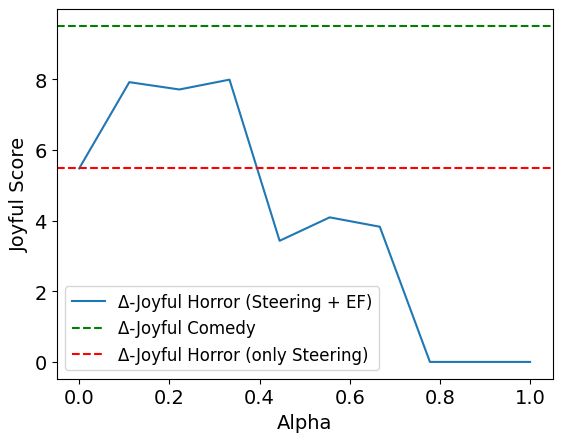} 
\caption{Change in Joyful score($\Delta$-Joyful) before and after adjusting the steering. Our intervention (`EF Affirmative') pushes up the effectiveness of steering the 'Horror' group, relative to the `Comedy' group.}
\label{fig:joyful}
\end{wrapfigure}

\vspace{-10pt}
\section{Discussion and Future Work} \label{sec:discussion}
\vspace{-5pt}


We address fair classification by introducing the notion of \emph{ideal} distributions—those that admit the Bayes-optimal classifier satisfying exact fairness. For common parametric families, we show that identifying such distributions reduces to a KL divergence minimisation problem, subject to fairness constraints on the Bayes-optimal classifier. We characterize conditions under which this problem is feasible and tractable. Through simulations, we demonstrate that the resulting interventions can steer the original distribution towards both perfect fairness and Bayes-optimal accuracy while remaining close in distributional distance. Additionally, we show that our method extends to post-training interventions, effectively steering learned representations toward desired behavioural outcomes. 

We study the foundational problem of finding the nearest ideal distribution to a given distribution. This has many potential applications, such as correcting training data bias, learning fair representations, steering intermediate representations for fair generation, etc. Fair pre-processing or reweighing is closer to our mathematical setup, and steering intermediate representations gives a compelling application for LLMs. Our KL optimization program can be used to measure training data bias and guide data collection policies in practice, especially when the models are trained for fairness, but the inherent bias in data is unresolved.


Our theoretical results are on the fairness of the Bayes optimal classifier and the corresponding optimization programs, which are tractable and mathematically easier to analyse for parametric families of distributions, e.g., multivariate Gaussians. Multivariate Gaussians may not always fit real-world data, but can be justifiably used to model concepts and internal representations \cite{zhao2024beyond, eftekhari2025importance}. There can be scenarios and applications where a parametric assumption may not fit well with the input data distribution, and hence, analyzing the Bayes optimal fair classifier will become highly non-trivial.

An important extension is to determine how to steer a given distribution within a fixed budget so that exact fairness constraints are satisfied. This is relevant when deviations from the original distribution are costly or infeasible, and can inform data collection or active learning strategies for fair classification \cite{holstein2019improving, jo2020lessons, dutta2020there, awasthi2021evaluating}. In such cases, it may be necessary to relax the fairness constraints and seek approximate solutions, prompting the need for efficient algorithms that operate under finite-sample settings. 
In a finite-sample regime, we have high probability estimates of the moments and other distribution-dependent quantities, so our approach leads to approximate fairness guarantees.
To go beyond distributional assumptions, we can leverage previous work that maps given data distribution to tractable, parametric families using invertible transforms that preserve Bayes error, so our results become applicable \cite{theisen2021evaluating}. We can also assume distributions with bounded moments instead of parametric families, and still bound the Bayes error and other relevant quantities \cite{murti2024discedit}. This requires a careful analysis and is certainly a very important future direction.

The framework of ideal distributions can also be applied to audit deployed models for fairness, particularly when the evaluation data may itself be biased \cite{akpinar2022sandbox, sharma2023testing, bhanot2023stress}, offering a pathway toward more robust fairness evaluation protocols. 
Prior work on fairness audit has mostly focused on auditing a given model instead of auditing training or evaluation data. There is a long line of work to demonstrate that the fairness-accuracy trade-off disappears when data bias is accounted for \cite{wick2019unlocking,blum2019recovering,dutta2020there, maity2021does, sharma2024far, leininger2025overcoming}. The KL gap in our formulation can be used as a metric of data bias, and Theorem \ref{thm:affirmative_multi} essentially gives an algorithmic recipe to find the minimally different distribution relative to the given one.


Our conditions can also be incorporated into optimization objectives to steer data generation toward ideal distributions. We demonstrate this by applying post-training interventions to steer LLM representations for multi-class fair classification and emotion control. More generally, our optimization framework and parametric conditions can be embedded into the training objectives of generative models. Modern generative approaches such as Variational Autoencoders \cite{kingma2019introduction} and Normalizing Flows \cite{papamakarios2021normalizing} often assume parametric latent distributions, such as mixtures of Gaussians. Embedding fairness-aware ideality conditions into these objectives enables the generation of fair and accurate data distributions that remain close to the original. This represents a promising direction for fair generative modeling, an area of growing theoretical and practical interest \cite{xu2018fairgan, choi2020fair, balunovic2021fair, van2021decaf, teo2023fair, teo2024measuring}.

\textbf{Broader Impact}: Our work improves theoretical understanding of data bias and proposes an optimization program for steering data distributions towards provably exact fairness guarantees. The real-world societal biases in data and the fairness harms are complex and more nuanced, where our fairness interventions can provide good guiding principles but no silver bullet to solve the real-world problems.

\textbf{Acknowledgments}: Mohit Sharma would like to thank Prof. Masashi Sugiyama and the Imperfect Information Team at RIKEN-AIP for hosting him while the author was submitting and rebutting this paper. The authors would also like to thank Prof. Manuj Mukherjee at IIIT Delhi for initial discussions around \citet{dutta2020there} and the optimization programs. Rajiv Ratn Shah is partly supported by the Infosys Center for AI, the Center of Design and New Media, and the Center of Excellence in Healthcare at IIIT Delhi.


\bibliography{neurips_2025}
\bibliographystyle{plainnat}

\appendix

\section{Proofs for Section 3: Ideal Distributions for Fair classification} \label{appndx: sec3_proofs}

We will require a helper result about threshold classifiers to prove our next set of results.

\begin{lemma} \label{lem:bayes_optimal_threshold}
     Let $\eta(x, a) = \prob{Y=1|X=x, A=a}$, $q_{ia}=\prob{Y=i, A=a}$ and $p_{ia}(x) = \prob{X=x\cond Y=i, A=a}$. Then the Bayes optimal classifier can be written as $h^{*}(x, a) = \id{\log \frac{p_{1a}(x)}{p_{0a}(x)} \geq \log \frac{q_{0a}}{q_{1a}}}$.
\end{lemma}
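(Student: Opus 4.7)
The plan is to start from the standard characterization of the Bayes optimal classifier for $0$-$1$ loss, namely $h^*(x,a) = \id{\eta(x,a) \geq 1/2}$, which is the minimizer of $\prob{h(X,A) \neq Y}$ pointwise in $(x,a)$. Since the paper already invokes this fact (citing Devroye et al. in the setup of threshold classifiers), I would just cite it and then translate the threshold condition $\eta(x,a) \geq 1/2$ into the likelihood-ratio form stated in the lemma.

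The key translation is a direct application of Bayes' rule at the group-conditional level. I would write
\[
\eta(x,a) = \prob{Y=1 \cond X=x, A=a} = \frac{p_{1a}(x)\, q_{1a}}{p_{1a}(x)\, q_{1a} + p_{0a}(x)\, q_{0a}},
\]
so that $\eta(x,a) \geq 1/2$ is equivalent to $p_{1a}(x)\, q_{1a} \geq p_{0a}(x)\, q_{0a}$. Assuming $p_{0a}(x) > 0$ and $q_{1a} > 0$ (the degenerate cases can be handled by a convention on the indicator), dividing both sides and taking logarithms gives
\[
\log \frac{p_{1a}(x)}{p_{0a}(x)} \geq \log \frac{q_{0a}}{q_{1a}},
\]
which is exactly the claimed form of $h^*(x,a)$.

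There is no real obstacle here; the whole argument is a one-line Bayes-rule calculation combined with monotonicity of the logarithm. The only minor points worth mentioning in the write-up are (i) measure-zero ties, where the tie-breaking convention $\geq$ is consistent on both sides of the equivalence, and (ii) the edge cases $p_{0a}(x) = 0$ or $q_{1a} = 0$, where the formula should be interpreted in the extended-real sense (the log-likelihood ratio being $+\infty$ or the threshold being $-\infty$), and in each such case the indicator evaluates to the correct Bayes decision. With these remarks noted, the proof is complete.
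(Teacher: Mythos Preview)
Your proposal is correct and follows essentially the same approach as the paper: both apply Bayes' rule to rewrite the posterior $\eta(x,a)$ in terms of $p_{ia}(x)$ and $q_{ia}$, then convert the threshold condition $\eta(x,a)\geq 1/2$ into the log-likelihood-ratio form. The only cosmetic difference is that the paper carries out the manipulation for a general threshold $t$ before specializing to $t=1/2$, whereas you fix $t=1/2$ from the start; your added remarks on tie-breaking and degenerate cases are not in the paper's proof but do no harm.
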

\begin{proof}
    Let $\eta(x, a) = \prob{Y=1|X=x, A=a}$, $q_{ia}=\prob{Y=i, A=a}$ and $p_{ia}(x) = \prob{X=x\cond Y=i, A=a}$. We consider group-aware threshold classifiers on $D$ of the form $h_{t}(x, a) = \id{\eta(x, a) \geq t}$, which can be equivalently written as
    \begin{align*}
    h_{t}(x, a) & = \id{\eta(x, a) \geq t} \\ 
    & = \id{\prob{Y=1\cond X=x, A=a} \geq t} \\
    & = \id{\frac{\prob{Y=1\cond X=x, A=a}}{\prob{Y=0\cond X=x, A=a}} \geq \frac{t}{1-t}} \\
    & = \id{\frac{\prob{Y=1, X=x, A=a}}{\prob{Y=0, X=x, A=a}} \geq \frac{t}{1-t}} \\
    & = \id{\frac{\prob{X=x\cond Y=1, A=a} \prob{Y=1, A=a}}{\prob{X=x\cond Y=0, A=a} \prob{Y=0, A=a}} \geq \frac{t}{1-t}} \\
    & = \id{\frac{p_{1a}(x)}{p_{0a}(x)} \geq \frac{t}{1-t} \cdot \frac{q_{0a}}{q_{1a}}} \\
    & = \id{\log \frac{p_{1a}(x)}{p_{0a}(x)} \geq \log \frac{t}{1-t} + \log \frac{q_{0a}}{q_{1a}}}.
    \end{align*}
    It is well-known that the group-aware Bayes optimal classifier $h^{*} = h_{\nicefrac{1}{2}}$ by setting $t=\nicefrac{1}{2}$, or equivalently,
    \[
    h^{*}(x, a) = h_{\nicefrac{1}{2}}(x, a) = \id{\log \frac{p_{1a}(x)}{p_{0a}(x)} \geq \log \frac{q_{0a}}{q_{1a}}}.
    \]
\end{proof}

We now prove Proposition 3.2 from the main paper.

\begin{proposition}
(Proposition 3.2 in the main text) Let $(X, Y, A)$ denote the features, class label, and group membership, respectively, of a random data point from any data distribution $D$ with $q_{ia} = \prob{Y=i, A=a}$, for $i \in \mathcal{Y}$ and $a \in \mathcal{A}$. Let $X|Y=i, A=a \sim \mathcal{N}(\mu_{ia}, \Sigma_{ia})$ be multivariate Normal distributions with mean $\mu_{ia} \in \mathbb{R}^{d}$ and covariance matrix $\Sigma_{ia} \in \mathbb{R}^{d \times d}$, for $i \in \mathcal{Y}$ and $a \in \mathcal{A}$. If the means $\mu_{ia}$ and the covariance matrices $\Sigma_{ia}$ satisfy
\begin{align*}
    &\Sigma_{ia}^{-1/2} (\mu_{ia} - \mu_{ja}) = \Sigma_{ia'}^{-1/2} (\mu_{ia'} - \mu_{ja'}) \quad \text{and} \quad \\
&\Sigma_{ia}^{1/2} \Sigma_{ja}^{-1} \Sigma_{ia}^{1/2} = \Sigma_{ia'}^{1/2} \Sigma_{ja'}^{-1} \Sigma_{ia'}^{1/2} \quad \text{and} \quad \frac{q_{ia}}{q_{ja}} = \frac{q_{ia'}}{q_{ja'}},~\forall i,j \in \mathcal{Y}, a, a' \in \mathcal{A},
\end{align*}
then the group-aware Bayes optimal classifier on $D$ satisfies equal opportunity. 
\end{proposition}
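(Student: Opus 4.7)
The goal is to show that, under the stated parametric conditions, the group-aware Bayes optimal classifier has the property that $\Pr(h^{*}(X,A)=i \mid Y=i, A=a)$ is the same for every group $a \in \mathcal{A}$. My plan is to derive a canonical form for the Bayes decision region within each group by standardizing the relevant Gaussian, and then check that the three conditions are precisely what is needed for the standardized region to be the same across $a$.

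\textbf{Step 1: Characterize the decision region.} Generalising the Lemma \ref{lem:bayes_optimal_threshold} argument to $|\mathcal{Y}|>2$, the group-aware Bayes classifier assigns class $i$ in group $a$ on the region
\[
R_{i,a} \;=\; \bigl\{ x : \log p_{ia}(x) + \log q_{ia} \;\geq\; \log p_{ja}(x) + \log q_{ja}, \ \forall j \in \mathcal{Y} \bigr\}.
\]
Plugging in the multivariate normal densities, each pairwise inequality becomes a quadratic inequality in $x$ involving $\mu_{ia}, \mu_{ja}, \Sigma_{ia}, \Sigma_{ja}$ and the log-ratio $\log(q_{ja}/q_{ia})$, plus a $\log|\Sigma_{ia}|-\log|\Sigma_{ja}|$ term.

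\textbf{Step 2: Standardize under the conditioning distribution.} For the equal-opportunity probability $\Pr(X \in R_{i,a} \mid Y=i, A=a)$, I will apply the change of variables $Z = \Sigma_{ia}^{-1/2}(X-\mu_{ia})$ so that $Z \sim \mathcal{N}(0,I)$ under the conditioning. Writing $x = \Sigma_{ia}^{1/2}z + \mu_{ia}$ and expanding $(x-\mu_{ja})^{T}\Sigma_{ja}^{-1}(x-\mu_{ja})$, the pairwise defining inequality for $R_{i,a}$ turns into a quadratic in $z$ whose data are exactly
\[
M_{ij,a} \;=\; \Sigma_{ia}^{1/2}\Sigma_{ja}^{-1}\Sigma_{ia}^{1/2}, \qquad
u_{ij,a} \;=\; \Sigma_{ia}^{-1/2}(\mu_{ia}-\mu_{ja}),
\]
together with $\log(q_{ja}/q_{ia})$ and the log-determinant offset $\log|\Sigma_{ia}|-\log|\Sigma_{ja}|$. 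The routine algebra reduces the cross term $(\Sigma_{ia}^{1/2}z)^{T}\Sigma_{ja}^{-1}(\mu_{ia}-\mu_{ja})$ to $z^{T}M_{ij,a}u_{ij,a}$ and the constant to $u_{ij,a}^{T}M_{ij,a}u_{ij,a}$, so the inequality defining membership in the transformed region depends on the data only through $M_{ij,a}$, $u_{ij,a}$, $\log(q_{ja}/q_{ia})$ and $\log|M_{ij,a}|$.

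\textbf{Step 3: Invoke the three hypotheses.} Condition 1 gives that $u_{ij,a}$ is independent of $a$; condition 2 gives the same for $M_{ij,a}$ (and hence for $\log|M_{ij,a}|$, since $|M_{ij,a}|=|\Sigma_{ia}|/|\Sigma_{ja}|$, so the log-determinant offset is also matched across $a$); and condition 3 gives that $\log(q_{ja}/q_{ia})$ is independent of $a$. Therefore the transformed region $R'_{i}:=\Sigma_{ia}^{-1/2}(R_{i,a}-\mu_{ia})$ is the \emph{same} subset of $\R^{d}$ for every $a$. Since $Z \sim \mathcal{N}(0,I)$ regardless of $a$, we conclude
\[
\Pr\bigl(h^{*}(X,A)=i \mid Y=i, A=a\bigr) \;=\; \Pr\bigl(Z \in R'_{i}\bigr),
\]
which is constant in $a$, giving equal opportunity for every class $i$.

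\textbf{Anticipated obstacle.} The computation itself is routine once the right change of variables is chosen; the real content is recognising that the three hypotheses correspond exactly to the invariants $u_{ij,a}$, $M_{ij,a}$, and $\log(q_{ja}/q_{ia})$ appearing in the standardized quadratic. The only mild subtlety is that one has to verify that the log-determinant difference collapses to $\log|M_{ij,a}|$ so that no extra condition on individual $|\Sigma_{ia}|$ is needed, and that in the multi-class setting the \emph{intersection} over $j$ of these $a$-invariant inequalities is still $a$-invariant --- which is immediate.
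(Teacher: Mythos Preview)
Your proposal is correct and follows essentially the same route as the paper: both write the Bayes decision region as an intersection of log-likelihood-ratio inequalities, apply the standardization $x = \Sigma_{ia}^{1/2}z + \mu_{ia}$ so that $Z \sim \mathcal{N}(0,I)$ under the conditioning event, and then observe that the resulting quadratic in $z$ depends on the data only through $M_{ij,a}$, $u_{ij,a}$, and $\log(q_{ja}/q_{ia})$, which the three hypotheses render $a$-invariant. Your treatment of the log-determinant term via $\log|M_{ij,a}|$ is a slightly cleaner bookkeeping of the same cancellation the paper performs.
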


\begin{proof} The Bayes optimal classifier for group $A=a$ in a multi-class setting can be written down as a maximum over posterior probabilities:
\[
h^{*}(x, a) = \underset{y \in \mathcal{Y}}{\arg\max} ~\eta_y(x,a), \text{ where } \eta_y(x,a) = \prob{Y=y|X=x, A=a}.
\]

We can say that $h^{*}(x,a) = y$, whenever the following happens:

\[
h^{*}(x, a) = \underset{y \in \mathcal{Y}}{\arg\max} ~\eta_y(x,a) = \id{\frac{\eta_y(x,a)}{\eta_i(x,a)} \geq 1, \forall i \in \mathcal{Y}} = \id{\log \frac{p_{ya}(X)}{p_{ia}(X)} \geq \log \frac{q_{ia}}{q_{ya}}, \forall i \in \mathcal{Y}}
\]

Using the above simplification, the EO-fairness condition $\prob{h^{*}(X, A)=y\cond Y=y, A=a} = \prob{h^{*}(X, A)=y\cond Y=1, A=a'} ~\forall y \in \mathcal{Y}, a, a' \in \mathcal{A}$ means
\begin{align*}
&\prob{\log \frac{p_{ya}(X)}{p_{ia}(X)} \geq \log \frac{q_{ia}}{q_{ya}}, \forall i \in \mathcal{Y} \cond Y=y, A=a} \\
&= \prob{\log \frac{p_{ya'}(X)}{p_{ia'}(X)} \geq \log \frac{q_{ia'}}{q_{ya'}}, \forall i \in \mathcal{Y} \cond Y=y, A=a'}.
\end{align*}

Since $X|Y=i, A=a \sim \mathcal{N}(\mu_{ia}, \Sigma_{ia})$ are multivariate Normal distributions, their probability densities are
\[
p_{ia}(x) = (2\pi)^{-d/2} \det(\Sigma_{ia})^{-1/2} \exp\left(- \frac{1}{2} (x - \mu_{ia})^{T} \Sigma_{ia}^{-1} (x - \mu_{ia})\right).
\]

Now we can write
\begin{align*}
&\log \frac{p_{ya}(x)}{p_{ia}(x)} \\
&= \frac{1}{2} \left((x - \mu_{ia})^{T} \Sigma_{ia}^{-1} (x - \mu_{ia}) - (x - \mu_{ya})^{T} \Sigma_{ya}^{-1} (x - \mu_{ya}) + \log \det(\Sigma_{ia}) - \log \det(\Sigma_{ya})\right) \\
& = \frac{1}{2} \left((\Sigma_{ya}^{1/2} r + \mu_{ya} - \mu_{ia})^{T} \Sigma_{ia}^{-1} (\Sigma_{ya}^{1/2} r + \mu_{ya} - \mu_{ia}) - r^{T} r - \log \det(\Sigma_{ya}^{1/2} \Sigma_{ia}^{-1} \Sigma_{ya}^{1/2})\right) \\
& \qquad \qquad \qquad \qquad \text{by substituting $x = \Sigma_{ya}^{1/2} r + \mu_{ya}$, where $r \sim \mathcal{N}(0, I_{d\times d})$} \\
& = \frac{1}{2}~ r^{T} \Sigma_{ya}^{1/2} \Sigma_{ia}^{-1} \Sigma_{ya}^{1/2} r + (\mu_{ya} - \mu_{ia})^{T} \Sigma_{ia}^{-1} \Sigma_{ya}^{1/2} r + \frac{1}{2} (\mu_{ya} - \mu_{ia})^{T}\Sigma_{ia}^{-1}(\mu_{ya} - \mu_{ia})\\
&\qquad \qquad \qquad \qquad - \frac{1}{2} r^T r - \frac{1}{2} \log \det(\Sigma_{ya}^{1/2} \Sigma_{ia}^{-1} \Sigma_{ya}^{1/2}) \\
\end{align*}
Let us denote the above expression as $E_{yi}(r)$. We can now write the group TPR as: 
\begin{align*}
& \prob{\log \frac{p_{ya}(X)}{p_{ia}(X)} \geq \log \frac{q_{ia}}{q_{ya}}, \forall i \in \mathcal{Y} \cond Y=y, A=a} = \prob{E_{yi}(R) \geq \log \frac{q_{ia}}{q_{ya}}, \forall i \in \mathcal{Y}}, 
\end{align*}
for $R \sim \mathcal{N}(\bar{0}, I_{d \times d})$. Now if we have $\dfrac{q_{ya}}{q_{ia}} = \dfrac{q_{ya'}}{q_{ia'}}$ and
\[
\Sigma_{ya}^{-1/2} (\mu_{ya} - \mu_{ia}) = \Sigma_{ya'}^{-1/2} (\mu_{ya'} - \mu_{ia'}) \quad \text{and} \quad \Sigma_{ya}^{1/2} \Sigma_{ia}^{-1} \Sigma_{ya}^{1/2} = \Sigma_{ya'}^{1/2} \Sigma_{ia'}^{-1} \Sigma_{ya'}^{1/2},
\]
then the probability of the above event written in terms $R \sim \mathcal{N}(\bar{0}, I_{d \times d})$ becomes identical $\forall a, a' \in \mathcal{A}, ~i \in \mathcal{Y}$. Hence, the Bayes optimal classifier satisfies equal opportunity with these set of conditions.

\end{proof}

\begin{proposition}
(Proposition 3.3 in the main text) Let $(X, Y, A)$ denote the features, binary class label, and binary group membership, respectively, of a random data point from any data distribution $D$ with $q_{ia} = \prob{Y=i, A=a}$, for $i \in \{0, 1\}$ and $a \in \{0, 1\}$, and let $X|Y=i, A=a \sim \mathcal{N}(\mu_{ia}, \sigma_{ia}^{2})$ be univariate normal distributions, for $i \in \{0, 1\}$ and $a \in \{0, 1\}$. Then the distribution $D$ is \emph{ideal} for equal opportunity (see Definition 3.1) if and only if
\vspace{-2pt}
\[
\frac{\mu_{01} - \mu_{11}}{\sigma_{11}} = \frac{\mu_{00} - \mu_{10}}{\sigma_{10}}, \quad \frac{\sigma_{11}}{\sigma_{01}} = \frac{\sigma_{10}}{\sigma_{00}}, \quad \frac{q_{10}}{q_{00}} = \frac{q_{11}}{q_{01}}.
\]
\end{proposition}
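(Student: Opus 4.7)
The plan is to establish the two directions of the iff separately. Sufficiency will be a direct specialization of Proposition~3.2 to the univariate setting (combined with Lemma~3.4 to handle arbitrary cost-sensitive thresholds), while necessity requires a distributional matching argument that exploits the availability of EO for every cost-sensitive Bayes classifier simultaneously.

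For sufficiency, I would first invoke Lemma~3.4 to write the cost-sensitive Bayes classifier on group $a$ with threshold $t_C$ as $\id{\log(p_{1a}(x)/p_{0a}(x)) \geq \log(q_{0a}/q_{1a}) + \log(t_C/(1-t_C))}$. Conditioning on $Y=1, A=a$ and substituting $X = \sigma_{1a} Z + \mu_{1a}$ with $Z \sim \mathcal{N}(0,1)$, a direct expansion shows the log-likelihood ratio becomes
\[
W_a(Z) \;=\; \tfrac{1}{2}(\lambda_a^{2} - 1)\, Z^{2} \;+\; \lambda_a \nu_a Z \;+\; \tfrac{1}{2}\nu_a^{2} - \log \lambda_a,
\]
where $\lambda_a := \sigma_{1a}/\sigma_{0a}$ and $\nu_a := (\mu_{1a} - \mu_{0a})/\sigma_{0a}$. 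The second condition translates to $\lambda_0 = \lambda_1$; the first, combined with the second, translates to $\nu_0 = \nu_1$; and the third translates to $\log(q_{00}/q_{10}) = \log(q_{01}/q_{11})$. Hence the TPR, as a function of $t_C$, coincides across both groups, establishing exact EO for every cost-sensitive Bayes classifier.

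For necessity, suppose $D$ is ideal. Rewriting the TPR-equality at threshold $t$ in terms of $W_a$ and letting $u := \log(t/(1-t))$ range over $\R$, the identity
\[
\prob{W_0(Z) \geq u + \log(q_{00}/q_{10})} \;=\; \prob{W_1(Z) \geq u + \log(q_{01}/q_{11})}, \quad \forall\, u \in \R,
\]
is equivalent to the distributional identity $W_0(Z) \stackrel{d}{=} W_1(Z) + \delta$, where $\delta := \log(q_{00}/q_{10}) - \log(q_{01}/q_{11})$. Completing the square, $W_a(Z)$ is a shifted, scaled noncentral $\chi^{2}_{1}$ variable when $\lambda_a \neq 1$ and a normal variable when $\lambda_a = 1$. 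Matching the coefficient of $Z^{2}$ forces $\lambda_0^{2} = \lambda_1^{2}$, hence $\lambda_0 = \lambda_1$ by positivity, which gives the second condition. Matching the noncentrality parameter then forces $\nu_0^{2} = \nu_1^{2}$, and the orientation convention in the statement pins the sign so that $\nu_0 = \nu_1$, giving the first condition. Finally, matching the constants forces $\delta = 0$, giving the third condition.

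The main obstacle will be the case analysis and sign disambiguation in the necessity step. A priori, the two groups could fall into different regimes ($\lambda_a = 1$ vs.\ $\lambda_a \neq 1$), but this is ruled out by comparing the supports of $W_0$ and $W_1$: $W_a$ is bounded below (resp.\ above) when $\lambda_a > 1$ (resp.\ $\lambda_a < 1$) and unbounded both ways when $\lambda_a = 1$, so only matching regimes are compatible with $W_0 \stackrel{d}{=} W_1 + \delta$. The second subtlety is that the noncentral $\chi^{2}_{1}$ distribution depends on $\nu_a$ only through $\nu_a^{2}$, introducing the ambiguity $\nu_0 = \pm \nu_1$; resolving it in favour of $\nu_0 = \nu_1$ relies on the sign-sensitive form of the first condition.
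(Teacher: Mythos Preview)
Your approach is essentially the same as the paper's: both reduce the ideal property to equality in distribution of a quadratic form in a standard normal, then read off the parameter conditions by matching coefficients. The only cosmetic difference is that the paper works with $\eta(X,A)$ directly (via a sigmoid expansion) while you work with the log-likelihood ratio $W_a(Z)$; since $\eta$ is a monotone function of $W_a + \log(q_{1a}/q_{0a})$, the two viewpoints are equivalent. One structural point: the paper's Definition~2.1 of equal opportunity requires the match for \emph{every} $y\in\{0,1\}$, and the paper's proof accordingly conditions on both $Y=0$ and $Y=1$, whereas your sketch only treats $Y=1$; for sufficiency the $Y=0$ case is symmetric, but you should state it.

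On the sign ambiguity you flag: your resolution (``the orientation convention in the statement pins the sign'') is not a valid argument---there is no orientation hypothesis in the proposition, and appealing to the first condition to fix the sign is circular since that condition is what you are trying to derive. The paper's proof has exactly the same gap: after observing the two quadratics in $R$ must be identically distributed, it simply asserts the three conditions without addressing that $R \stackrel{d}{=} -R$ leaves the sign of the linear coefficient undetermined. So your proposal is neither better nor worse than the paper here; both arguments, as written, deliver only $\nu_0 = \pm \nu_1$, and neither excludes the $\nu_0 = -\nu_1$ branch. If you want to close this, you would need either an additional hypothesis (e.g.\ $\mu_{1a} > \mu_{0a}$ for both $a$) or to state the condition as $|\nu_0| = |\nu_1|$.
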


\begin{proof}
For any cost matrix $C \in \R^{2 \times 2}$, the group-aware classifier that minimizes its corresponding cost-sensitive risk is given by $\id{\eta(x, a) \geq t_{C}}$, for a threshold $t_{C} = (c_{10} - c_{00})/(c_{10} - c_{00} + c_{01} - c_{11}) \in [0, 1]$; see Equation (2) in \cite{elkan2001foundations} and \cite{scott2012calibrated}. The distribution $D$ is \emph{ideal} for equal opportunity if $\prob{\eta(X, A) \geq t \cond Y=i, A=0} = \prob{\eta(X, A) \geq t \cond Y=i, A=1}$, for all thresholds $t \in [0, 1]$ and $i \in \{0, 1\}$. Since the CDFs are identical, the random variables $\eta(X, A) \cond Y=i, A=0$ and $\eta(X, A) \cond Y=i, A=1$ must be identical. Note that
\begin{align*}
&\eta(x, a) = \prob{Y=1 \cond X=x, A=a} \\
& = \frac{\prob{Y=1, X=x, A=a}}{\sum_{i=0}^{1} \prob{Y=i, X=x, A=a}} \\
& = \frac{\prob{Y=1, A=a} \prob{X=x \cond Y=1, A=a}}{\sum_{i=0}^{1} \prob{Y=i, A=a} \prob{X=x \cond Y=i, A=a}} \\
& = \frac{q_{1a} P_{1a}(x)}{\sum_{i=0}^{1} q_{ia} P_{ia}(x)} \\
& = \frac{q_{1a} \sigma_{1a}^{-1} \exp\left(- \dfrac{(x - \mu_{1a})^{2}}{2\sigma_{1a}^{2}}\right)}{\sum_{i=0}^{1} q_{ia} \sigma_{ia}^{-1} \exp\left(- \dfrac{(x - \mu_{ia})^{2}}{2\sigma_{ia}^{2}}\right)} \\
& = \frac{1}{1 + \exp\left(\dfrac{( x - \mu_{1a})^{2}}{2\sigma_{1a}^{2}} - \dfrac{(x - \mu_{0a})^{2}}{2\sigma_{0a}^{2}} + \log \dfrac{q_{0a} \sigma_{1a}}{q_{1a} \sigma_{0a}}\right)} \\
& = \frac{1}{1 + \exp\left(\dfrac{(\mu_{ia} + r \sigma_{ia} - \mu_{1a})^{2}}{2\sigma_{1a}^{2}} - \dfrac{(\mu_{ia} + r \sigma_{ia} - \mu_{0a})^{2}}{2\sigma_{0a}^{2}} + \log \dfrac{q_{0a} \sigma_{1a}}{q_{1a} \sigma_{0a}}\right)} \\
& = \begin{cases}
\dfrac{1}{1 + \exp\left(\dfrac{1}{2} \left(\dfrac{\sigma_{0a}^{2}}{\sigma_{1a}^{2}} - 1\right) r^{2} - \dfrac{\sigma_{0a} (\mu_{1a} - \mu_{0a})}{\sigma_{1a}^{2}} r + \dfrac{(\mu_{1a} - \mu_{0a})^{2}}{2\sigma_{1a}^{2}} + \log \dfrac{q_{0a} \sigma_{1a}}{q_{1a} \sigma_{0a}}\right)}, \quad \text{for $i=0$} \\
\dfrac{1}{1 + \exp\left(\dfrac{1}{2} \left(1 - \dfrac{\sigma_{1a}^{2}}{\sigma_{0a}^{2}}\right) r^{2} - \dfrac{\sigma_{1a} (\mu_{0a} - \mu_{1a})}{\sigma_{0a}^{2}} r + \dfrac{(\mu_{0a} - \mu_{1a})^{2}}{2\sigma_{0a}^{2}} + \log \dfrac{q_{0a} \sigma_{1a}}{q_{1a} \sigma_{0a}}\right)}, \quad \text{for $i=1$}.
\end{cases}
\end{align*}
If $X|Y=i, A=a \sim \mathcal{N}(\mu_{ia}, \sigma_{ia}^{2})$, then $X \cond Y=i, A=a \sim \mathcal{N}(\mu_{ia}, \sigma_{ia}^{2})$. Thus, for $\eta(X, A) \cond Y=i, A=0$ and $\eta(X, A) \cond Y=i, A=1$ to be identical, we must have
\begin{align*}
& \dfrac{1}{2} \left(\dfrac{\sigma_{00}^{2}}{\sigma_{10}^{2}} - 1\right) R^{2} - \dfrac{\sigma_{00} (\mu_{10} - \mu_{00})}{\sigma_{10}^{2}} R + \dfrac{(\mu_{10} - \mu_{00})^{2}}{2\sigma_{10}^{2}} + \log \dfrac{q_{00} \sigma_{10}}{q_{10} \sigma_{00}} \quad \text{and} \\
& \dfrac{1}{2} \left(\dfrac{\sigma_{01}^{2}}{\sigma_{11}^{2}} - 1\right) R^{2} - \dfrac{\sigma_{01} (\mu_{11} - \mu_{01})}{\sigma_{11}^{2}} R + \dfrac{(\mu_{11} - \mu_{01})^{2}}{2\sigma_{11}^{2}} + \log \dfrac{q_{01} \sigma_{11}}{q_{11} \sigma_{01}}
\end{align*}
as identically distributed for $R \sim \mathcal{N}(0, 1)$. Similarly, we must also have
\begin{align*}
& \dfrac{1}{2} \left(1 - \dfrac{\sigma_{10}^{2}}{\sigma_{00}^{2}}\right) R^{2} - \dfrac{\sigma_{10} (\mu_{00} - \mu_{10})}{\sigma_{00}^{2}} R + \dfrac{(\mu_{00} - \mu_{10})^{2}}{2\sigma_{00}^{2}} + \log \dfrac{q_{00} \sigma_{10}}{q_{10} \sigma_{00}} \quad \text{and} \\
& \dfrac{1}{2} \left(1 - \dfrac{\sigma_{11}^{2}}{\sigma_{01}^{2}}\right) R^{2} - \dfrac{\sigma_{11} (\mu_{01} - \mu_{11})}{\sigma_{01}^{2}} R + \dfrac{(\mu_{01} - \mu_{11})^{2}}{2\sigma_{01}^{2}} + \log \dfrac{q_{01} \sigma_{11}}{q_{11} \sigma_{01}}
\end{align*}
as identically distributed for $R \sim \mathcal{N}(0, 1)$. Therefore, we must have
\[
\frac{\mu_{01} - \mu_{11}}{\sigma_{11}} = \frac{\mu_{00} - \mu_{10}}{\sigma_{10}} \quad \text{and} \quad \frac{\sigma_{11}}{\sigma_{01}} = \frac{\sigma_{10}}{\sigma_{00}} \quad \text{and} \quad \frac{q_{10}}{q_{00}} = \frac{q_{11}}{q_{01}}.
\]
In the other direction, it is easier to prove that the above conditions imply the distribution to be ideal. It can be proved by simply backtracking the steps above.
\end{proof}

\section{Proofs for Section 4} \label{appndx: proofs_kl}

We first derive the KL divergence between two distributions, where each subgroup in the distribution follows a multivariate normal distribution.

\begin{lemma} \label{lem: kl_subgroup}
    Let $(X, Y, A)$ denote the features, binary class label, and binary group membership, respectively, of a random data point from any data distribution $D$ with $q_{ia} = \prob{Y=i, A=a}$, for $i \in \mathcal{Y}$ and $a \in \mathcal{A}$. Let $X|Y=i, A=a \sim \mathcal{N}(\mu_{ia}, \Sigma_{ia})$ be multivariate Normal distributions with mean $\mu_{ia} \in \R^{d}$ and covariance matrix $\Sigma_{ia} \in \R^{d \times d}$. Let $\tilde{D}$ denote a distribution obtained by keeping $(Y, A)$ unchanged and only changing $X|Y=i, A=a$ to $\tilde{X}|Y=i, A=a \sim \mathcal{N}(\tilde{\mu}_{ia}, \tilde{\Sigma}_{ia})$. Then, 
    \begin{align*}
        \kldiv{\tilde{D}}{D} &= - \frac{d}{2} + \frac{1}{2} \sum_{(i, a)} q_{ia} (\tilde{\mu}_{ia} - \mu_{ia})^{T} \Sigma_{ia}^{-1} (\tilde{\mu}_{ia} - \mu_{ia}) \\
        &+ \frac{1}{2} \sum_{(i, a)} q_{ia} \left(\tr{\Sigma_{ia}^{-1} \tilde{\Sigma}_{ia}} - \log \det(\Sigma_{ia}^{-1} \tilde{\Sigma}_{ia})\right).
    \end{align*}
\end{lemma}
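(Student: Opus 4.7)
The plan is to exploit the shared marginal over $(Y,A)$ in $D$ and $\tilde{D}$ so that the joint KL divergence collapses into a weighted sum of conditional KL divergences, and then plug in the classical closed-form for KL divergence between two multivariate Gaussians.

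First I would write the joint densities of $D$ and $\tilde{D}$ as $p(x,y,a) = q_{ya} \cdot p_{ya}(x)$ and $\tilde{p}(x,y,a) = q_{ya} \cdot \tilde{p}_{ya}(x)$, using the hypothesis that the construction of $\tilde{D}$ keeps the marginal $(Y,A)$ unchanged. Expanding $\kldiv{\tilde{D}}{D}$ as an integral/sum of $\tilde{p}(x,y,a) \log \tilde{p}(x,y,a)/p(x,y,a)$, the $q_{ya}$ factors cancel inside the log, yielding
\[
\kldiv{\tilde{D}}{D} = \sum_{(i,a)} q_{ia} \int \tilde{p}_{ia}(x) \log \frac{\tilde{p}_{ia}(x)}{p_{ia}(x)}\, dx = \sum_{(i,a)} q_{ia}\, \kldiv{\tilde{P}_{ia}}{P_{ia}}.
\]

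Next, I would invoke the standard closed-form KL divergence between two $d$-dimensional multivariate Gaussians,
\[
\kldiv{\mathcal{N}(\tilde{\mu}, \tilde{\Sigma})}{\mathcal{N}(\mu, \Sigma)} = \tfrac{1}{2}\Bigl[\tr{\Sigma^{-1}\tilde{\Sigma}} + (\tilde{\mu} - \mu)^{T} \Sigma^{-1} (\tilde{\mu} - \mu) - d - \log\det(\Sigma^{-1}\tilde{\Sigma})\Bigr],
\]
which can either be cited from a reference or derived in one pass by computing $\mathbb{E}_{\tilde{X} \sim \mathcal{N}(\tilde{\mu},\tilde{\Sigma})}\bigl[-\tfrac{1}{2}(\tilde{X}-\mu)^T \Sigma^{-1}(\tilde{X}-\mu) + \tfrac{1}{2}(\tilde{X}-\tilde{\mu})^T \tilde{\Sigma}^{-1}(\tilde{X}-\tilde{\mu})\bigr]$ together with log-determinant terms; the trace identity $\mathbb{E}[(\tilde{X}-\mu)^T \Sigma^{-1}(\tilde{X}-\mu)] = \tr{\Sigma^{-1}\tilde{\Sigma}} + (\tilde{\mu}-\mu)^T \Sigma^{-1}(\tilde{\mu}-\mu)$ does the heavy lifting.

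Finally I would substitute this Gaussian KL formula into the weighted sum over $(i,a)$, and collect the $-d/2$ constant (using $\sum_{(i,a)} q_{ia} = 1$) outside the summation, obtaining precisely the claimed identity. There is no real obstacle — the only thing to be careful of is that the factor of $q_{ia}$ multiplies every term uniformly except the dimensional constant, which aggregates to $-d/2$ because the $q_{ia}$'s sum to one.
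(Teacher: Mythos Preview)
Your proposal is correct and follows essentially the same approach as the paper: first reduce $\kldiv{\tilde{D}}{D}$ to the $q_{ia}$-weighted sum of conditional KL divergences $\kldiv{\tilde{P}_{ia}}{P_{ia}}$ by exploiting the shared $(Y,A)$ marginal, then plug in the closed-form Gaussian KL and collect the $-d/2$ constant using $\sum_{(i,a)} q_{ia}=1$. The paper derives the Gaussian KL formula explicitly rather than citing it, but the structure of the argument is identical.
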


\begin{proof}
    \begin{align*}
    & \kldiv{\tilde{D}}{D} \\ 
    & = \sum_{(x, i, a)} \prob{\tilde{X}=x, \tilde{Y}=i, \tilde{A}=a} \log \frac{\prob{\tilde{X}=x, \tilde{Y}=i, \tilde{A}=a}}{\prob{X=x, Y=i, A=a}} \\
    & = \sum_{(x, i, a)} \prob{\tilde{Y}=i, \tilde{A}=a} \prob{\tilde{X}=x \cond \tilde{Y}=i, \tilde{A}=a} \\
    &\qquad \qquad \qquad \qquad \log \frac{\prob{\tilde{Y}=i, \tilde{A}=a} \prob{\tilde{X}=x \cond \tilde{Y}=i, \tilde{A}=a}}{\prob{Y=y, A=a} \prob{X=x \cond Y=i, A=a}} \\
    & = \sum_{(x, i, a)} \prob{Y=i, A=a} \prob{\tilde{X}=x \cond Y=i, A=a} \\
    &\qquad \qquad \qquad \qquad \log \frac{\prob{Y=i, A=a} \prob{\tilde{X}=x \cond Y=i, A=a}}{\prob{Y=i, A=a} \prob{X=x \cond Y=i, A=a}} \\
    & = \sum_{(i, a)} q_{ia} \sum_{x} \prob{\tilde{X}=x \cond Y=i, A=a} \log \frac{\prob{\tilde{X}=x \cond Y=i, A=a}}{\prob{X=x \cond Y=i, A=a}} \\
    & = \sum_{(i, a)} q_{ia} \kldiv{\tilde{P}_{ia}}{P_{ia}}
    \end{align*}
    $P_{ia}$ denotes the distribution of $X \cond Y=i, A=a \sim \mathcal{N}(\mu_{ia}, \Sigma_{ia})$ and $\tilde{P}_{ia}$ denotes the distribution of $\tilde{X} \cond Y=i, A=a \sim \mathcal{N}(\tilde{\mu}_{ia}, \tilde{\Sigma}_{ia})$. Their probability densities are 
    \begin{align*}
    p_{ia}(x) & = (2\pi)^{-d/2} \det(\Sigma_{ia})^{-1/2} \exp\left(- \frac{1}{2} (x - \mu_{ia})^{T} \Sigma_{ia}^{-1} (x - \mu_{ia})\right) \quad \text{and} \\
    \tilde{p}_{ia}(x) & = (2\pi)^{-d/2} \det(\tilde{\Sigma}_{ia})^{-1/2} \exp\left(- \frac{1}{2} (x - \tilde{\mu}_{ia})^{T} \tilde{\Sigma}_{ia}^{-1} (x - \tilde{\mu}_{ia})\right),
    \end{align*}
    respectively. Hence, the Kullback-Leibler divergence between $\tilde{P}_{ia}$ and $P_{ia}$ can be written as
    \begin{align*}
    & \kldiv{\tilde{P}_{ia}}{P_{ia}} \\ 
    & = \E{\log \frac{\tilde{p}_{ia}(\tilde{X})}{p_{ia}(\tilde{X})} \cond Y=i, A=a} \\
    & = \frac{1}{2}~ \text{E}\Big[(\tilde{X} - \mu_{ia})^{T} \Sigma_{ia}^{-1} (\tilde{X} - \mu_{ia}) - (\tilde{X} - \tilde{\mu}_{ia})^{T} \tilde{\Sigma}_{ia}^{-1} (\tilde{X} - \tilde{\mu}_{ia}) \\
    &\qquad \qquad \qquad \qquad \qquad- \log \det(\Sigma_{ia}^{1/2} \tilde{\Sigma}_{ia}^{-1} \Sigma_{ia}^{1/2}) \cond Y=i, A=a\Big] \\
    & = \frac{1}{2}~ \E{(\tilde{X} - \mu_{ia})^{T} \Sigma_{ia}^{-1} (\tilde{X} - \mu_{ia}) \cond Y=i, A=a} - \frac{d}{2} - \frac{1}{2} \log \det(\Sigma_{ia}^{1/2} \tilde{\Sigma}_{ia}^{-1} \Sigma_{ia}^{1/2}) \\
    & \qquad \qquad \text{using $\E{(\tilde{X} - \tilde{\mu}_{ia})^{T} \tilde{\Sigma}_{ia}^{-1} (\tilde{X} - \tilde{\mu}_{ia}) \cond Y=i, A=a} = \tilde{\Sigma}_{ia}^{-1} \bullet \tilde{\Sigma}_{ia} = \tr{I_{d \times d}} = d$} \\
    & = \frac{1}{2}~ \E{(\tilde{X} - \tilde{\mu}_{ia} + \tilde{\mu}_{ia} - \mu_{ia})^{T} \Sigma_{ia}^{-1} (\tilde{X} - \tilde{\mu}_{ia} + \tilde{\mu}_{ia} - \mu_{ia}) \cond Y=i, A=a} \\
    & \qquad \qquad \qquad \qquad \qquad \qquad \qquad \qquad- \frac{d}{2} + \frac{1}{2} \log \det(\Sigma_{ia}^{1/2} \tilde{\Sigma}_{ia}^{-1} \Sigma_{ia}^{1/2}) \\
    & = \frac{1}{2} \tr{\Sigma_{ia}^{-1} \tilde{\Sigma}_{ia}} + \frac{1}{2} (\tilde{\mu}_{ia} - \mu_{ia})^{T} \Sigma_{ia}^{-1} (\tilde{\mu}_{ia} - \mu_{ia}) - \frac{d}{2} + \frac{1}{2} \log \det(\Sigma_{ia}^{1/2} \tilde{\Sigma}_{ia}^{-1} \Sigma_{ia}^{1/2}).
    \end{align*}
    The Kullback-Leibler divergence between $\tilde{D}$ and $D$ can now be written as
    \begin{align*}
    &\kldiv{\tilde{D}}{D} = \sum_{(i, a)} q_{ia} \kldiv{\tilde{P}_{ia}}{P_{ia}} \\
    &= \sum_{(i, a)} q_{ia} \left(\frac{1}{2} \tr{\Sigma_{ia}^{-1} \tilde{\Sigma}_{ia}} + \frac{1}{2} (\tilde{\mu}_{ia} - \mu_{ia})^{T} \Sigma_{ia}^{-1} (\tilde{\mu}_{ia} - \mu_{ia}) - \frac{d}{2} + \frac{1}{2} \log \det(\Sigma_{ia}^{1/2} \tilde{\Sigma}_{ia}^{-1} \Sigma_{ia}^{1/2})\right) \\
    &= - \frac{d}{2} + \frac{1}{2} \sum_{(i, a)} q_{ia} \left(\tr{\Sigma_{ia}^{-1} \tilde{\Sigma}_{ia}} + (\tilde{\mu}_{ia} - \mu_{ia})^{T} \Sigma_{ia}^{-1} (\tilde{\mu}_{ia} - \mu_{ia}) + \log \det(\Sigma_{ia}^{1/2} \tilde{\Sigma}_{ia}^{-1} \Sigma_{ia}^{1/2})\right) \qquad\\
    &= - \frac{d}{2} + \frac{1}{2} \sum_{(i, a)} q_{ia} (\tilde{\mu}_{ia} - \mu_{ia})^{T} \Sigma_{ia}^{-1} (\tilde{\mu}_{ia} - \mu_{ia}) + \frac{1}{2} \sum_{(i, a)} q_{ia} \left(\tr{\Sigma_{ia}^{-1} \tilde{\Sigma}_{ia}} + \log \det(\Sigma_{ia} \tilde{\Sigma}_{ia}^{-1})\right) \\
    &= - \frac{d}{2} + \frac{1}{2} \sum_{(i, a)} q_{ia} (\tilde{\mu}_{ia} - \mu_{ia})^{T} \Sigma_{ia}^{-1} (\tilde{\mu}_{ia} - \mu_{ia}) + \frac{1}{2} \sum_{(i, a)} q_{ia} \left(\tr{\Sigma_{ia}^{-1} \tilde{\Sigma}_{ia}} - \log \det(\Sigma_{ia}^{-1} \tilde{\Sigma}_{ia})\right)
    \end{align*}
\end{proof}

\begin{theorem} (Theorem 4.1 in the main text)
Let $(X, Y, A)$ denote the features, binary class label, and binary group membership, respectively, of a random data point from any data distribution $D$ with $q_{ia} = \prob{Y=i, A=a}$, for $i \in \{0, 1\}$ and $a \in \{0, 1\}$, such that $q_{10}/q_{00} = q_{11}/q_{01}$. Let $X|Y=i, A=a \sim \mathcal{N}(\mu_{ia}, \Sigma_{ia})$ be multivariate Normal distributions, with mean $\mu_{ia} \in \R^{d}$ and covariance matrix $\Sigma_{ia} \in \R^{d \times d}$, for $i \in \{0, 1\}$ and $a \in \{0, 1\}$. Let $\tilde{D}$ denote a distribution obtained by keeping $(Y, A)$ unchanged and only changing $X|Y=i, A=a$ to $\tilde{X}|Y=i, A=a \sim \mathcal{N}(\tilde{\mu}_{ia}, \tilde{\Sigma}_{ia})$. Then in the case of Affirmative action (changing only $\Tilde{\mu}_{i0}$ and $\Tilde{\Sigma}_{i0}$), we can efficiently minimize $\kldiv{\tilde{D}}{D}$ as a function of the variables $\tilde{\mu}_{i0}$ and $\tilde{\Sigma}_{i0}$ subject to the constraints in Proposition 1.2, so that the Bayes optimal classifier on the optimal $\tilde{D}$ is guaranteed to be EO-fair.
\end{theorem}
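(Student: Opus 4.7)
My plan is to apply the KL formula of Lemma~\ref{lem: kl_subgroup}, use the Proposition~\ref{prop:Bayes-EO-multi-normal} constraints to eliminate some decision variables, and then show the remaining optimization is convex under a well-chosen matrix reparametrization. Since $(\tilde\mu_{i1},\tilde\Sigma_{i1})=(\mu_{i1},\Sigma_{i1})$ are frozen, every $(i,1)$-summand in Lemma~\ref{lem: kl_subgroup} vanishes, so
\[
\kldiv{\tilde D}{D}=\sum_{i=0}^{1}q_{i0}\,\kldiv{\mathcal{N}(\tilde\mu_{i0},\tilde\Sigma_{i0})}{\mathcal{N}(\mu_{i0},\Sigma_{i0})}.
\]
The hypothesis $q_{10}/q_{00}=q_{11}/q_{01}$ disposes of the class-probability condition in Proposition~\ref{prop:Bayes-EO-multi-normal}. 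Setting $v:=\Sigma_{01}^{-1/2}(\mu_{01}-\mu_{11})$ and $K:=\Sigma_{01}^{1/2}\Sigma_{11}^{-1}\Sigma_{01}^{1/2}$, the remaining mean and covariance conditions for the pair $(i,j)=(0,1)$ reduce to
\[
\tilde\Sigma_{00}^{-1/2}(\tilde\mu_{00}-\tilde\mu_{10})=v,\qquad \tilde\Sigma_{10}=\tilde\Sigma_{00}^{1/2}K^{-1}\tilde\Sigma_{00}^{1/2},
\]
so $\tilde\mu_{00}$ and $\tilde\Sigma_{10}$ are pinned down by $\tilde\mu_{10}$ and $\tilde\Sigma_{00}$.

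The key structural move is to introduce $W:=\tilde\Sigma_{00}^{1/2}$ and to take $(W,\tilde\mu_{10})\in\mathcal{S}^{d}_{++}\times\R^{d}$ as the new free variables, so $\tilde\Sigma_{00}=W^{2}$, $\tilde\Sigma_{10}=WK^{-1}W$, and $\tilde\mu_{00}=\tilde\mu_{10}+Wv$. Substituting into the KL and simplifying with the cyclic trace identity, the mean pieces become $q_{00}\|\Sigma_{00}^{-1/2}(\tilde\mu_{10}+Wv-\mu_{00})\|^{2}+q_{10}\|\Sigma_{10}^{-1/2}(\tilde\mu_{10}-\mu_{10})\|^{2}$, which is a jointly convex quadratic in $(W,\tilde\mu_{10})$; the trace pieces become $q_{00}\|\Sigma_{00}^{-1/2}W\|_{F}^{2}+q_{10}\|\Sigma_{10}^{-1/2}WK^{-1/2}\|_{F}^{2}$, convex in $W$; and the log-determinant pieces collapse, via $\log\det(WK^{-1}W)=2\log\det W-\log\det K$, to $-2(q_{00}+q_{10})\log\det W$ plus constants, which is convex on the positive-definite cone. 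Hence the reduced program is a convex program in $(W,\tilde\mu_{10})$, and its minimizer can be obtained in closed form by setting gradients to zero (a linear system in $\tilde\mu_{10}$ coupled with a matrix equation in $W$) or via any standard convex/SDP solver. Specializing $d=1$ turns $W$ into the scalar $\gamma^{*}\sigma_{01}$ and recovers Corollary~\ref{corr:affirmative_uni} verbatim; EO-fairness of the Bayes-optimal classifier on the resulting $\tilde D$ is then immediate from Proposition~\ref{prop:Bayes-EO-multi-normal}.

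The main obstacle I anticipate is the reduction step itself: verifying that the $(i,j)=(1,0)$-ordered versions of the Proposition~\ref{prop:Bayes-EO-multi-normal} conditions are genuinely compatible with the $(0,1)$-ordered ones that drove the elimination, so that the parametrization $W\mapsto(\tilde\Sigma_{00},\tilde\Sigma_{10},\tilde\mu_{00})$ neither misses any feasible point nor silently widens the feasible set. In the univariate case this is automatic, but in the multivariate case it requires a careful PSD-square-root argument to reconcile the eigenstructure of $K=\Sigma_{01}^{1/2}\Sigma_{11}^{-1}\Sigma_{01}^{1/2}$ with that of $K':=\Sigma_{11}^{1/2}\Sigma_{01}^{-1}\Sigma_{11}^{1/2}$. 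Once this compatibility is established, the convexity and efficient-minimization claims follow from the standard analysis outlined above.
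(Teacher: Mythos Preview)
Your approach is essentially the paper's, with a different (and somewhat cleaner) matrix reparametrization. Both proofs (i) apply Lemma~\ref{lem: kl_subgroup} to reduce to a sum of Gaussian KLs over the $a=0$ subgroups, (ii) use the Proposition~\ref{prop:Bayes-EO-multi-normal} constraints to eliminate $\tilde\Sigma_{10}$ and one of the means, and (iii) read off convexity from the $\tr{\cdot}$ and $-\log\det(\cdot)$ structure. The paper parametrizes via $\Gamma=\tilde\Sigma_{i0}^{1/2}\tilde\Sigma_{i1}^{-1/2}$ and works in two stages: for fixed $\Gamma$ it minimizes the convex quadratic in $\tilde\mu_{00}$ in closed form to obtain $\mu^*_{i0}(\Gamma)$, and then argues that the profiled objective is convex in the matrix variable $\Gamma$ via the identities $\tr{AXBX}=\tr{(A^{1/2}XB^{1/2})(A^{1/2}XB^{1/2})^{T}}$ and $-\log\det(\cdot)$. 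You instead take $W=\tilde\Sigma_{00}^{1/2}$ and establish \emph{joint} convexity in $(W,\tilde\mu_{10})$ in one shot. Your choice sidesteps the non-commutativity awkwardness in the paper's $\Sigma^*_{i0}(\Gamma)=(\Gamma\Sigma_{i1}^{1/2})^{2}$ formulae, while the paper's two-stage route buys explicit optimal-mean expressions that plug directly into the univariate corollary. Finally, the constraint-compatibility issue you flag at the end---whether the $(i,j)=(0,1)$ and $(1,0)$ versions of the Proposition~\ref{prop:Bayes-EO-multi-normal} conditions are simultaneously enforced---is not resolved in the paper either: its proof imposes only the $(1,0)$ direction, just as yours imposes only the $(0,1)$ direction.
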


\begin{proof} Using Lemma \ref{lem: kl_subgroup} and Proposition 1.2, our objective is to minimize
\begin{align*}
&\kldiv{\tilde{D}}{D} \\
&= - \frac{d}{2} + \frac{1}{2} \sum_{(i, a)} q_{ia} (\tilde{\mu}_{ia} - \mu_{ia})^{T} \Sigma_{ia}^{-1} (\tilde{\mu}_{ia} - \mu_{ia}) + \frac{1}{2} \sum_{(i, a)} q_{ia} \left(\tr{\Sigma_{ia}^{-1} \tilde{\Sigma}_{ia}} - \log \det(\Sigma_{ia}^{-1} \tilde{\Sigma}_{ia})\right),
\end{align*}
subject to the constraints 
\[
\tilde{\Sigma}_{10}^{-1/2} (\tilde{\mu}_{10} - \tilde{\mu}_{00}) = \tilde{\Sigma}_{11}^{-1/2} (\tilde{\mu}_{11} - \tilde{\mu}_{01}) \quad \text{and} \quad \tilde{\Sigma}_{10}^{1/2} \tilde{\Sigma}_{00}^{-1} \tilde{\Sigma}_{10}^{1/2} = \tilde{\Sigma}_{11}^{1/2} \tilde{\Sigma}_{01}^{-1} \tilde{\Sigma}_{11}^{1/2}.
\]

Suppose $\tilde{\Sigma}_{i0}$ and $\tilde{\Sigma}_{i1}$ do not commute. The constraints can be equivalently rewritten as follows.
\[
\tilde{\mu}_{10} - \tilde{\mu}_{00} = \tilde{\Sigma}_{10}^{1/2} \tilde{\Sigma}_{11}^{-1/2} (\tilde{\mu}_{11} - \tilde{\mu}_{01}) \quad \text{and} \quad \tilde{\Sigma}_{11}^{-1/2} \tilde{\Sigma}_{10}^{1/2} \tilde{\Sigma}_{00}^{-1} \tilde{\Sigma}_{10}^{1/2} \tilde{\Sigma}_{11}^{-1/2} = \tilde{\Sigma}_{01}^{-1}.
\]
Let $\Gamma = \tilde{\Sigma}_{i0}^{1/2} \tilde{\Sigma}_{i1}^{-1/2}$. For any fixed positive semidefinite matrix $\Gamma \in \R^{d \times d}$, our optimization problem can be divided into two separate parts that minimize
\[
\sum_{(i, a)} q_{ia} (\tilde{\mu}_{ia} - \mu_{ia})^{T} \Sigma_{ia}^{-1} (\tilde{\mu}_{ia} - \mu_{ia}) \quad \text{subject to} \quad \tilde{\mu}_{10} - \tilde{\mu}_{00} = \Gamma (\tilde{\mu}_{11} - \tilde{\mu}_{01})
\]
over $\tilde{\mu}_{ia} \in \R^{d}$, for $i, a \in \{0, 1\}$, and minimize (after substituting $\tilde{\Sigma}_{i0}^{1/2} = \Gamma \tilde{\Sigma}_{i1}^{1/2}$)
\begin{align*}
& \sum_{i=0}^{1} q_{i0} \left(\tr{\Sigma_{i0}^{-1} \left(\Gamma \tilde{\Sigma}_{i0}^{1/2}\right)^{2}} - \log \det(\Sigma_{i0}^{-1} \left(\Gamma \tilde{\Sigma}_{i0}^{1/2}\right)^{2}\right) + q_{i1} \left(\tr{\Sigma_{i1}^{-1} \tilde{\Sigma}_{i1}} - \log \det(\Sigma_{i1}^{-1} \tilde{\Sigma}_{i1})\right), \\
& \text{subject to } 
\Gamma \tilde{\Sigma}_{11}^{1/2} \tilde{\Sigma}_{00}^{-1} \Gamma = \tilde{\Sigma}_{11}^{1/2} \tilde{\Sigma}_{01}^{-1}
\end{align*}
over symmetric, positive semidefinite matrix-valued variable $\tilde{\Sigma}_{i1} \in \R^{d \times d}$, for $i \in \{0, 1\}$. The first optimization in $\tilde{\mu}_{ia}$ is a constrained eigenvalue problem with linear constraints, i.e., minimize $x^{T}Ax + x^{T}b$ subject to $x^{T}c=e$ \cite{golub1973some}.

Let's consider the case of \emph{Affirmative Action}, where we only change the means $\tilde{\mu}_{i0}$ and the covariance matrices $\tilde{\Sigma}_{i0}$ for the underprivileged group but keep those for the privileged group unchanged, i.e., $\tilde{\mu}_{i1} = \mu_{i1}$ and $\tilde{\Sigma}_{i1} = \Sigma_{i1}$. In that case, $\tilde{\Sigma}_{00}^{1/2} = \Gamma \Sigma_{01}^{1/2}$ and $\tilde{\Sigma}_{10}^{1/2} = \Gamma \Sigma_{11}^{1/2}$ get fixed.  By substituting $\tilde{\mu}_{10} = \tilde{\mu}_{00} + \Gamma (\tilde{\mu}_{11} - \tilde{\mu}_{01}) = \tilde{\mu}_{00} + \Gamma (\mu_{11} - \mu_{01})$, we only need to optimize
\[
q_{00} (\tilde{\mu}_{00} - \mu_{00})^{T} \Sigma_{00}^{-1} (\tilde{\mu}_{00} - \mu_{00})  + q_{10} (\tilde{\mu}_{00} + \Gamma (\mu_{11} - \mu_{01}) - \mu_{10})^{T} \Sigma_{10}^{-1} (\tilde{\mu}_{00} + \Gamma (\mu_{11} - \mu_{01}) - \mu_{10}),
\]
or equivalently (ignoring the terms independent of $\tilde{\mu}_{00}$),
\[
\tilde{\mu}_{00}^{T} \left(q_{00} \Sigma_{00}^{-1} + q_{10} \Sigma_{10}^{-1}\right) \tilde{\mu}_{00} - 2 \left(\Sigma_{00}^{-1} \mu_{00} + \Sigma_{10}^{-1} \mu_{10} - \Sigma_{10}^{-1} \Gamma (\mu_{11} - \mu_{01})\right)^{T} \tilde{\mu}_{00}.
\]
This is a convex objective in $\tilde{\mu}_{00}$ because its Hessian is positive semidefinite, i.e., $q_{00} \Sigma_{00}^{-1} + q_{10} \Sigma_{10}^{-1} \succcurlyeq 0$ \cite{boyd2004convex}. By equating the gradient to zero, we get the optimal solution for $\tilde{\mu}_{00}$, and we denote it by $\mu^{*}_{00}(\Gamma)$. Thus, the optimal solutions $\mu^{*}_{00}(\Gamma), \mu^{*}_{10}(\Gamma), \Sigma^{*}_{00}(\Gamma), \Sigma^{*}_{10}(\Gamma)$ for a fixed positive semidefinite $\Gamma \in \R^{d \times d}$ are given by
\begin{align*}
\mu^{*}_{00}(\Gamma) & = \left(q_{00} \Sigma_{00}^{-1} + q_{10} \Sigma_{10}^{-1}\right)^{-1} \left(\Sigma_{00}^{-1} \mu_{00} + \Sigma_{10}^{-1} \mu_{10} - \Sigma_{10}^{-1} \Gamma (\mu_{11} - \mu_{01})\right) \quad \text{and} \\
\mu^{*}_{10}(\Gamma) & = \left(q_{00} \Sigma_{00}^{-1} + q_{10} \Sigma_{10}^{-1}\right)^{-1} \left(\Sigma_{00}^{-1} \mu_{00} + \Sigma_{10}^{-1} \mu_{10} - \Sigma_{10}^{-1} \Gamma (\mu_{11} - \mu_{01})\right) + \Gamma (\mu_{11} - \mu_{01}) \\
\Sigma^{*}_{00}(\Gamma) & = (\Gamma \Sigma_{01}^{1/2})^{2} \\
\Sigma^{*}_{10}(\Gamma) & = (\Gamma \Sigma_{11}^{1/2})^{2}.
\end{align*}
By substituting these, when we look at the objective as a function of a positive semidefinite matrix-valued variable $\Gamma$, it turns out to be convex. This requires rewriting the expressions using the identities $\tr{AB} = \tr{BA}, \det(AB) = \det(A)\det(B)$, and most importantly, $\tr{AXBX} = \tr{(A^{1/2}XB^{1/2}) (A^{1/2}XB^{1/2})^{T}}$ and $\log\det(AXBX) = \log\det\left(A^{1/2}XB^{1/2}) (A^{1/2}XB^{1/2})^{T}\right)$, for symmetric, positive semidefinite matrices $A, B, X$ \cite{petersen2008matrix}. The convexity of the objective in $\Gamma$ follows from the convexity of $\tr{AXBX}$ and $-\log\det(X)$ for matrix-valued variable $X$. Finally, we can solve it efficiently to get the optimal $\Gamma^{*}$.
\end{proof}

\begin{corollary}
    (Corollary 4.2 in the main text) For the case where $X|Y=i, A=a \sim \mathcal{N}(\mu_{ia}, \sigma_{ia}^{2})$ are univariate normal distributions, for $i,a \in \{0, 1\}$ , the optimal distribution $\tilde{D}$ from Theorem 4.1, with $\gamma^{*}$ being a function of the original distribution parameters, can be written down as:
    \begin{align*}
        &\tilde{\sigma}_{i0} = \gamma^{*} \sigma_{i1}, \quad \tilde{\mu}_{00} = \tilde{\mu}_{10} + \gamma^{*} (\mu_{01} - \mu_{11}), \text{ and }
        ~\tilde{\mu}_{10} =  \dfrac{\left(q_{00} \dfrac{\mu_{00} - \gamma^{*}(\mu_{01} - \mu_{11})}{\sigma_{00}^{2}} + q_{10} \dfrac{\mu_{10}}{\sigma_{10}^{2}}\right)}{\left(\dfrac{q_{00}}{\sigma_{00}^{2}} + \dfrac{q_{10}}{\sigma_{10}^{2}}\right)},
    \end{align*}
\end{corollary}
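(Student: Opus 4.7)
The corollary is essentially the explicit specialization of Theorem 4.1 (and its proof) to the scalar case, so my plan is to parametrize the feasible set coming from the ideal-distribution constraints of Proposition 3.3, plug the parametrization into the KL objective, and carry out an inner quadratic minimization in closed form.

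First I would read off what the constraints of Proposition 3.3 become under the \emph{affirmative} restriction $\tilde{\mu}_{i1}=\mu_{i1}$, $\tilde{\sigma}_{i1}=\sigma_{i1}$. The variance-ratio equality $\tilde{\sigma}_{11}/\tilde{\sigma}_{01}=\tilde{\sigma}_{10}/\tilde{\sigma}_{00}$ collapses to $\sigma_{11}/\sigma_{01}=\tilde{\sigma}_{10}/\tilde{\sigma}_{00}$, so $\tilde{\sigma}_{00}$ and $\tilde{\sigma}_{10}$ must both be a common multiple of the corresponding privileged group scales; defining $\gamma=\tilde{\sigma}_{00}/\sigma_{01}=\tilde{\sigma}_{10}/\sigma_{11}>0$ gives $\tilde{\sigma}_{i0}=\gamma\sigma_{i1}$. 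The standardized mean-gap equality similarly reduces to $(\tilde{\mu}_{00}-\tilde{\mu}_{10})/\tilde{\sigma}_{10}=(\mu_{01}-\mu_{11})/\sigma_{11}$, which together with $\tilde{\sigma}_{10}=\gamma\sigma_{11}$ yields $\tilde{\mu}_{00}=\tilde{\mu}_{10}+\gamma(\mu_{01}-\mu_{11})$. The $q$-constraint is assumed by hypothesis.

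Next I would specialize the KL formula of Lemma B.1 to the univariate case, noting that only the $(i,0)$ terms depend on the decision variables. This leaves
\[
\kldiv{\tilde{D}}{D} = \mathrm{const}(\gamma) + \tfrac12\sum_{i\in\{0,1\}} q_{i0}\,\frac{(\tilde{\mu}_{i0}-\mu_{i0})^2}{\sigma_{i0}^2},
\]
where the constant in $\gamma$ absorbs the variance trace and log-det terms $q_{i0}(\gamma^2\sigma_{i1}^2/\sigma_{i0}^2-\log(\gamma^2\sigma_{i1}^2/\sigma_{i0}^2))$. For fixed $\gamma$, I substitute $\tilde{\mu}_{00}=\tilde{\mu}_{10}+\gamma(\mu_{01}-\mu_{11})$ so the mean-dependent part becomes a strictly convex quadratic in the single free scalar $\tilde{\mu}_{10}$, with positive leading coefficient $q_{00}/\sigma_{00}^2+q_{10}/\sigma_{10}^2$. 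Setting the derivative to zero gives exactly the displayed expression for $\tilde{\mu}_{10}$ as a function of $\gamma$, and the formulas for $\tilde{\mu}_{00}$ and $\tilde{\sigma}_{i0}$ follow by back-substitution.

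Finally, I would plug the optimal $\tilde{\mu}_{10}(\gamma)$ back into the objective to obtain a scalar function $\mathcal{L}^*(\gamma)$ depending only on the original parameters $\{\mu_{ia},\sigma_{ia},q_{ia}\}$. Convexity of this scalar problem (inherited from the matrix-variable argument at the end of the proof of Theorem 4.1, specialized to $d=1$) then justifies defining $\gamma^*=\arg\min_{\gamma>0}\mathcal{L}^*(\gamma)$. The main (and only nontrivial) obstacle is bookkeeping: writing $\mathcal{L}^*(\gamma)$ explicitly involves combining the quadratic remainder from the $\tilde{\mu}_{10}$-minimization with the variance-log-det terms, and verifying that the cross terms linear in $\gamma$ assemble into a scalar convex function on $(0,\infty)$. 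Once this algebra is carried out, the three closed-form expressions of the corollary are immediate from the two constraints and the first-order optimality condition.
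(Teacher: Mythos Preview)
Your proposal is correct and follows essentially the same approach as the paper: parametrize the feasible set by a scalar $\gamma>0$ via the constraints $\tilde{\sigma}_{i0}=\gamma\sigma_{i1}$ and $\tilde{\mu}_{00}=\tilde{\mu}_{10}+\gamma(\mu_{01}-\mu_{11})$, solve the resulting one-variable quadratic in $\tilde{\mu}_{10}$ for fixed $\gamma$, and then minimize the profiled objective $\mathcal{L}^*(\gamma)$. The paper additionally carries the algebra through to an explicit quadratic equation for $\gamma^*$ and verifies convexity directly via the second derivative (rather than by specializing the matrix argument of Theorem~4.1), but this is extra detail beyond what the corollary statement requires.
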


\begin{proof}
\begin{align*}
& \kldiv{\tilde{D}}{D} \\ 
& = \sum_{(x, i, a)} \prob{\tilde{X}=x, \tilde{Y}=i, \tilde{A}=a} \log \frac{\prob{\tilde{X}=x, \tilde{Y}=i, \tilde{A}=a}}{\prob{X=x, Y=i, A=a}} \\
& = \sum_{(x, i, a)} \prob{\tilde{Y}=i, \tilde{A}=a} \prob{\tilde{X}=x \cond \tilde{Y}=i, \tilde{A}=a} \\
& \qquad \qquad \qquad \qquad\log \frac{\prob{\tilde{Y}=i, \tilde{A}=a} \prob{\tilde{X}=x \cond \tilde{Y}=i, \tilde{A}=a}}{\prob{Y=y, A=a} \prob{X=x \cond Y=i, A=a}} \\
& = \sum_{(x, i, a)} \prob{Y=i, A=a} \prob{\tilde{X}=x \cond Y=i, A=a} \\
& \qquad \qquad \qquad \qquad \log \frac{\prob{Y=i, A=a} \prob{\tilde{X}=x \cond Y=i, A=a}}{\prob{Y=i, A=a} \prob{X=x \cond Y=i, A=a}} \\
& = \sum_{(i, a)} q_{ia} \sum_{x} \prob{\tilde{X}=x \cond Y=i, A=a} \log \frac{\prob{\tilde{X}=x \cond Y=i, A=a}}{\prob{X=x \cond Y=i, A=a}} \\
& = \sum_{(i, a)} q_{ia} \kldiv{\tilde{P}_{ia}}{P_{ia}}
\end{align*}
$P_{ia}$ denotes the distribution of $X \cond Y=i, A=a \sim \mathcal{N}(\mu_{ia}, \sigma_{ia}^{2})$ and $\tilde{P}_{ia}$ denotes the distribution of $\tilde{X} \cond Y=i, A=a \sim \mathcal{N}(\tilde{\mu}_{ia}, \tilde{\sigma}_{ia}^{2})$. Their probability densities are 
\[
p_{ia}(x) = \frac{1}{x \sigma_{ia} \sqrt{2\pi}} \exp\left(-\frac{\left(x - \mu_{ia}\right)^{2}}{2\sigma_{ia}^{2}}\right) \quad \text{and} \quad
\tilde{p}_{ia}(x) = \frac{1}{x \tilde{\sigma}_{ia} \sqrt{2\pi}} \exp\left(-\frac{\left( x - \tilde{\mu}_{ia}\right)^{2}}{2\tilde{\sigma}_{ia}^{2}}\right),
\]
respectively. Hence,
\begin{align*}
&\kldiv{\tilde{P}_{ia}}{P_{ia}} = \E{\log \frac{\tilde{p}_{ia}(\tilde{X})}{p_{ia}(\tilde{X})} \cond Y=i, A=a} \\
& = \E{\frac{(\tilde{X} - \mu_{ia})^{2}}{2 \sigma_{ia}^{2}} - \frac{(\tilde{X} - \tilde{\mu}_{ia})^{2}}{2 \tilde{\sigma}_{ia}^{2}} + \log \frac{\sigma_{ia}}{\tilde{\sigma}_{ia}} \cond Y=i, A=a} \\
& = \E{\left(\frac{1}{2 \sigma_{ia}^{2}} - \frac{1}{\tilde{2 \sigma}_{ia}^{2}}\right) \tilde{X}^{2} + \left(\frac{\tilde{\mu}_{ia}}{\tilde{\sigma}_{ia}^{2}} - \frac{\mu_{ia}}{\sigma_{ia}^{2}}\right) \tilde{X} + \left(\frac{\mu_{ia}^{2}}{2 \sigma_{ia}^{2}} - \frac{\tilde{\mu}_{ia}^{2}}{2 \tilde{\sigma}_{ia}^{2}}\right) + \log \frac{\sigma_{ia}}{\tilde{\sigma}_{ia}} \cond Y=i, A=a} \\
& = \left(\frac{1}{2 \sigma_{ia}^{2}} - \frac{1}{2 \tilde{\sigma}_{ia}^{2}}\right) (\tilde{\mu}_{ia}^{2} + \tilde{\sigma}_{ia}^{2}) + \left(\frac{\tilde{\mu}_{ia}}{\tilde{\sigma}_{ia}^{2}} - \frac{\mu_{ia}}{\sigma_{ia}^{2}}\right) \tilde{\mu}_{ia} + \left(\frac{\mu_{ia}^{2}}{2 \sigma_{ia}^{2}} - \frac{\tilde{\mu}_{ia}^{2}}{2 \tilde{\sigma}_{ia}^{2}}\right) + \log \frac{\sigma_{ia}}{\tilde{\sigma}_{ia}} \\
& = \frac{(\tilde{\mu}_{ia} - \mu_{ia})^{2}}{2 \sigma_{ia}^{2}} + \frac{\tilde{\sigma}_{ia}^{2} - \sigma_{ia}^{2}}{2 \sigma_{ia}^{2}} + \log \frac{\sigma_{ia}}{\tilde{\sigma}_{ia}},
\end{align*}
using $\E{\log \tilde{X} \cond Y=i, A=a} = \tilde{\mu}_{ia}$ and $\E{(\log \tilde{X})^{2} \cond Y=i, A=a} = \tilde{\mu}_{ia}^{2} + \tilde{\sigma}_{ia}^{2}$. Since we only change group $A=0$, we want to minimize
\begin{align*}
\kldiv{\tilde{D}}{D} & = \sum_{i=0}^{1} q_{i0} \kldiv{\tilde{P}_{i0}}{P_{i0}} \\
& = \sum_{i=0}^{1} q_{i0} \left(\frac{(\tilde{\mu}_{i0} - \mu_{i0})^{2}}{2 \sigma_{i0}^{2}} + \frac{\tilde{\sigma}_{i0}^{2} - \sigma_{i0}^{2}}{2 \sigma_{i0}^{2}} + \log \frac{\sigma_{i0}}{\tilde{\sigma}_{i0}}\right)
\end{align*}
as a function of the variables $\tilde{\mu}_{i0}$ and $\tilde{\sigma}_{i0}$ subject to the constraints 
\[
\frac{\mu_{01} - \mu_{11}}{\sigma_{11}} = \frac{\tilde{\mu}_{00} - \tilde{\mu}_{10}}{\tilde{\sigma}_{10}} \quad \text{and} \quad \frac{\sigma_{11}}{\sigma_{01}} = \frac{\tilde{\sigma}_{10}}{\tilde{\sigma}_{00}} \quad \text{and} \quad \tilde{\sigma}_{ia} \geq 0,~ \text{for all $(i, a)$}.
\]
Let's fix $\gamma \in \R_{\geq 0}$ and minimize
\[
\mathcal{L}_{\gamma} = \sum_{i=0}^{1} q_{i0} \left(\frac{(\tilde{\mu}_{i0} - \mu_{i0})^{2}}{2 \sigma_{i0}^{2}} + \frac{\tilde{\sigma}_{i0}^{2} - \sigma_{i0}^{2}}{2 \sigma_{i0}^{2}} + \log \frac{\sigma_{i0}}{\tilde{\sigma}_{i0}}\right)
\]
as a function of the variables $\tilde{\mu}_{ia}$ and $\tilde{\sigma}_{ia}$ subject to the following constraints 
\[
\frac{\tilde{\mu}_{00} - \tilde{\mu}_{10}}{\mu_{01} - \mu_{11}} = \frac{\tilde{\sigma}_{10}}{\sigma_{11}} = \frac{\tilde{\sigma}_{00}}{\sigma_{01}} = \gamma \quad \text{and} \quad \tilde{\sigma}_{ia} \geq 0,~ \text{for all $(i, a)$}.
\]
The objective $\mathcal{L}_{\gamma}$ is convex and for a fixed $\gamma \in \R_{\geq 0}$, the constraints on are linear in $\tilde{\mu}_{i0}$ and $\tilde{\sigma}_{i0}$. Let's denote the optimal solution for a fixed $\gamma \in \R_{\geq 0}$ by $\mu^{*}_{i0}(\gamma)$ and $\sigma^{*}_{i0}(\gamma)$, for $i \in \{0, 1\}$.
For a fixed $\gamma \in \R_{\geq 0}$, the above constraints fix $\sigma^{*}_{i0}(\gamma) = \gamma \sigma_{i1}$, for $i \in \{0, 1\}$, and by plugging in $\tilde{\mu}_{00} = \tilde{\mu}_{10} + \gamma (\mu_{01} - \mu_{11})$, we only need to minimize the following convex, quadratic objective in a single variable $\tilde{\mu}_{10}$,
\[
\text{minimize} \quad q_{00} \frac{(\tilde{\mu}_{10} + \gamma (\mu_{01} - \mu_{11}) - \mu_{00})^{2}}{2\sigma_{00}^{2}} + q_{10} \frac{(\tilde{\mu}_{10} - \mu_{10})^{2}}{2\sigma_{10}^{2}}.
\]
By equating the derivative to zero, we get the optimal solution as
\[
\mu^{*}_{10}(\gamma) = \left(\frac{q_{00}}{\sigma_{00}^{2}} + \frac{q_{10}}{\sigma_{10}^{2}}\right)^{-1} \left(q_{00} \frac{\mu_{00} - \gamma(\mu_{01} - \mu_{11})}{\sigma_{00}^{2}} + q_{10} \frac{\mu_{10}}{\sigma_{10}^{2}}\right),
\]
and the optimal value at $\mu^{*}_{10}(\gamma)$ is (The min of $ax^{2} + bx + c$ occurs at $x = \dfrac{-b}{2a}$ and has value $c - \dfrac{b^{2}}{4a}$)
\begin{align*}
& q_{00} \frac{(\gamma(\mu_{01} - \mu_{11}) - \mu_{00})^{2}}{2\sigma_{00}^{2}} + q_{10} \frac{\mu_{10}^{2}}{2\sigma_{10}^{2}} - \frac{1}{2} \frac{\left(q_{00} \frac{\mu_{00} - \gamma(\mu_{01} - \mu_{11})}{\sigma_{00}^{2}} + q_{10} \frac{\mu_{10}}{\sigma_{10}^{2}}\right)^{2}}{\left(\frac{q_{00}}{\sigma_{00}^{2}} + \frac{q_{10}}{\sigma_{10}^{2}}\right)} \\
& = \frac{1}{2} \left(\frac{q_{00}}{\sigma_{00}^{2}} + \frac{q_{10}}{\sigma_{10}^{2}}\right)^{-1} \frac{q_{00} q_{10}}{\sigma_{00}^{2} \sigma_{10}^{2}} \left((\mu_{00} - \mu_{10}) - \gamma(\mu_{01} - \mu_{11})\right)^{2} \\
& = \frac{1}{2} \left(\frac{\sigma_{00}^{2}}{q_{00}} + \frac{\sigma_{10}^{2}}{q_{10}}\right)^{-1} \left((\mu_{00} - \mu_{10}) - \gamma(\mu_{01} - \mu_{11})\right)^{2}.
\end{align*}
By plugging in the optimal solution, the minimum value of $\mathcal{L}_{\gamma}$ for a fixed $\gamma \in \R_{\geq 0}$ is given by
\begin{align*}
\mathcal{L}^{*}_{\gamma} &= \sum_{i=0}^{1} q_{i0} \left(\frac{(\mu^{*}_{i0}(\gamma) - \mu_{i0})^{2}}{2 \sigma_{i0}^{2}} + \frac{\sigma^{*}_{i0}(\gamma)^{2} - \sigma_{i0}^{2}}{2 \sigma_{i0}^{2}} + \log \frac{\sigma_{i0}}{\sigma^{*}_{i0}(\gamma)}\right) \\
& = \frac{1}{2} \left(\frac{\sigma_{00}^{2}}{q_{00}} + \frac{\sigma_{10}^{2}}{q_{10}}\right)^{-1} \left((\mu_{00} - \mu_{10}) - \gamma(\mu_{01} - \mu_{11})\right)^{2} \\
&+ q_{00} \frac{\gamma^{2} \sigma_{01}^{2} - \sigma_{00}^{2}}{2\sigma_{00}^{2}} + q_{10} \frac{\gamma^{2} \sigma_{11}^{2} - \sigma_{10}^{2}}{2\sigma_{10}^{2}} + (q_{00} + q_{10}) \log \frac{1}{\gamma} + q_{00} \log \frac{\sigma_{00}}{\sigma_{01}} + q_{10} \log \frac{\sigma_{10}}{\sigma_{11}} \\
& = \frac{1}{2} \left(\frac{\sigma_{00}^{2}}{q_{00}} + \frac{\sigma_{10}^{2}}{q_{10}}\right)^{-1} \left((\mu_{00} - \mu_{10}) - \gamma(\mu_{01} - \mu_{11})\right)^{2} + \frac{q_{00}}{2} \left(\gamma^{2} \frac{\sigma_{01}^{2}}{\sigma_{00}^{2}} - 1\right) \\
& \qquad \qquad \qquad \qquad \qquad \qquad \qquad \qquad+ (q_{00} + q_{10}) \log \frac{1}{\gamma} + q_{00} \log \frac{\sigma_{00}}{\sigma_{01}} + q_{10} \log \frac{\sigma_{10}}{\sigma_{11}}.
\end{align*}
This is a convex objective in $\gamma$ (because the second derivative is non-negative) and by equating the derivative to zero, we have that the optimal $\gamma^{*}$ must satisfy
\begin{align*}
\frac{(\mu_{01} - \mu_{11}) \left(\gamma^{*}(\mu_{01} - \mu_{11}) - (\mu_{00} - \mu_{10})\right)}{\left(\frac{\sigma_{00}^{2}}{q_{00}} + \frac{\sigma_{10}^{2}}{q_{10}}\right)}  + \gamma^{*} \left(q_{00} \frac{\sigma_{01}^{2}}{\sigma_{00}^{2}} + q_{10} \frac{\sigma_{11}^{2}}{\sigma_{10}^{2}}\right) - \frac{q_{00} + q_{10}}{\gamma^{*}} = 0.
\end{align*}
Multiplying with $\gamma^{*} \left(\dfrac{\sigma_{00}^{2}}{q_{00}} + \dfrac{\sigma_{10}^{2}}{q_{10}}\right)$, we can write it as a quadratic equation as follows. 
\begin{align*}
\left((\mu_{01} - \mu_{11})^{2} + \sigma_{01}^{2} + \sigma_{11}^{2} + \frac{q_{10} \sigma_{00}^{2}}{q_{00} \sigma_{10}^{2}} \sigma_{11}^{2} + \frac{q_{00} \sigma_{10}^{2}}{q_{10} \sigma_{00}^{2}} \sigma_{01}^{2}\right) \gamma^{*^2} \\
- (\mu_{01} - \mu_{11})(\mu_{00} - \mu_{10}) \gamma^{*} - (q_{00} + q_{10}) \left(\frac{\sigma_{00}^{2}}{q_{00}} + \frac{\sigma_{10}^{2}}{q_{10}}\right) = 0
\end{align*}
The discriminant of the above quadratic polynomial is non-negative because the leading coefficient is positive and the constant term is negative. So this polynomial has two real roots. Moreover, since the constant term is negative, it cannot have both positive or both negative roots. Its only non-negative root is the optimal solution $\gamma^{*} \in \R_{\geq 0}$ we want.
\begin{align*}
&\qquad \qquad \qquad \qquad \qquad \gamma^{*} = \frac{(\mu_{01} - \mu_{11})(\mu_{00} - \mu_{10}) + \sqrt{\Delta}}{2 \left((\mu_{01} - \mu_{11})^{2} + \sigma_{01}^{2} + \sigma_{11}^{2} + \frac{q_{10} \sigma_{00}^{2}}{q_{00} \sigma_{10}^{2}} \sigma_{11}^{2} + \frac{q_{00} \sigma_{10}^{2}}{q_{10} \sigma_{00}^{2}} \sigma_{01}^{2}\right)}, \text{ where } \\
&\Delta = (\mu_{01} - \mu_{11})^{2} (\mu_{00} - \mu_{10})^{2} \\
&+ 4\left((\mu_{01} - \mu_{11})^{2} + \sigma_{01}^{2} + \sigma_{11}^{2} + \frac{q_{10} \sigma_{00}^{2}}{q_{00} \sigma_{10}^{2}} \sigma_{11}^{2} + \frac{q_{00} \sigma_{10}^{2}}{q_{10} \sigma_{00}^{2}} \sigma_{01}^{2}\right)(q_{00} + q_{10}) \left(\frac{\sigma_{00}^{2}}{q_{00}} + \frac{\sigma_{10}^{2}}{q_{10}}\right).
\end{align*}

\end{proof}

\begin{proposition} (Proof of Proposition 4.3 in the main text)
Let $(X, Y, A)$ denote the features, binary class label, and binary group membership, respectively, of a random data point from any data distribution $D$ with $q_{ia} = \prob{Y=i, A=a}$, for $i \in \{0, 1\}$ and $a \in \{0, 1\}$, such that $q_{10}/q_{00} = q_{11}/q_{01}$, and let $X|Y=i, A=a \sim \mathcal{N}(\mu_{ia}, \sigma_{ia}^{2})$ be univariate normal distributions, for $i \in \{0, 1\}$ and $a \in \{0, 1\}$. Let $\tilde{D}$ denote a distribution obtained by keeping $(Y, A)$ unchanged and only changing $X|Y=i, A=a$ to $\tilde{X}|Y=i, A=a \sim \mathcal{N}(\tilde{\mu}_{ia}, \tilde{\sigma}_{ia}^{2})$. Then minimizing $\kldiv{\tilde{D}}{D}$ as a function of the variables $\tilde{\mu}_{ia}$ and $\tilde{\sigma}_{ia}$ subject to the constraints in Proposition 3.2
leads to a non-convex program. Furthermore, let $\gamma^{*} = \underset{\gamma \in (0, \infty)}{\arg\min}~ \mathcal{L}^{*}_{\gamma}$ for some non-convex function of $\gamma$ that is only dependent on the original distribution parameters. Then, all the new distribution parameters $\tilde{\mu}_{ia}$ and $\tilde{\sigma}_{ia}$ can be expressed as a function of $\gamma^{*}$ and the original distribution parameters $\mu_{ia}$ and $\sigma_{ia}$.
\end{proposition}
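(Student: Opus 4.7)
The plan is to parametrize the ideality constraints by a single scalar $\gamma > 0$, then for each fixed $\gamma$ minimize the KL divergence in closed form by exploiting a decoupling of the objective, and finally reduce the whole problem to a 1D line search over the resulting non-convex function $\mathcal{L}^{*}_{\gamma}$.

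First, I would set $\gamma := \tilde{\sigma}_{10}/\tilde{\sigma}_{11}$. The sigma-ratio constraint from Proposition \ref{prop:UFTF-normal} then forces $\tilde{\sigma}_{00}/\tilde{\sigma}_{01} = \gamma$ as well, so $\tilde{\sigma}_{i0} = \gamma\, \tilde{\sigma}_{i1}$ for $i \in \{0,1\}$, and the mean-ratio constraint becomes the linear relation $\tilde{\mu}_{00} - \tilde{\mu}_{10} = \gamma(\tilde{\mu}_{01} - \tilde{\mu}_{11})$. Writing out the KL as a sum over the four subgroups using the univariate Gaussian formula (exactly as in the proof of Corollary \ref{corr:affirmative_uni}), substituting these relations, and using the linear mean constraint to eliminate $\tilde{\mu}_{00}$, the objective splits for fixed $\gamma$ into three independent pieces: a convex quadratic in $(\tilde{\mu}_{10}, \tilde{\mu}_{11}, \tilde{\mu}_{01})$, and two single-variable convex functions in $\tilde{\sigma}_{01}$ and $\tilde{\sigma}_{11}$ respectively.

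Next, I would solve each piece analytically. Setting the gradient of the quadratic mean-objective to zero yields a $3 \times 3$ linear system whose unique minimizer $(\tilde{\mu}_{10}^{*}, \tilde{\mu}_{11}^{*}, \tilde{\mu}_{01}^{*})$ depends only on $\gamma$ and the original parameters $q_{ia}, \mu_{ia}, \sigma_{ia}$. For each $i \in \{0, 1\}$, the sigma piece reduces to minimizing $\frac{q_{i0}\gamma^{2}\tilde{\sigma}_{i1}^{2}}{2\sigma_{i0}^{2}} + \frac{q_{i1}\tilde{\sigma}_{i1}^{2}}{2\sigma_{i1}^{2}} - (q_{i0}+q_{i1})\log \tilde{\sigma}_{i1}$ up to constants; differentiating and setting to zero gives the closed form $\tilde{\sigma}_{i1}^{*}(\gamma)^{2} = (q_{i0}+q_{i1}) / (q_{i0}\gamma^{2}/\sigma_{i0}^{2} + q_{i1}/\sigma_{i1}^{2})$, and hence $\tilde{\sigma}_{i0}^{*}(\gamma) = \gamma\, \tilde{\sigma}_{i1}^{*}(\gamma)$. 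Substituting all of these minimizers back into the KL produces the required function $\mathcal{L}^{*}_{\gamma}$ of $\gamma$ alone. The optimal $\gamma^{*}$ is then obtained by a 1D line search on $(0, \infty)$, and all $\tilde{\mu}_{ia}, \tilde{\sigma}_{ia}$ are read off from the closed-form minimizers evaluated at $\gamma^{*}$.

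The main obstacle is establishing that the outer problem in $\gamma$ is genuinely non-convex, which is what distinguishes this setting from Theorem \ref{thm:affirmative_multi}. After substitution, $\mathcal{L}^{*}_{\gamma}$ combines rational terms of the form $(q_{i0}+q_{i1})/(q_{i0}\gamma^{2}/\sigma_{i0}^{2} + q_{i1}/\sigma_{i1}^{2})$ with logarithmic terms $-\log \tilde{\sigma}_{i1}^{*}(\gamma)$ and a quadratic-in-$\gamma$ residual from the mean-piece optimum; these competing curvatures prevent convexity in general, and I would confirm this by exhibiting parameter settings (such as those used in the Section \ref{sec:case_study} case studies) where $(\mathcal{L}^{*}_{\gamma})''$ changes sign. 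Non-convexity of the 1D problem then immediately implies non-convexity of the full joint program in $(\tilde{\mu}_{ia}, \tilde{\sigma}_{ia})$, while the cheapness of evaluating $\mathcal{L}^{*}_{\gamma}$ makes a moderately fine grid or line search a practical strategy for recovering $\gamma^{*}$.
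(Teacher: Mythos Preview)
Your proposal is essentially the same approach as the paper's: introduce a scalar ratio parameter, decouple the fixed-$\gamma$ problem into a constrained quadratic in the means and two one-variable convex problems in the variances, solve each in closed form, substitute back to obtain $\mathcal{L}^{*}_{\gamma}$, and then line-search over $\gamma$. The only cosmetic differences are that the paper parametrizes by the reciprocal $\gamma = \tilde{\sigma}_{i1}/\tilde{\sigma}_{i0}$ (so it optimizes over $\tilde{\sigma}_{i0}$ rather than $\tilde{\sigma}_{i1}$), handles the mean block via a single Lagrange multiplier instead of eliminating $\tilde{\mu}_{00}$, and then writes $\mathcal{L}^{*}_{\gamma}$ out fully in closed form (in particular showing the $\sum_{i,a} q_{ia}(\sigma^{*}_{ia}(\gamma)^{2}-\sigma_{ia}^{2})/(2\sigma_{ia}^{2})$ piece collapses to zero) before asserting non-convexity and invoking line search.
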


\begin{proof} We consider the following optimization program
\begin{align*}
\kldiv{\tilde{D}}{D} & = \sum_{(i, a)} q_{ia} \kldiv{\tilde{P}_{ia}}{P_{ia}} \\
& = \sum_{(i, a)} q_{ia} \left(\frac{(\tilde{\mu}_{ia} - \mu_{ia})^{2}}{2 \sigma_{ia}^{2}} + \frac{\tilde{\sigma}_{ia}^{2} - \sigma_{ia}^{2}}{2 \sigma_{ia}^{2}} + \log \frac{\sigma_{ia}}{\tilde{\sigma}_{ia}}\right)
\end{align*}
as a function of the variables $\tilde{\mu}_{ia}$ and $\tilde{\sigma}_{ia}$ subject to the constraints 
\[
\frac{\tilde{\mu}_{01} - \tilde{\mu}_{11}}{\tilde{\sigma}_{11}} = \frac{\tilde{\mu}_{00} - \tilde{\mu}_{10}}{\tilde{\sigma}_{10}} \quad \text{and} \quad \frac{\tilde{\sigma}_{11}}{\tilde{\sigma}_{01}} = \frac{\tilde{\sigma}_{10}}{\tilde{\sigma}_{00}} \quad \text{and} \quad \tilde{\sigma}_{ia} \geq 0,~ \text{for all $(i, a)$}.
\]
Let's fix $\gamma \in \R_{\geq 0}$ and minimize
\[
\mathcal{L}_{\gamma} = \sum_{(i, a)} q_{ia} \left(\frac{(\tilde{\mu}_{ia} - \mu_{ia})^{2}}{2 \sigma_{ia}^{2}} + \frac{\tilde{\sigma}_{ia}^{2} - \sigma_{ia}^{2}}{2 \sigma_{ia}^{2}} + \log \frac{\sigma_{ia}}{\tilde{\sigma}_{ia}}\right)
\]
as a function of the variables $\tilde{\mu}_{ia}$ and $\tilde{\sigma}_{ia}$ subject to the following constraints 
\[
\frac{\tilde{\mu}_{01} - \tilde{\mu}_{11}}{\tilde{\mu}_{00} - \tilde{\mu}_{10}} = \frac{\tilde{\sigma}_{11}}{\tilde{\sigma}_{10}} = \frac{\tilde{\sigma}_{01}}{\tilde{\sigma}_{00}} = \gamma \quad \text{and} \quad \tilde{\sigma}_{ia} \geq 0,~ \text{for all $(i, a)$}.
\]
Now the objective $\mathcal{L}_{\gamma}$ is convex and for a fixed $\gamma \in \R_{\geq 0}$, the constraints on are linear in $\tilde{\mu}_{ia}$ and $\tilde{\sigma}_{ia}$. Let's denote the optimal solution for a fixed $\gamma \in \R_{\geq 0}$ by $\mu^{*}_{ia}(\gamma)$ and $\sigma^{*}_{ia}(\gamma)$, for $i, a \in \{0, 1\}$. To find this, we can split the above objective into parts that can be optimized separately as follows.
\begin{align*}
\text{minimize} & \quad \sum_{(i, a)} q_{ia} \frac{(\tilde{\mu}_{ia} - \mu_{ia})^{2}}{2 \sigma_{ia}^{2}} \quad \text{subject to} \quad \tilde{\mu}_{01} - \tilde{\mu}_{11} = \gamma (\tilde{\mu}_{00} - \tilde{\mu}_{10}), \quad \text{and} \\
\text{minimize} & \quad \sum_{(i, a)} q_{ia} \left(\frac{\tilde{\sigma}_{ia}^{2} - \sigma_{ia}^{2}}{2 \sigma_{ia}^{2}} + \log \frac{\sigma_{ia}}{\tilde{\sigma}_{ia}}\right) \quad \text{subject to} \quad  \tilde{\sigma}_{i1} = \gamma \tilde{\sigma}_{i0}, \text{and}~ \tilde{\sigma}_{ia} \geq 0,~ \text{for all $(i, a)$}.
\end{align*}
For each $i \in \{0, 1\}$, by substituting $\tilde{\sigma}_{i1} = \gamma \tilde{\sigma}_{i0}$, we need to optimize a function in only one variable $\tilde{\sigma}_{i0}$. The optimal solutions $\sigma^{*}_{ia}(\gamma)$ turn out to be 
\[
\sigma^{*}_{i0}(\gamma) = \sqrt{\frac{q_{i0} + q_{i1}}{\dfrac{q_{i0}}{\sigma_{i0}^{2}} + \dfrac{q_{i1} \gamma^{2}}{\sigma_{i1}^{2}}}} \quad \text{and} \quad \sigma^{*}_{i1}(\gamma) = \gamma \sqrt{\frac{q_{i0} + q_{i1}}{\dfrac{q_{i0}}{\sigma_{i0}^{2}} + \dfrac{q_{i1} \gamma^{2}}{\sigma_{i1}^{2}}}}, \quad \text{for $i \in \{0, 1\}$},
\]
Now let's find the optimal solutions $\mu^{*}_{ia}(\gamma)$. The gradient of the objective must be parallel to the linear constraint, so
\begin{align*}
&\frac{q_{00} (\mu^{*}_{00}(\gamma) - \mu_{00})}{\sigma_{00}^{2}} = - \gamma \lambda, ~\frac{q_{01} (\mu^{*}_{01}(\gamma) - \mu_{01})}{\sigma_{01}^{2}} = \lambda,\\
&~ \frac{q_{10} (\mu^{*}_{10}(\gamma) - \mu_{10})}{\sigma_{10}^{2}} = \gamma \lambda,~\frac{q_{11} (\mu^{*}_{11}(\gamma) - \mu_{11})}{\sigma_{11}^{2}} = - \lambda,
\end{align*}
for some $\lambda \in \R$, which gives 
\begin{align*}
&\mu^{*}_{00}(\gamma) = - \gamma \lambda \frac{\sigma_{00}^{2}}{q_{00}} + \mu_{00}, \quad \mu^{*}_{01}(\gamma) = \lambda \frac{\sigma_{01}^{2}}{q_{01}} + \mu _{01},\\
&\mu^{*}_{10}(\gamma) = \gamma \lambda \frac{\sigma_{10}^{2}}{q_{10}} + \mu_{10}, \quad \mu^{*}_{11}(\gamma) = - \lambda \frac{\sigma_{11}^{2}}{q_{11}} + \mu_{11}.
\end{align*}
Since $\mu^{*}_{ia}(\gamma)$ satisfies the constraint $\dfrac{\tilde{\mu}_{01} - \tilde{\mu}_{11}}{\tilde{\mu}_{00} - \tilde{\mu}_{10}} = \gamma$, we have
\[
\frac{\lambda \dfrac{\sigma_{01}^{2}}{q_{01}} + \mu _{01} + \lambda \dfrac{\sigma_{11}^{2}}{q_{11}} - \mu_{11}}{- \gamma \lambda \dfrac{\sigma_{00}^{2}}{q_{00}} + \mu_{00} - \gamma \lambda \dfrac{\sigma_{10}^{2}}{q_{10}} - \mu_{10}} = \gamma, \quad \text{and hence}, \quad \lambda = \frac{\gamma (\mu_{00} - \mu_{10}) - (\mu_{01} - \mu_{11})}{\dfrac{\sigma_{01}^{2}}{q_{01}} + \dfrac{\sigma_{11}^{2}}{q_{11}} + \gamma^{2} \left(\dfrac{\sigma_{00}^{2}}{q_{00}} + \dfrac{\sigma_{10}^{2}}{q_{10}}\right)}.
\]
Thus, we can express $\mu^{*}_{ia}(\gamma)$ as
\begin{align*}
\mu^{*}_{00}(\gamma) & = - \gamma \frac{\gamma (\mu_{00} - \mu_{10}) - (\mu_{01} - \mu_{11})}{\dfrac{\sigma_{01}^{2}}{q_{01}} + \dfrac{\sigma_{11}^{2}}{q_{11}} + \gamma^{2} \left(\dfrac{\sigma_{00}^{2}}{q_{00}} + \dfrac{\sigma_{10}^{2}}{q_{10}}\right)} \frac{\sigma_{00}^{2}}{q_{00}} + \mu_{00} \\
\mu^{*}_{01}(\gamma) & = \frac{\gamma (\mu_{00} - \mu_{10}) - (\mu_{01} - \mu_{11})}{\dfrac{\sigma_{01}^{2}}{q_{01}} + \dfrac{\sigma_{11}^{2}}{q_{11}} + \gamma^{2} \left(\dfrac{\sigma_{00}^{2}}{q_{00}} + \dfrac{\sigma_{10}^{2}}{q_{10}}\right)} \frac{\sigma_{01}^{2}}{q_{01}} + \mu _{01} \\
\mu^{*}_{10}(\gamma) & = \gamma \frac{\gamma (\mu_{00} - \mu_{10}) - (\mu_{01} - \mu_{11})}{\dfrac{\sigma_{01}^{2}}{q_{01}} + \dfrac{\sigma_{11}^{2}}{q_{11}} + \gamma^{2} \left(\dfrac{\sigma_{00}^{2}}{q_{00}} + \dfrac{\sigma_{10}^{2}}{q_{10}}\right)} \frac{\sigma_{10}^{2}}{q_{10}} + \mu_{10} \\
\mu^{*}_{11}(\gamma) & = - \frac{\gamma (\mu_{00} - \mu_{10}) - (\mu_{01} - \mu_{11})}{\dfrac{\sigma_{01}^{2}}{q_{01}} + \dfrac{\sigma_{11}^{2}}{q_{11}} + \gamma^{2} \left(\dfrac{\sigma_{00}^{2}}{q_{00}} + \dfrac{\sigma_{10}^{2}}{q_{10}}\right)} \frac{\sigma_{11}^{2}}{q_{11}} + \mu_{11}.
\end{align*}
Thus, the optimal value of $\mathcal{L}_{\gamma}$ for a fixed $\gamma \in \R_{\geq 0}$ is given by
\[
\mathcal{L}^{*}_{\gamma} = \sum_{(i, a)} q_{ia} \left(\frac{(\mu^{*}_{ia}(\gamma) - \mu_{ia})^{2}}{2 \sigma_{ia}^{2}} + \frac{\sigma^{*}_{ia}(\gamma)^{2} - \sigma_{ia}^{2}}{2 \sigma_{ia}^{2}} + \log \frac{\sigma_{ia}}{\sigma^{*}_{ia}(\gamma)}\right). 
\]
Dividing the above expression into three parts, the first part evaluates to
\begin{align*}
&\sum_{(i, a)} q_{ia} \frac{(\mu^{*}_{ia}(\gamma) - \mu_{ia})^{2}}{2 \sigma_{ia}^{2}} = \frac{q_{00}}{2\sigma_{00}^{2}} \frac{\gamma^{2} \lambda^{2} \sigma_{00}^{4}}{q_{00}^{2}} + \frac{q_{01}}{2\sigma_{01}^{2}} \frac{\lambda^{2} \sigma_{01}^{4}}{q_{01}^{2}} + \frac{q_{10}}{2\sigma_{10}^{2}} \frac{\gamma^{2} \lambda^{2} \sigma_{10}^{4}}{q_{10}^{2}} + \frac{q_{11}}{2\sigma_{11}^{2}} \frac{\lambda^{2} \sigma_{11}^{4}}{q_{11}^{2}} \\
& = \frac{\gamma^{2} \lambda^{2} \sigma_{00}^{2}}{2q_{00}} + \frac{\lambda^{2} \sigma_{01}^{2}}{2q_{01}} + \frac{\gamma^{2} \lambda^{2} \sigma_{10}^{2}}{2q_{10}} + \frac{\lambda^{2} \sigma_{11}^{2}}{2q_{11}} \\
& = \frac{\lambda^{2}}{2} \left(\frac{\sigma_{01}^{2}}{q_{01}} + \frac{\sigma_{11}^{2}}{q_{11}} + \gamma^{2} \left(\frac{\sigma_{00}^{2}}{q_{00}} + \frac{\sigma_{10}^{2}}{q_{10}}\right)\right) \\
& = \frac{1}{2} \left(\frac{\gamma (\mu_{00} - \mu_{10}) - (\mu_{01} - \mu_{11})}{\dfrac{\sigma_{01}^{2}}{q_{01}} + \dfrac{\sigma_{11}^{2}}{q_{11}} + \gamma^{2} \left(\dfrac{\sigma_{00}^{2}}{q_{00}} + \dfrac{\sigma_{10}^{2}}{q_{10}}\right)}\right)^{2} \left(\frac{\sigma_{01}^{2}}{q_{01}} + \frac{\sigma_{11}^{2}}{q_{11}} + \gamma^{2} \left(\frac{\sigma_{00}^{2}}{q_{00}} + \frac{\sigma_{10}^{2}}{q_{10}}\right)\right) \\
& = \frac{1}{2} \frac{\left(\gamma (\mu_{00} - \mu_{10}) - (\mu_{01} - \mu_{11})\right)^{2}}{\dfrac{\sigma_{01}^{2}}{q_{01}} + \dfrac{\sigma_{11}^{2}}{q_{11}} + \gamma^{2} \left(\dfrac{\sigma_{00}^{2}}{q_{00}} + \dfrac{\sigma_{10}^{2}}{q_{10}}\right)}.
\end{align*} 
The second part evaluates to
\begin{align*}
&\sum_{(i, a)} q_{ia} \frac{\sigma^{*}_{ia}(\gamma)^{2} - \sigma_{ia}^{2}}{2 \sigma_{ia}^{2}} = \sum_{(i, a)} \frac{q_{ia}}{2} \left(\frac{\sigma^{*}_{ia}(\gamma)^{2}}{\sigma_{ia}^{2}} - 1\right) \\
& = \sum_{i=0}^{1} \frac{q_{i0}}{2} \left(\frac{q_{i0} + q_{i1}}{\sigma_{i0}^{2} \left(\dfrac{q_{i0}}{\sigma_{i0}^{2}} + \dfrac{q_{i1} \gamma^{2}}{\sigma_{i1}^{2}}\right)} - 1\right) + \sum_{i=0}^{1} \frac{q_{i1}}{2} \left(\frac{\gamma^{2} (q_{i0} + q_{i1})}{\sigma_{i1}^{2} \left(\dfrac{q_{i0}}{\sigma_{i0}^{2}} + \dfrac{q_{i1} \gamma^{2}}{\sigma_{i1}^{2}}\right)} - 1\right) \\
& = \sum_{i=0}^{1} \frac{q_{i0}}{2} \frac{q_{i1} \left(1 - \dfrac{\sigma_{i0}^{2} \gamma^{2}}{\sigma_{i1}^{2}}\right)}{q_{i0} + q_{i1} \dfrac{\sigma_{i0}^{2} \gamma^{2}}{\sigma_{i1}^{2}}} + \sum_{i=0}^{1} \frac{q_{i1}}{2} \frac{q_{i0} \left(\gamma^{2} - \dfrac{\sigma_{i1}^{2}}{\sigma_{i0}^{2}}\right)}{q_{i0} \dfrac{\sigma_{i1}^{2}}{\sigma_{i0}^{2}} + q_{i1} \gamma^{2}} \\
& = \sum_{i=0}^{1} \frac{q_{i0}}{2} \frac{q_{i1} \left(1 - \dfrac{\sigma_{i0}^{2} \gamma^{2}}{\sigma_{i1}^{2}}\right)}{q_{i0} + q_{i1} \dfrac{\sigma_{i0}^{2} \gamma^{2}}{\sigma_{i1}^{2}}} + \sum_{i=0}^{1} \frac{q_{i1}}{2} \frac{q_{i0} \left(\dfrac{\sigma_{i0}^{2} \gamma^{2}}{\sigma_{i1}^{2}} - 1\right)}{q_{i0} + q_{i1} \dfrac{\sigma_{i0}^{2} \gamma^{2}}{\sigma_{i1}^{2}}} \\
& = 0,
\end{align*}
and the third part evaluates to
\begin{align*}
&\sum_{(i, a)} q_{ia} \log \frac{\sigma_{ia}}{\sigma^{*}_{ia}(\gamma)} = \sum_{i=0}^{1} \frac{q_{i0}}{2} \log \frac{\sigma_{i0}^{2}}{\sigma^{*}_{i0}(\gamma)^{2}} + \frac{q_{i1}}{2} \log \frac{\sigma_{i1}^{2}}{\sigma^{*}_{i1}(\gamma)^{2}}
\\
& = \sum_{i=0}^{1} \frac{q_{i0}}{2} \log \frac{\sigma_{i0}^{2} \left(\dfrac{q_{i0}}{\sigma_{i0}^{2}} + \dfrac{q_{i1} \gamma^{2}}{\sigma_{i1}^{2}}\right)}{q_{i0} + q_{i1}} + \frac{q_{i1}}{2} \log \frac{\sigma_{i1}^{2} \left(\dfrac{q_{i0}}{\sigma_{i0}^{2}} + \dfrac{q_{i1} \gamma^{2}}{\sigma_{i1}^{2}}\right)}{\gamma^{2} (q_{i0} + q_{i1})} \\
& = \sum_{i=0}^{1} \frac{q_{i0}}{2} \log \frac{\dfrac{q_{i0}}{q_{i1}} + \gamma^{2} \dfrac{\sigma_{i0}^{2}}{\sigma_{i1}^{2}}}{\dfrac{q_{i0}}{q_{i1}} + 1} + \frac{q_{i1}}{2} \log \frac{\dfrac{q_{i0}}{q_{i1}} + \gamma^{2} \dfrac{\sigma_{i0}^{2}}{\sigma_{i1}^{2}}}{\gamma^{2} \dfrac{\sigma_{i0}^{2}}{\sigma_{i1}^{2}} \left(\dfrac{q_{i0}}{q_{i1}} + 1\right)} \\
& = \sum_{i=0}^{1} \frac{q_{i0} + q_{i1}}{2} \log \left(\dfrac{q_{i0}}{q_{i1}} + \gamma^{2} \dfrac{\sigma_{i0}^{2}}{\sigma_{i1}^{2}}\right) - \frac{q_{i0} + q_{i1}}{2} \log \left(\frac{q_{i0}}{q_{i1}} + 1\right) - q_{i1} \log \gamma - q_{i1} \log \frac{\sigma_{i0}}{\sigma_{i1}}. 
\end{align*}
Putting it all together
\begin{align*}
\mathcal{L}^{*}_{\gamma} & = \sum_{(i, a)} q_{ia} \left(\frac{(\mu^{*}_{ia}(\gamma) - \mu_{ia})^{2}}{2 \sigma_{ia}^{2}} + \frac{\sigma^{*}_{ia}(\gamma)^{2} - \sigma_{ia}^{2}}{2 \sigma_{ia}^{2}} + \log \frac{\sigma_{ia}}{\sigma^{*}_{ia}(\gamma)}\right) \\
& = \frac{1}{2} \frac{\left(\gamma (\mu_{00} - \mu_{10}) - (\mu_{01} - \mu_{11})\right)^{2}}{\dfrac{\sigma_{01}^{2}}{q_{01}} + \dfrac{\sigma_{11}^{2}}{q_{11}} + \gamma^{2} \left(\dfrac{\sigma_{00}^{2}}{q_{00}} + \dfrac{\sigma_{10}^{2}}{q_{10}}\right)} + \sum_{i=0}^{1} \frac{q_{i0} + q_{i1}}{2} \log \left(\dfrac{q_{i0}}{q_{i1}} + \gamma^{2} \dfrac{\sigma_{i0}^{2}}{\sigma_{i1}^{2}}\right) \\
& \qquad \qquad \qquad \qquad \qquad \qquad \qquad - \frac{q_{i0} + q_{i1}}{2} \log \left(\frac{q_{i0}}{q_{i1}} + 1\right) - q_{i1} \log \gamma - q_{i1} \log \frac{\sigma_{i0}}{\sigma_{i1}}.
\end{align*}

Minimizing $\mathcal{L}^{*}_{\gamma}$ leads to a non convex program. Since $\gamma$ is the ratio between variances of the new subgroup distribution, for a practical solution, we can do a line search over $\gamma \in (0, B)$ for some $B < \infty$.
\end{proof}

\textbf{Bound on Unfairness and Error Rate} For completeness, we now derive upper bounds on the error rate and the unfairness gap $\Delta_{\text{EO}}$ of the Bayes optimal classifier $\tilde{h}$ on $\Tilde{D}$ with respect to the original distribution $D$. These bounds show that both the accuracy loss and the fairness gap depend only on the KL divergence between $D$ and $\Tilde{D}$. It also shows that the optimal value of our optimization problem can be used to approximately translate the accuracy guarantee of $\tilde{h}$ from $\tilde{D}$ to $D$.


\begin{proposition}
    (Proposition 4.4 in the main text) Let $\text{err}(h, D)$ denote the error rate (expected 0-1 loss) of a classifier $h$ on the distribution $D$. Let $d_{TV}(\Tilde{D}, D)$ denote the total variation distance between two distributions $\Tilde{D}$ and $D$, while $D_{KL}$ denotes the KL-Divergence between them. Denote the Bayes optimal classifier on the ideal distribution $\Tilde{D}$ as $\tilde{h}$ (and similarly the Bayes optimal classifier $h$. Then, we can bound the error rate and Equal opportunity of $\tilde{h}$ on the original distribution $D$ as follows:
    \begin{align*}
        |\text{err}(\tilde{h}, D) - \text{err}(\tilde{h}, \Tilde{D})| \leq \sqrt{2D_{KL}(\tilde{D}, D)} \quad \text{ and } \quad \Delta_{EO}(\tilde{h}, D) & \leq \sqrt{8\kldiv{\tilde{D}}{D}}.
    \end{align*}
\end{proposition}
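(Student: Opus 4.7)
The plan is to derive both bounds from Pinsker's inequality combined with an elementary variational/conditioning argument. For the error-rate bound, interpret $\text{err}(\tilde h, D)$ and $\text{err}(\tilde h, \tilde D)$ as expectations of the $[0,1]$-valued 0--1 loss indicator $\mathbb{1}[\tilde h(X,A)\neq Y]$ under $D$ and $\tilde D$, respectively. The standard variational characterization of total variation distance for bounded functions gives $|\text{err}(\tilde h, D) - \text{err}(\tilde h, \tilde D)| \leq d_{TV}(\tilde D, D)$, and applying Pinsker's inequality in the form $d_{TV}(\tilde D, D) \leq \sqrt{2\,\kldiv{\tilde D}{D}}$ (using the $L^1$-norm convention for TV, consistent with the stated constants) immediately yields the first inequality.

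For the equal-opportunity bound, the key observation is that since $\tilde D$ is ideal, $\Delta_{EO}(\tilde h, \tilde D) = 0$; that is, the two group-conditional TPRs of $\tilde h$ on $\tilde D$ coincide. Adding and subtracting this common value and applying the triangle inequality gives
\[
\Delta_{EO}(\tilde h, D) \leq \sum_{a \in \{0,1\}} \bigl|\mathrm{Pr}_D[\tilde h(X,A)=1 \mid Y=1, A=a] - \mathrm{Pr}_{\tilde D}[\tilde h(X,A)=1 \mid Y=1, A=a]\bigr|.
\]
Each summand depends only on the conditional feature distributions $P_{1a}$ and $\tilde P_{1a}$, and as the expectation of a $[0,1]$-valued function it is at most $d_{TV}(P_{1a}, \tilde P_{1a}) \leq \sqrt{2\,\kldiv{\tilde P_{1a}}{P_{1a}}}$ by Pinsker. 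The KL decomposition $\kldiv{\tilde D}{D} = \sum_{(i,a)} q_{ia}\,\kldiv{\tilde P_{ia}}{P_{ia}}$ from Lemma~A.1 together with non-negativity of each summand yields $\kldiv{\tilde P_{1a}}{P_{1a}} \leq \kldiv{\tilde D}{D}/q_{1a}$, and combining the two terms via $\sqrt{x}+\sqrt{y}\leq\sqrt{2(x+y)}$ produces a bound of the right order, specializing to $\sqrt{8\,\kldiv{\tilde D}{D}}$ under the implicit balanced-subgroup assumption $q_{1a} \geq 1/2$.

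\textbf{Main obstacle.} The strategy itself is routine; the delicate part is tracking constants consistently across three independent sources of factors of two: the convention for $d_{TV}$ ($L^1$ versus half-$L^1$ norm), the matching form of Pinsker's inequality, and the denominators $q_{1a}$ introduced when passing from the KL divergence of joint distributions to that of the class-and-group-conditional feature distributions. Hitting exactly $\sqrt{8\,\kldiv{\tilde D}{D}}$ in the second inequality will require either making the balanced-subgroup assumption explicit or reorganizing the Pinsker step at the level of the joints so that the conditioning denominators cancel cleanly.
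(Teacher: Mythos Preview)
Your approach is essentially the paper's: bound the error difference by total variation and apply Pinsker for the first inequality; for the second, use the triangle inequality together with $\Delta_{EO}(\tilde h,\tilde D)=0$ to reduce to two TPR differences, bound each by a TV distance, and apply Pinsker again. The paper carries this out in exactly this order.

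Where you diverge is in care. The paper, after proving the error bound, simply writes ``Similarly, for the true positive rate of group $a$, $\mathrm{TPR}_a(\tilde h, D) - \mathrm{TPR}_a(\tilde h, \tilde D) \leq 2\,d_{TV}(\tilde D, D)$'' and then adds the two group inequalities to obtain $\sqrt{8\,\kldiv{\tilde D}{D}}$. It never addresses the conditioning issue you raised: $\mathrm{TPR}_a$ is a probability under the \emph{conditional} law $P_{1a}$, not the joint, so the step that bounds it by the joint TV is precisely the gap you identified in your ``Main obstacle'' paragraph. In other words, the paper does not resolve the $q_{1a}$ denominators---it simply does not introduce them.

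One correction to your own proposed fix: the assumption $q_{1a}\ge 1/2$ for both $a\in\{0,1\}$ cannot hold simultaneously, since $\sum_{i,a} q_{ia}=1$ forces $q_{10}+q_{11}\le 1$. Your route through $\kldiv{\tilde P_{1a}}{P_{1a}}\le \kldiv{\tilde D}{D}/q_{1a}$ therefore cannot recover the stated constant $\sqrt{8}$ in general; the honest bound via your decomposition carries a factor depending on $\min_a q_{1a}$. The paper's version avoids this only by asserting (without justification) the conditional-by-joint TV bound, so the constant $\sqrt{8}$ as stated is not fully supported by either argument.
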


\begin{proof}
For the sake of this proof, we assume a countable data domain. Using the definition of the expected $0-1$ loss, we can write:
    \begin{align*}
        & \text{err}(\tilde{h}, D) - \text{err}(\tilde{h}, \Tilde{D}) \\&= \underset{(x,y,a)}{\sum} \id{\tilde{h}(x,a) \neq y}\cdot (p(x,y,a) - \tilde{p}(x,y,a)) \\ &\leq 2d_{TV}(\tilde{D}, D) \leq \sqrt{2D_{KL}(\tilde{D}, D)} ~~~~~~~~~\text{(Pinsker's Inequality \cite{canonne2023short}).}
    \end{align*}

The first inequality follows from writing the error as expected 0-1 loss and using the definition of total variation distance. The last line follows from Pinsker's inequality \cite{canonne2023short}. We can similarly prove the other direction to obtain the first inequality. Similarly, for the true positive rate of group $a$, $TPR_a(\tilde{h}, D) - TPR_a(\tilde{h}, \Tilde{D}) \leq 2D_{TV}(\Tilde{D}, D)$.

We can also write for the other group $A=a'$, $TPR_{a'}(\tilde{h}, \Tilde{D}) - TPR_{a'}(\tilde{h}, D) \leq 2D_{TV}(\Tilde{D}, D)$. Adding both LHS and RHS and repeating the exercise in the other direction, noting that the TPR difference of $\tilde{h}$ in $\Tilde{D}$ is zero (since in the ideal distribution we have exact fairness), we can bound the absolute value of the TPR difference, which is our definition of $\Delta_{EO}$, we get:
\begin{align*}
    \Delta_{EO}(\tilde{h}, D) \leq \sqrt{8\kldiv{\tilde{D}}{D}}
\end{align*}
\end{proof}




\textbf{Equalizing the first moment} A popular intervention in the fairness literature is to equalize the first moment of the two sensitive groups or the mean outcomes of two groups, also known as the Calders-Verwer gap \cite{calders2010three, kamishima2012fairness, chen2019fairness}. We, therefore, also study an intervention where we only change the mean of the under-privileged group and try to match it with the mean of the privileged group. We can show that the resulting optimization program is convex. We leverage this intervention in Section 5 of the main text.

\begin{proposition} (Affirmative Action by Equalizing First Moments) \label{prop:first_moment}
Let $(X, Y, A)$ denote the features, binary class label, and binary group membership, respectively, of a random data point from any data distribution $D$ with $q_{ia} = \prob{Y=i, A=a}$, for $i \in \{0, 1\}$ and $a \in \{0, 1\}$. Let $X|Y=i, A=a \sim \mathcal{N}(\mu_{ia}, \sigma_{ia}^{2})$ be a univariate Normal distribution, for $i \in \{0, 1\}$ and $a \in \{0, 1\}$. Then in the case of Affirmative mean change, where we impose the following constraints:
\begin{align*}
    \frac{q_{10}~ \tilde{\mu}_{10}}{q_{10} + q_{00}} + \frac{q_{00}~ \tilde{\mu}_{00}}{q_{10} + q_{00}} = \frac{q_{11}~ \mu_{11}}{q_{11} + q_{01}} + \frac{q_{01}~ \mu_{01}}{q_{11} + q_{01}},
\end{align*}
we can efficiently minimize $\kldiv{\tilde{D}}{D}$ as a function of the variables $\tilde{\mu}_{i0}$ and $\tilde{\Sigma}_{i0}$.
\end{proposition}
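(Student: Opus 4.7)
The plan is to show that affirmative first-moment equalization decouples the KL minimization into two independent convex subproblems, each admitting a closed-form solution. By Lemma~\ref{lem: kl_subgroup} specialized to univariate normals, and using that affirmative action fixes $\tilde{\mu}_{i1} = \mu_{i1}$ and $\tilde{\sigma}_{i1} = \sigma_{i1}$, the KL objective collapses to
\begin{align*}
\kldiv{\tilde{D}}{D} = \sum_{i=0}^{1} q_{i0}\left(\frac{(\tilde{\mu}_{i0} - \mu_{i0})^{2}}{2\sigma_{i0}^{2}} + \frac{\tilde{\sigma}_{i0}^{2} - \sigma_{i0}^{2}}{2\sigma_{i0}^{2}} + \log\frac{\sigma_{i0}}{\tilde{\sigma}_{i0}}\right),
\end{align*}
a function of the four variables $\tilde{\mu}_{00}, \tilde{\mu}_{10}, \tilde{\sigma}_{00}, \tilde{\sigma}_{10}$. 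The structural observation I would exploit is that the imposed equality constraint is linear in $\tilde{\mu}_{00},\tilde{\mu}_{10}$ and involves neither $\tilde{\sigma}_{00}$ nor $\tilde{\sigma}_{10}$, so the problem separates into an unconstrained variance minimization and an equality-constrained mean minimization.

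For the variance block, for each $i \in \{0,1\}$ I would minimize $g_i(s) = s^{2}/(2\sigma_{i0}^{2}) - \log s$ over $s > 0$. Strict convexity follows from $g_i''(s) = \sigma_{i0}^{-2} + s^{-2} > 0$, and the first-order condition $s/\sigma_{i0}^{2} - 1/s = 0$ yields the unique minimizer $\tilde{\sigma}_{i0}^{*} = \sigma_{i0}$. Thus the optimal affirmative first-moment intervention leaves the underprivileged-group variances unchanged, and the KL cost at optimum collects contributions only from the mean terms.

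What remains is the strictly convex quadratic program
\begin{align*}
\min_{\tilde{\mu}_{00}, \tilde{\mu}_{10}}~ \frac{q_{00}(\tilde{\mu}_{00} - \mu_{00})^{2}}{2\sigma_{00}^{2}} + \frac{q_{10}(\tilde{\mu}_{10} - \mu_{10})^{2}}{2\sigma_{10}^{2}} \quad \text{s.t.} \quad q_{00}\tilde{\mu}_{00} + q_{10}\tilde{\mu}_{10} = c,
\end{align*}
where $c = (q_{00}+q_{10})(q_{11}\mu_{11} + q_{01}\mu_{01})/(q_{11}+q_{01})$ is determined entirely by the original distribution parameters. Introducing a single Lagrange multiplier $\lambda$ and equating gradients gives the stationarity conditions $(\tilde{\mu}_{i0}^{*} - \mu_{i0})/\sigma_{i0}^{2} = \lambda$, hence $\tilde{\mu}_{i0}^{*} = \mu_{i0} + \lambda \sigma_{i0}^{2}$, and substituting back into the constraint pins down $\lambda = (c - q_{00}\mu_{00} - q_{10}\mu_{10})/(q_{00}\sigma_{00}^{2} + q_{10}\sigma_{10}^{2})$. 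This delivers a unique closed-form optimum.

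The only real subtlety, rather than obstacle, is recognizing the decoupling. One might a priori worry that the nonlinear $\log \tilde{\sigma}_{i0}$ term couples the variances with the means through the constraint, but because the first-moment equalization is blind to the variance variables, the overall program is globally convex and cleanly separable. Convexity of both blocks together with their explicit minimizers then establishes that $\kldiv{\tilde{D}}{D}$ can be efficiently minimized under the affirmative first-moment constraint, completing the argument.
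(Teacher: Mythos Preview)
Your proof is correct and follows essentially the same route as the paper: reduce the KL objective to the underprivileged-group terms, observe that the first-moment constraint is linear in $\tilde{\mu}_{00},\tilde{\mu}_{10}$ only, and solve the resulting strictly convex quadratic in closed form. The only minor differences are that you use a Lagrange multiplier where the paper substitutes the constraint directly, and you explicitly optimize the variance block to show $\tilde{\sigma}_{i0}^{*}=\sigma_{i0}$, whereas the paper simply declares the variances fixed; your treatment of that step is if anything slightly more complete given that the proposition lists $\tilde{\Sigma}_{i0}$ among the optimization variables.
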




\begin{proof} We are dealing with the following optimization problem:
\begin{align*}
\kldiv{\tilde{D}}{D} & = \sum_{i=0}^{1} q_{i0} \kldiv{\tilde{P}_{i0}}{P_{i0}} \\
& = \sum_{i=0}^{1} q_{i0} \left(\frac{(\tilde{\mu}_{i0} - \mu_{i0})^{2}}{2 \sigma_{i0}^{2}} + \frac{\tilde{\sigma}_{i0}^{2} - \sigma_{i0}^{2}}{2 \sigma_{i0}^{2}} + \log \frac{\sigma_{i0}}{\tilde{\sigma}_{i0}}\right)
\end{align*}
as a function of the variables $\tilde{\mu}_{i0}$ and $\tilde{\sigma}_{i0}$ subject to the constraints 
\begin{align*}
    \frac{q_{10}}{q_{10} + q_{00}}\Tilde{\mu}_{10} + \frac{q_{00}}{q_{10} + q_{00}}\Tilde{\mu}_{00} = \frac{q_{11}}{q_{11} + q_{01}}\mu_{11} + \frac{q_{01}}{q_{11} + q_{01}}\mu_{01}
\end{align*}
Since we are only changing the means and keeping the variances the same, the objective only depends on $\tilde{\mu}_{i0}$. Furthermore, let $K = \nicefrac{(q_{10} + q_{00})}{(q_{11} + q_{01})}\cdot (q_{11}\mu_{11} + q_{01}\mu_{01})$ so that
\[
\mathcal{L} = \sum_{i=0}^{1} q_{i0} \frac{(\tilde{\mu}_{i0} - \mu_{i0})^{2}}{2 \sigma_{i0}^{2}}, \quad \text{ subject to } \tilde{\mu}_{00} = \frac{K - \tilde{\mu}_{10}}{q_{00}}.
\]

Substituting the constraint on $\tilde{\mu}_{00}$ in the objective $\mathcal{L}$ gives us a convex quadratic in $\tilde{\mu}_{10}$, and the solution is obtained by setting the derivative to zero:

\begin{align*}
    \tilde{\mu}_{00} = \frac{\frac{K}{\sigma_{10}^{2}\cdot q_{10}} - \frac{\mu_{10}}{\sigma_{10}^{2}} + \frac{\mu_{00}}{\sigma_{00}^{2}}}{\frac{q_{00}}{\sigma_{10}^{2}\cdot q_{10}} + \frac{1}{\sigma_{00}^{2}}}, \quad \tilde{\mu}_{10} = \frac{\frac{K}{\sigma_{00}^{2}\cdot q_{00}} - \frac{\mu_{00}}{\sigma_{00}^{2}} + \frac{\mu_{10}}{\sigma_{10}^{2}}}{\frac{q_{10}}{\sigma_{00}^{2}\cdot q_{00}} + \frac{1}{\sigma_{10}^{2}}}, \quad \tilde{\mu}_{01} = \mu_{01}, \quad \text{ and } \qquad \tilde{\mu}_{11} = \mu_{11}.
\end{align*}
\end{proof}

\section{Additional Figures for Section 5} \label{appndx: case_study}

\begin{figure*}
\centering
    \begin{subfigure}[b]{0.26\textwidth}
        \includegraphics[width=\textwidth]{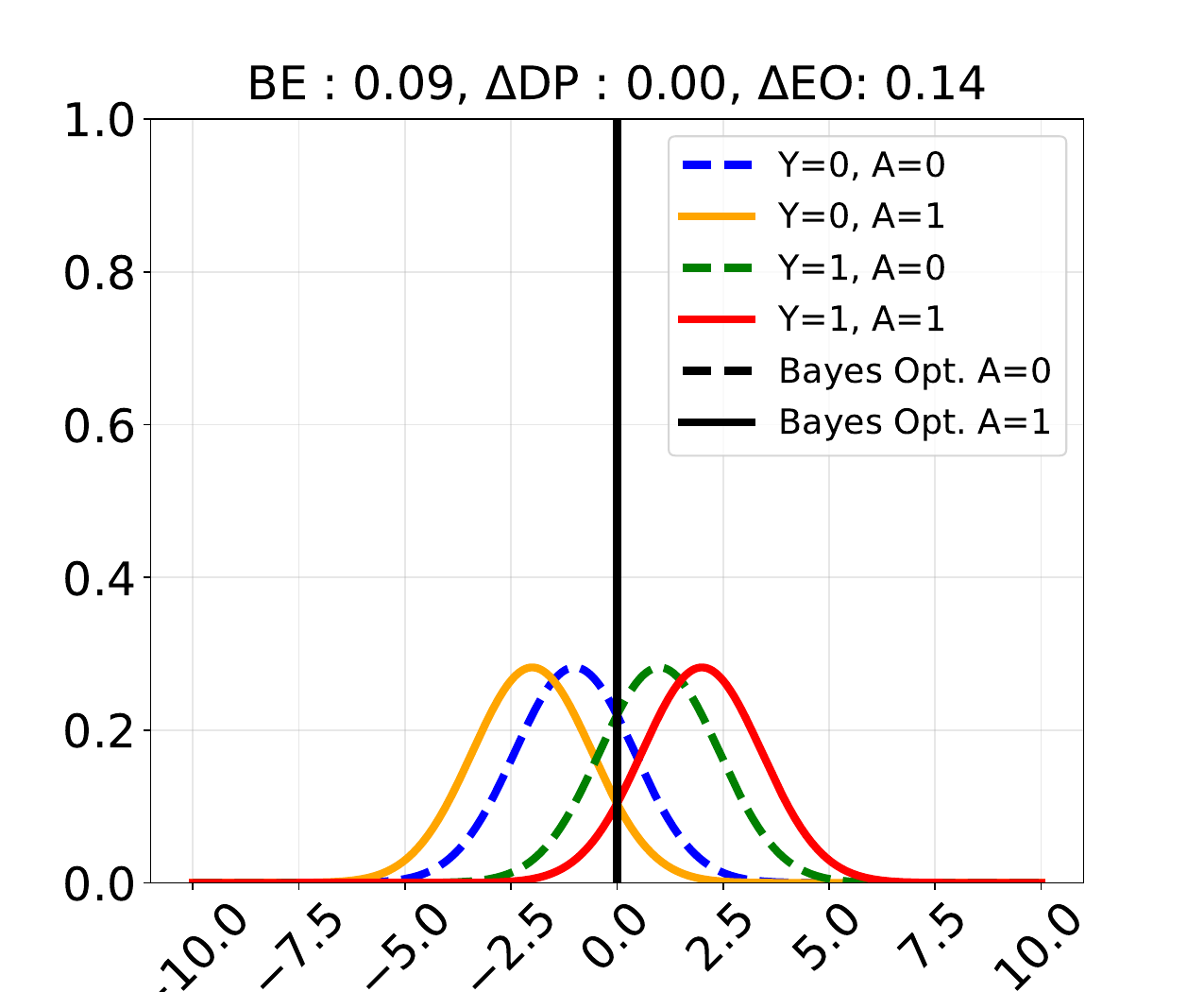}
        \caption{Original Distribution}
    \end{subfigure}
    \hspace{-11pt} 
    \begin{subfigure}[b]{0.26\textwidth}
        \includegraphics[width=\textwidth]{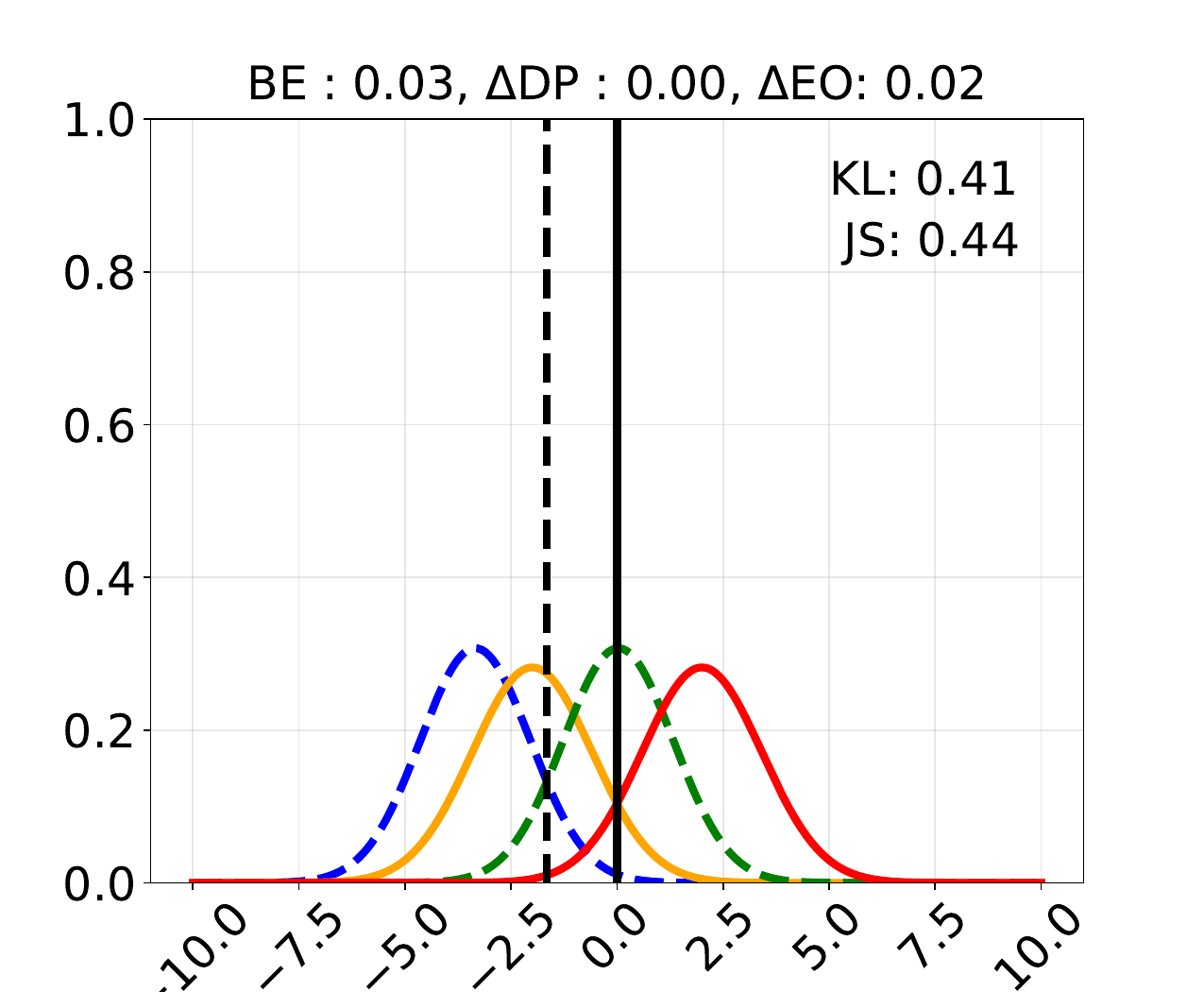}
        \caption{EF-Affirmative}
    \end{subfigure}
    \hspace{-11pt} 
    \begin{subfigure}[b]{0.26\textwidth}
        \includegraphics[width=\textwidth]{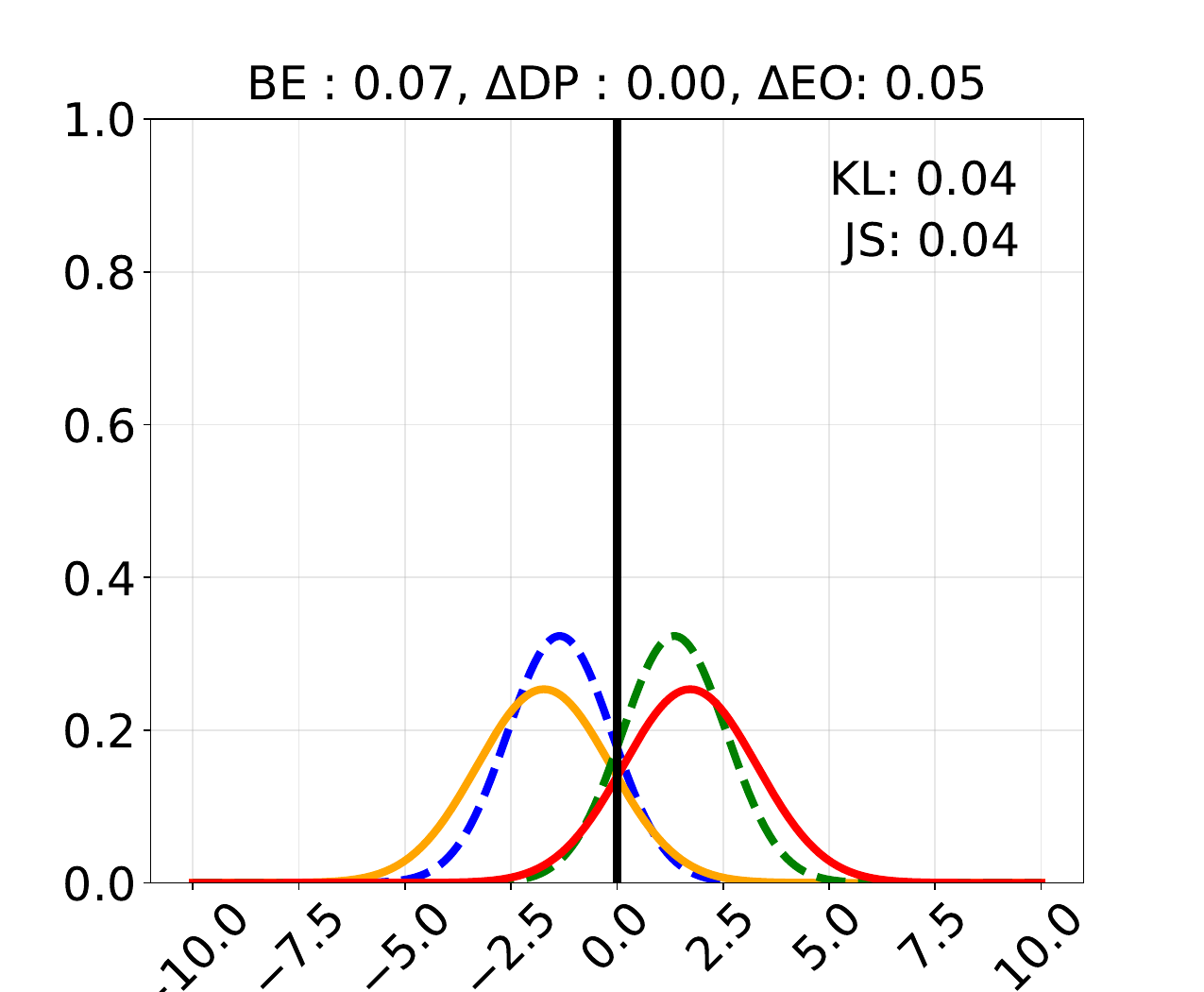}
        \caption{EF-All Subgroups}
    \end{subfigure}
    \hspace{-11pt} 
    \begin{subfigure}[b]{0.26\textwidth}
        \includegraphics[width=\textwidth]{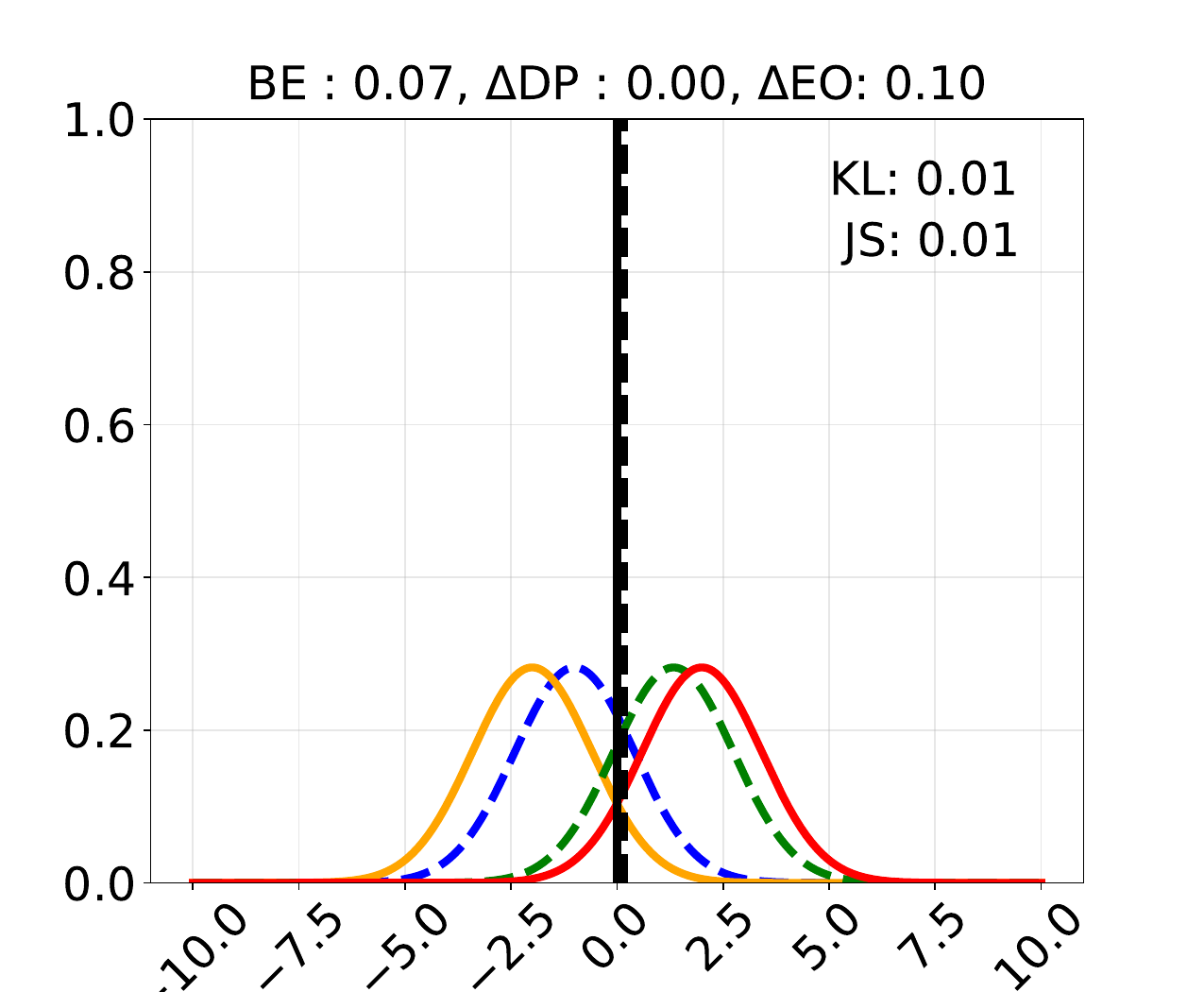}
        \caption{Mean Matching}
    \end{subfigure}
    \vspace{-0.5em}
    \caption{Comparison of Different Interventions when the subgroup distributions are shifted version of each other. While all methods achieve the same Bayes Error, Affirmative action is able to bring down the Bayes Error and achieve exact fairness.}
    \label{fig:symmetric}
\end{figure*}

\begin{figure*}
    \begin{subfigure}[b]{0.26\textwidth}
        \includegraphics[width=\textwidth]{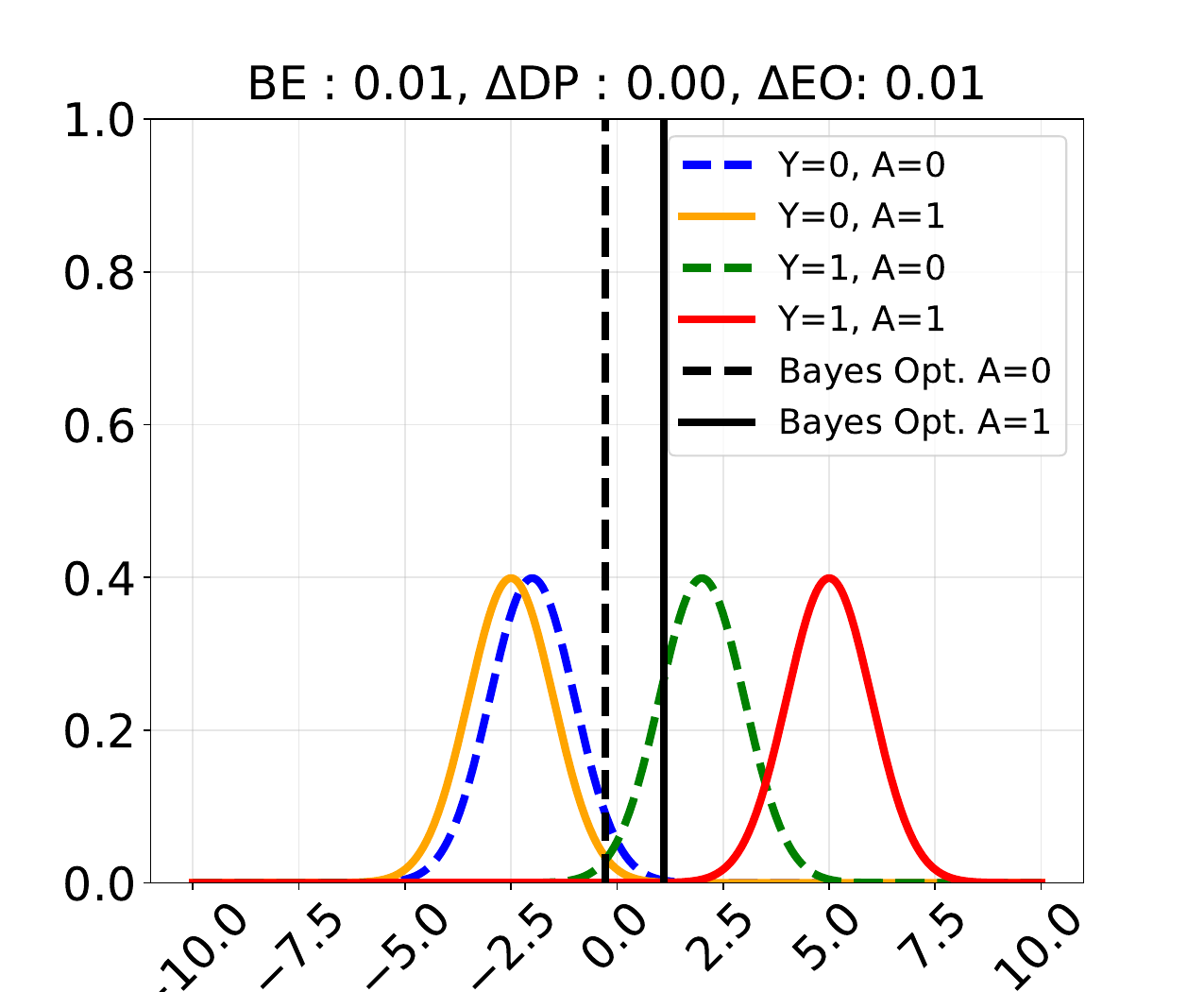}
        \caption{Original Distribution}
    \end{subfigure}
    \hspace{-11pt} 
    \begin{subfigure}[b]{0.26\textwidth}
        \includegraphics[width=\textwidth]{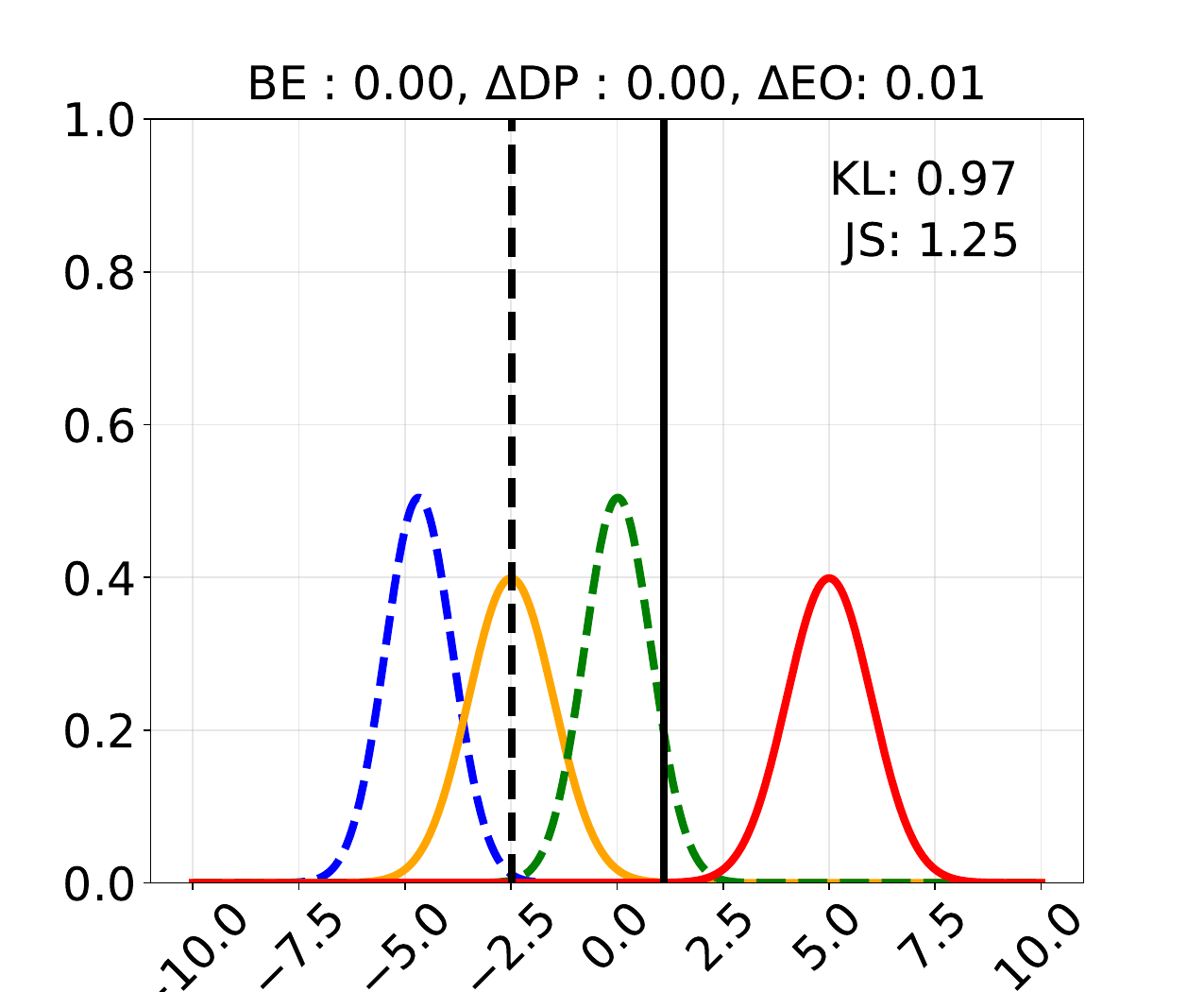}
        \caption{EF-Affirmative}
    \end{subfigure}
    \hspace{-11pt} 
    \begin{subfigure}[b]{0.26\textwidth}
        \includegraphics[width=\textwidth]{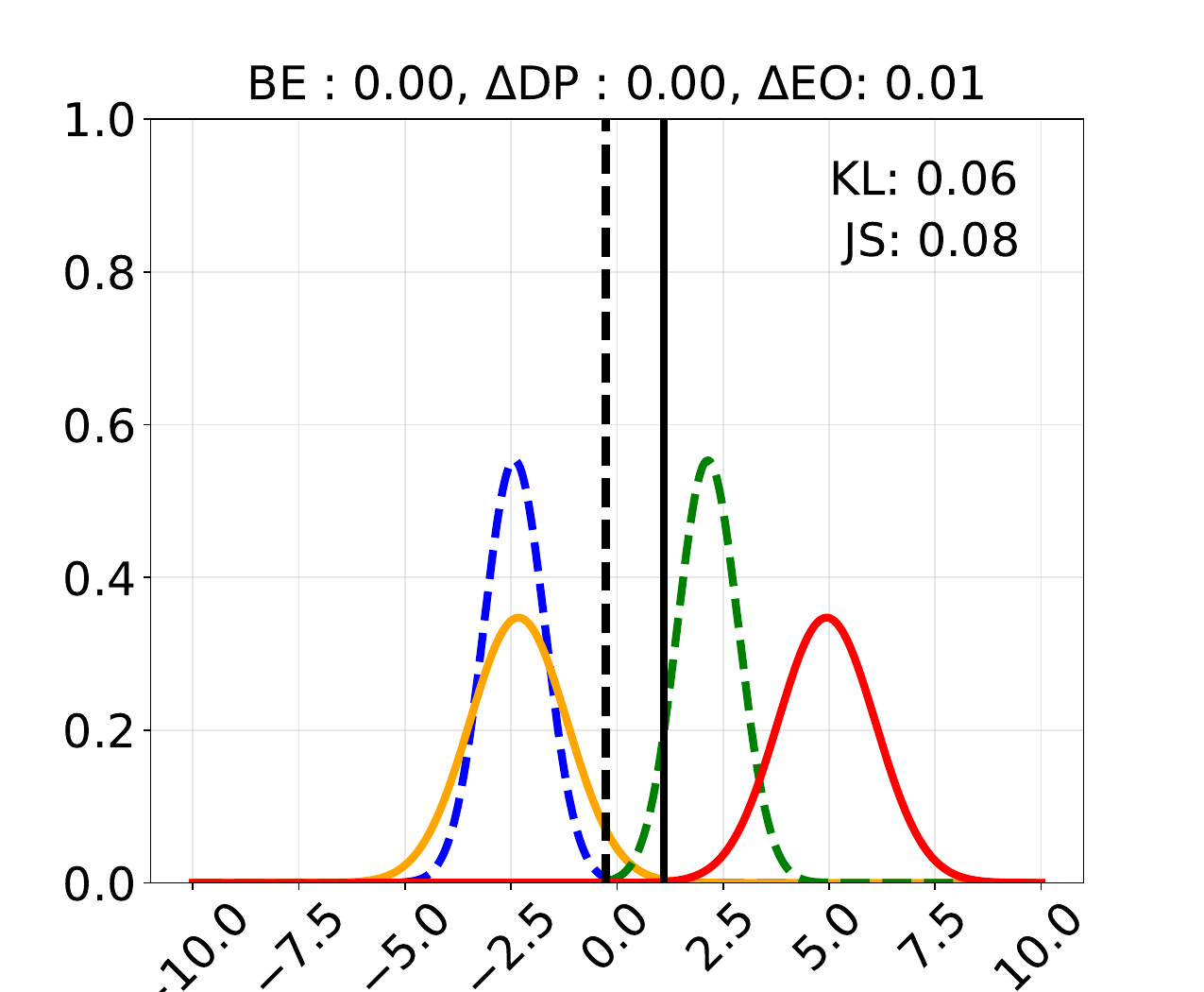}
        \caption{EF-All Subgroups}
    \end{subfigure}
    \hspace{-11pt} 
    \begin{subfigure}[b]{0.26\textwidth}
        \includegraphics[width=\textwidth]{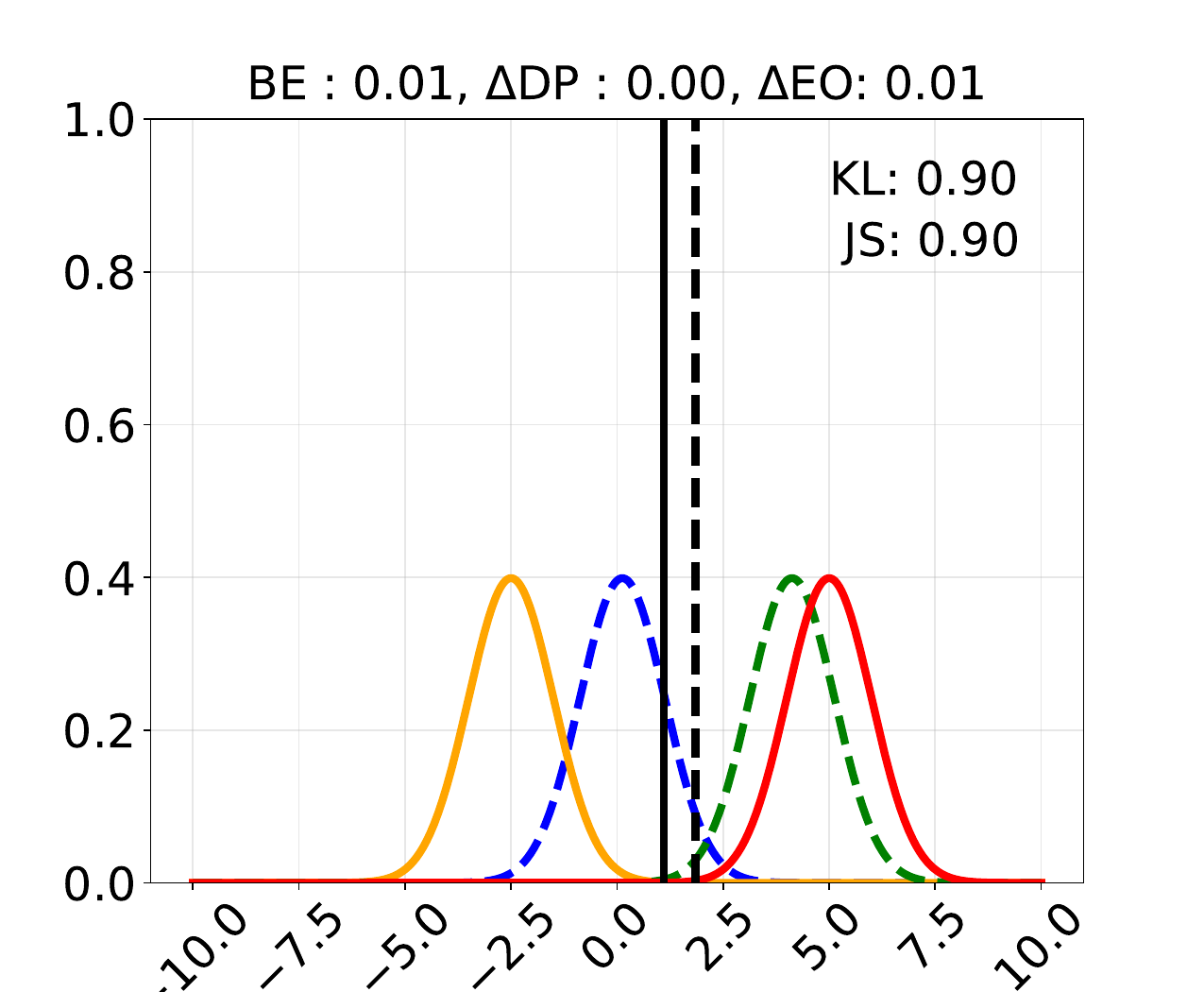}
        \caption{Mean Matching}
    \end{subfigure}
    \vspace{-0.5em}
    \caption{Comparison of Different Interventions when the original distribution is already fair. In this case, EF-All ensures that it stays close to the true distribution, as no intervention as required, while others relatively deviate.}
    \label{fig:no_unf}
\end{figure*}

\begin{figure*}    
    \begin{subfigure}[b]{0.26\textwidth}
        \includegraphics[width=\textwidth]{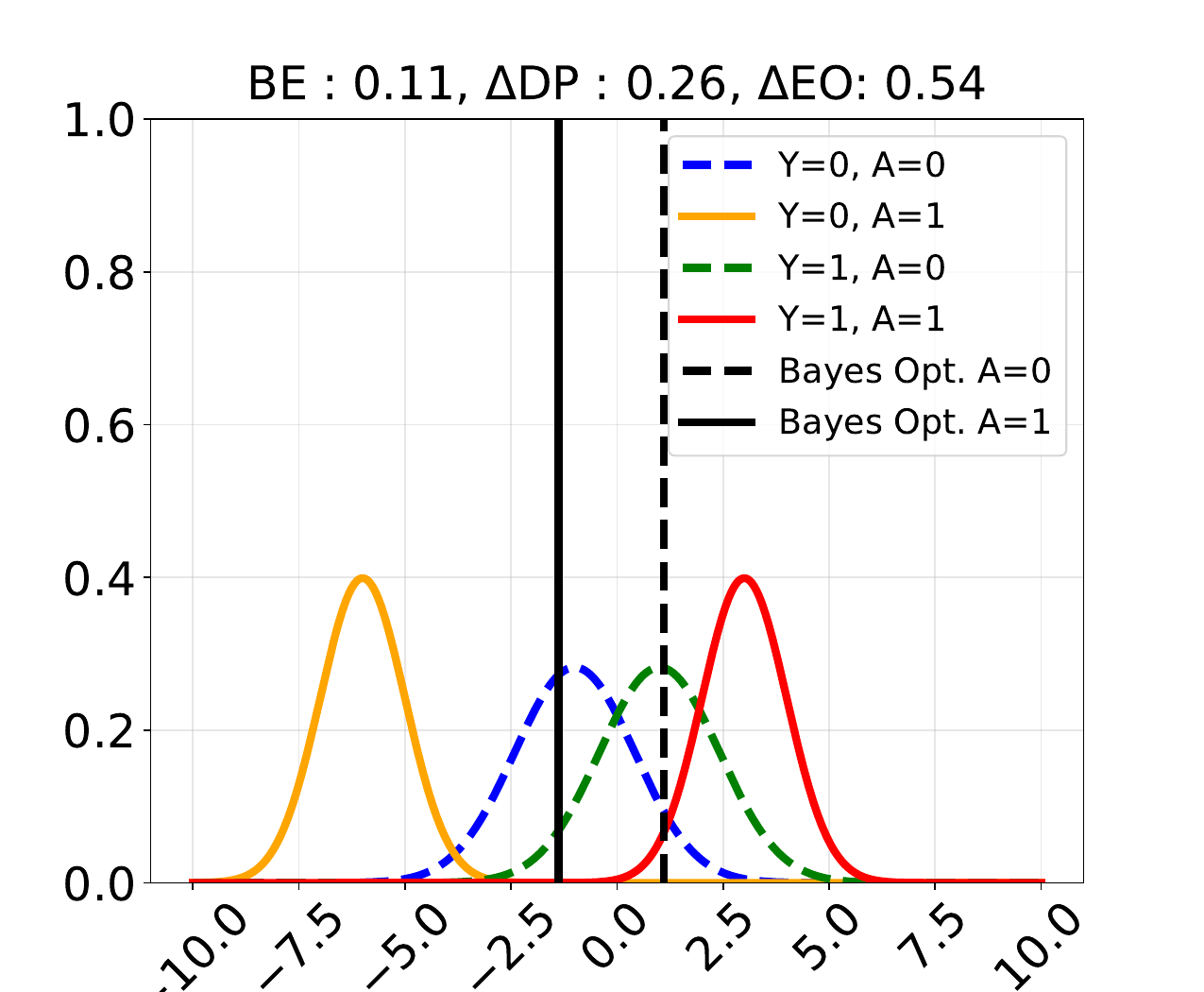}
        \caption{Original Distribution}
    \end{subfigure}
    \hspace{-11pt} 
    \begin{subfigure}[b]{0.26\textwidth}
        \includegraphics[width=\textwidth]{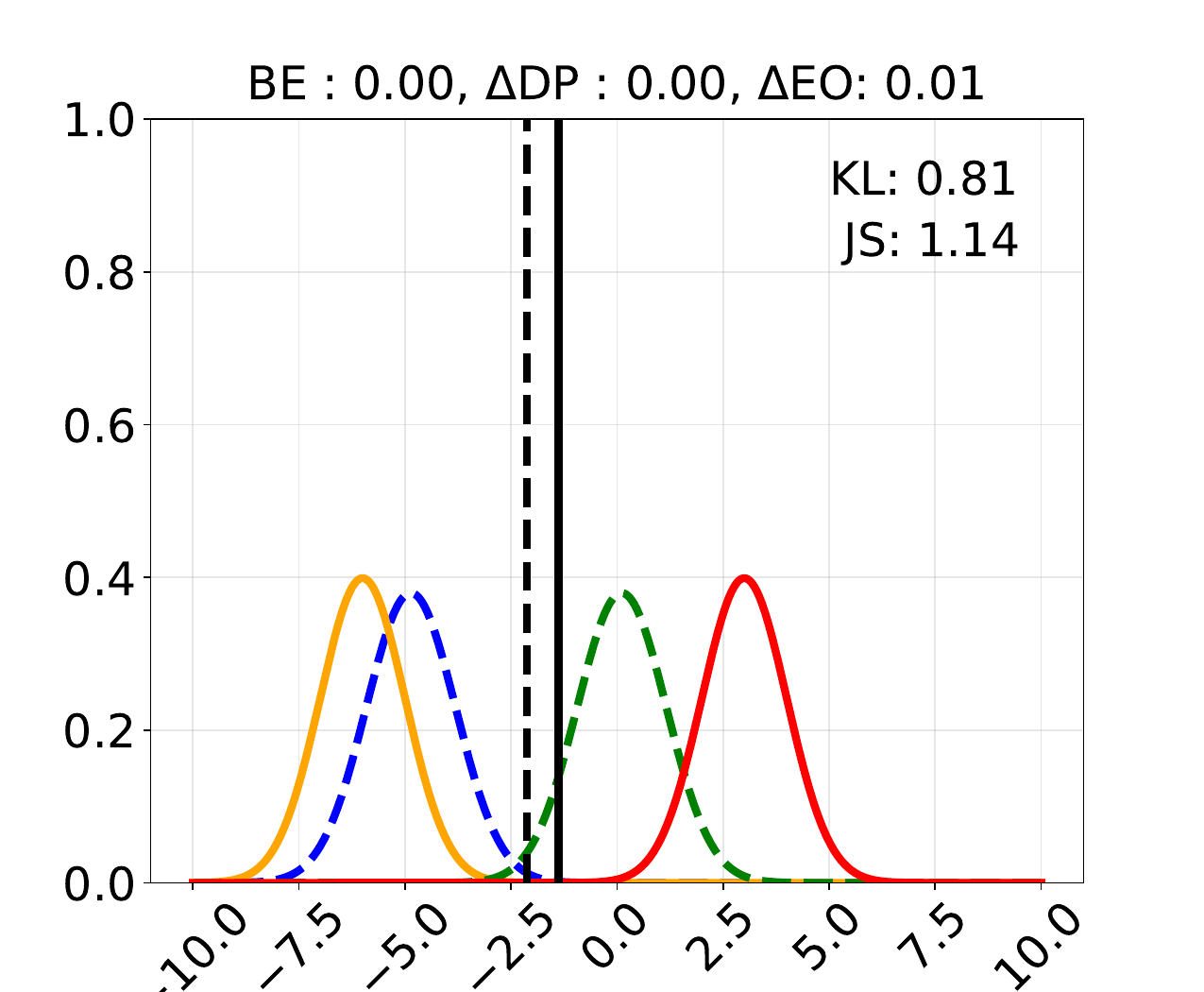}
        \caption{EF-Affirmative}
    \end{subfigure}
    \hspace{-11pt} 
    \begin{subfigure}[b]{0.26\textwidth}
        \includegraphics[width=\textwidth]{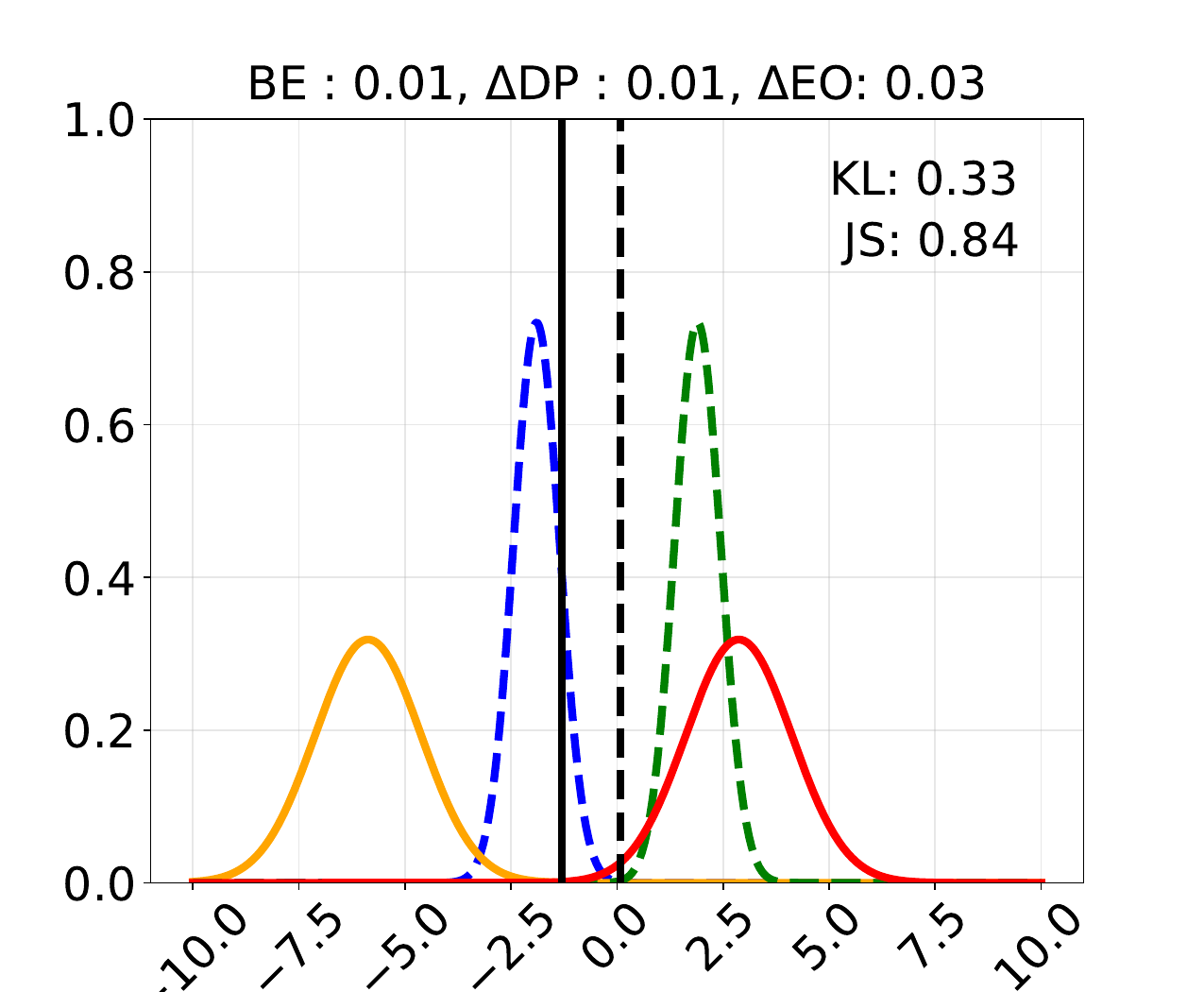}
        \caption{EF-All Subgroups}
    \end{subfigure}
    \hspace{-11pt} 
    \begin{subfigure}[b]{0.26\textwidth}
        \includegraphics[width=\textwidth]{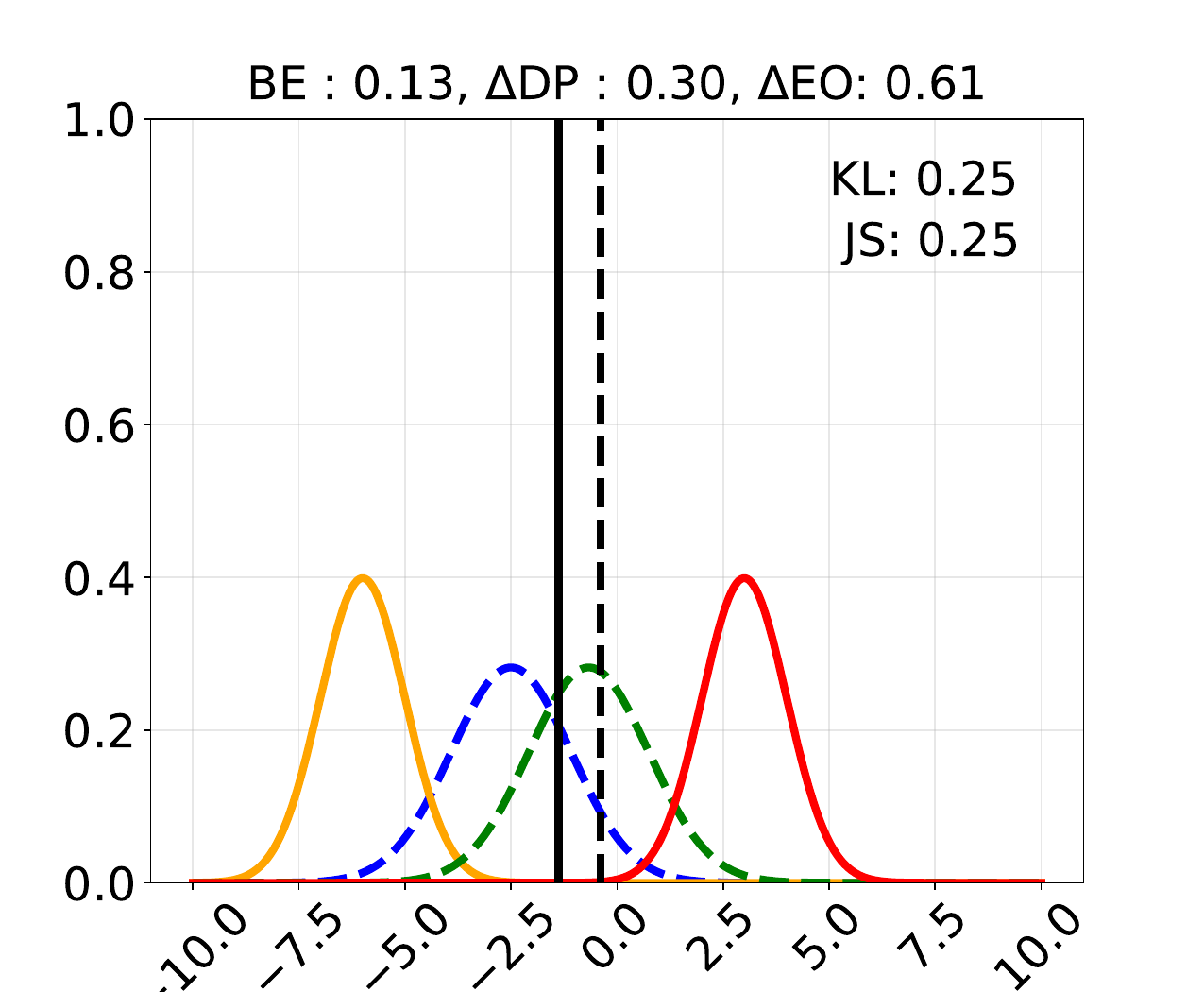}
        \caption{Mean Matching}
    \end{subfigure}
    \vspace{-0.5em}
    \caption{Comparison of Different Interventions when we use a different threshold ($\nicefrac{3}{4}$) than the Bayes optimal threshold ($\nicefrac{1}{2}$). As derived in Proposition \ref{prop:UFTF-normal}, the EF-Affirmative and EF-All interventions work with any threshold.}
    \label{fig:non_bayes}
\end{figure*}

In this section, we lay out additional plots from our Gaussian case study. We first describe the setup. We modify a stylized setting of Gaussian distributions from previous work (see Definition 3.1 in \cite{pierson2018fast}, Section 5.3 in  \cite{bakalar2021fairness}) to investigate the unfairness and the Bayes optimal error on the original and ideal distributions obtained through various interventions. We fix $q_{ia} \in (0, 1)$ such that $q_{00} + q_{10} + q_{01} + q_{11} = 1$, and our data generation works as follows. We simulate a data distribution where $Y=i, A=a$ with probability $q_{ia}$ and $X \cond Y=i, A=a$ is sampled from a univariate Gaussian $\mathcal{N}(\mu_{ia}, \sigma_{ia}^{2})$. We choose homoskedastic Gaussians within each group $A=a$, i.e., $\sigma_{0a} = \sigma_{1a}$, so the we can show the Bayes optimal classifier boundary as a threshold. We choose different $\sigma_{ia}$'s that cover ground truth distribution that can the entire spectrum of being \emph{ideal} or close to \emph{ideal} to very far, and then we apply different interventions to change all or some subset of $\mu_{ia}$'s and $\sigma_{ia}$'s to find the nearest \emph{ideal} distribution in KL-divergence as given in Section 4 of the main text.

We first look at a case where the subgroup distributions are the same shifted versions of each other in Figure \ref{fig:symmetric}. Note that all interventions, in this case, result in the same Bayes error (BE), but affirmative action brings the BE down with zero unfairness at the cost of incurring a deviation in terms of KL and JS divergence. However, in the next subplot, changing all four subgroups not only helps reduce the Bayes error and unfairness but also stays very close to the true distribution in the KL/JS sense. Matching the means also helps reduce the unfairness while staying close to the true distribution, but is sub-optimal compared to the EF-Affirmative and EF-All interventions.

Next, we look at a case where the Bayes optimal classifier is already fair ($\Delta$EO is close to $0$ while $\Delta$DP=$0$) in Figure \ref{fig:no_unf}. The expected solution here should be that any intervention must leave the distribution as it is. EF-Affirmative intervention keeps the unfairness and error rate numbers as it is, but deviates from the true distribution, as indicated by the KL/JS divergences. However, the EF-All intervention only makes major changes to variances and stays close to the true distribution. The Mean Matching intervention shifts both the under-privileged subgroups and strays away from the true distribution, as indicated by relatively high KL/JS values.

Finally, in light of Proposition \ref{prop:UFTF-normal}, we simulate the cost-sensitive risk for a different cost matrix $C$ other than 0-1 loss by considering a threshold $t_{C} = \nicefrac{3}{4}$ on $\eta(x, a)$ in Figure \ref{fig:non_bayes}. The original distribution has high unfairness. EF-Affirmative intervention manages to achieve almost perfect fairness and zero error rate, but incurs relatively high KL/JS numbers. However, once again, changing all four subgroups, results in a solution that is perfectly fair and accurate, with low KL/JS. Mean Matching is unable to address the fairness-accuracy tension at all in this case and also manages to drift away from the true distribution, as indicated by non-zero KL/JS values. 

\section{Details and Additional Results for Section 6} \label{appndx: llm}

In this section, we lay down all the details for the experiments performed for LLM steering. The code to reproduce our results is provided \emph{\href{https://github.com/mohitsharma29/Optimal-Steering}{here}}. For the multi-class experiments, we use a lot of helper functions from the code of Singh et al. \cite{singh2024representation} \footnote{https://github.com/shauli-ravfogel/affine-steering}. For the emotion steering experiments, we reproduce the methodology from Zhao et al. \cite{zhao2024beyond} and provide the Jupyter notebook in our code. 

\subsection{Reducing Disparity in Multi-class classification} \label{appndx: singh_et_al}

To apply our intervention for multi-class settings, we first come up with a version of Theorem 4.1 for multiple classes. We show this for a univariate distribution, and for our intervention, we assume diagonal covariance. Since our experiment setup only requires two groups for each class, we show the effective constraints assuming two sensitive groups, but this methodology can be readily extended to handle a countable number of groups as well. To make our program convex (and affirmative), we fix a class $y \in \mathcal{Y}$. We fix our class $y^*$ according to the following heuristic: $y^* = \underset{y \in \mathcal{Y}}{\arg\min} ~\Delta_y TPR(\hat{h})$, where $\hat{h}$ is the empirical risk minimizer on the given data. This fixes our ratio $\gamma_\sigma = \frac{\sigma_{y^*1}}{\sigma_{y^*0}}$ and $\gamma_q = \frac{q_{y^*1}}{q_{y^*0}}$.

We can now write a multi-class version of the optimization program in Theorem 4.1:

\[
\mathcal{L}_{\gamma} = \sum_{(i, a)} q_{ia} \left(\frac{(\tilde{\mu}_{ia} - \mu_{ia})^{2}}{2 \sigma_{ia}^{2}} + \frac{\tilde{\sigma}_{ia}^{2} - \sigma_{ia}^{2}}{2 \sigma_{ia}^{2}} + \log \frac{\sigma_{ia}}{\tilde{\sigma}_{ia}}\right)
\]
as a function of the variables $\tilde{\mu}_{ia}$ and $\tilde{\sigma}_{ia}$ subject to the following constraints 
\[
\frac{\tilde{\mu}_{i1} - \tilde{\mu}_{j1}}{\tilde{\mu}_{i0} - \tilde{\mu}_{j0}} = \frac{\tilde{\sigma}_{i1}}{\tilde{\sigma}_{i0}} = \frac{\tilde{\sigma}_{j1}}{\tilde{\sigma}_{j0}} = \gamma_\sigma~, \frac{q_{i1}}{q_{i0}} = \frac{q_{j1}}{q_{j0}}=\gamma_q \quad \text{and} \quad \tilde{\sigma}_{ia} \geq 0,~ \text{for all $i \in \mathcal{Y}, j \in \mathcal{Y} ~\backslash \{i\}$}.
\] 

Just like in the proof of Theorem 4.1, the resulting program will result in separable objectives for a class $y$ and then in the underlying optimization variables $\tilde{\mu}_{ya}$ and $\tilde{\sigma}_{ya}$:

\textbf{Program for $\tilde{\mu}_{ya}$}:
\begin{align*}
q_{y0} \frac{(\tilde{\mu}_{y0} - \mu_{y0})^{2}}{2 \sigma_{y0}^{2}} + q_{y1} \frac{(\tilde{\mu}_{y1} - \mu_{y1})^{2}}{2 \sigma_{y1}^{2}}, \text{ subject to } \tilde{\mu}_{y1} - \tilde{\mu}_{y^{*}1} = \gamma(\tilde{\mu}_{y0} - \tilde{\mu}_{y^{*}0})
\end{align*}

\textbf{Program for $\tilde{\sigma}_{ya}$}:
\begin{align*}
q_{y0} \left( \frac{\tilde{\sigma}_{y0}^{2} - \sigma_{y0}^{2}}{2 \sigma_{y0}^{2}} + \log \frac{\sigma_{y0}}{\tilde{\sigma}_{y0}}\right) + q_{y1} \left( \frac{\tilde{\sigma}_{y1}^{2} - \sigma_{y1}^{2}}{2 \sigma_{y1}^{2}} + \log \frac{\sigma_{y1}}{\tilde{\sigma}_{y1}}\right), \text{ subject to } \tilde{\sigma}_{y1} = \gamma \tilde{\sigma}_{y0},
\end{align*}

where $y^*$ is the class we fixed earlier and $\gamma = \gamma_\sigma$. The solution for the following programs are the following:

$\tilde{\sigma}_{y0} = \pm \sqrt{\frac{q_{y0} + q_{y1}}{\frac{q_{y0}}{\sigma^2_{y0}}+ \frac{\gamma^{2}q_{y1}}{\sigma^2_{y1}}}}, ~\tilde{\sigma}_{y1} = \pm \gamma \sqrt{\frac{q_{y0} + q_{y1}}{\frac{q_{y0}}{\sigma^2_{y0}}+ \frac{\gamma^{2}q_{y1}}{\sigma^2_{y1}}}}, \tilde{\mu}_{y0} = \frac{\frac{q_{y0}}{\sigma^2_{y0}}\mu_{y0} + \frac{\gamma q_{y1}}{\sigma^2_{y1}}(\mu_{y1} - \mu_{y^*1} + \gamma\mu_{y^*0})}{\frac{q_{y0}}{\sigma^2_{y0}}+ \frac{\gamma^{2}q_{y1}}{\sigma^2_{y1}}}, \text{ and } \tilde{\mu}_{y1} = \frac{\frac{q_{y0}}{\sigma^2_{y0}}(\mu_{y^*1} - \gamma\mu_{y^*0} + \gamma\mu_{y0}) + \frac{\gamma^2 q_{y1}}{\sigma^2_{y1}}\mu_{y1}}{\frac{q_{y0}}{\sigma^2_{y0}}+ \frac{\gamma^{2}q_{y1}}{\sigma^2_{y1}}}.$

Once we have the corrected distributions $\mathcal{N}(\tilde{\mu}_{ia}, \Tilde{\Sigma}_{ia})$, we set up an affine intervention, following the design choices of Singh et al. \cite{singh2024representation}. We assume an affine relationship between the original and transformed samples per subgroup: $Y = a_{ya}X + b_{ya}, \text{ where } Y \sim \mathcal{N}(\tilde{\mu}_{ya}, \tilde{\sigma}_{ya}) \text { and } X \sim \mathcal{N}(\mu_{ya}, \sigma_{ya})$. Taking expectation on both sides gives us: $\tilde{\mu}_{ya} = a_{ya}\mu_{ya} + b_{ya}, \tilde{\sigma}_{ya}^2 = a_{ya}^2\sigma_{ya}^2$, and we get the following coefficients: $a_{ya} = \pm \frac{\tilde{\sigma}_{ya}}{\sigma_{ya}} \text{ and } b_{ya} = \tilde{\mu}_{ya} \pm \frac{\tilde{\sigma}_{ya}}{\sigma_{ya}}\mu_{ya}$.

We have two choices of the parameters corresponding to the positive and negative solutions. Since we are working with empirical estimates, we use the validation error to decide the best set of estimates. A detailed implementation is given in the code. To implement the conditions for $q_{ya}$, we use the reweighing scheme of \citet{kamiran2012data}. The plot in the main text (Figure 3) assumes $q_{y0} = q_{y1}$ for the corrected covariances. Figure \ref{fig:general_plot} shows the plot with no such assumption for $q_{ya}$ and confirms with same trends as observed in Figure \ref{fig:bar_plot_bios}.

\begin{figure*}
    \includegraphics[width=\textwidth]{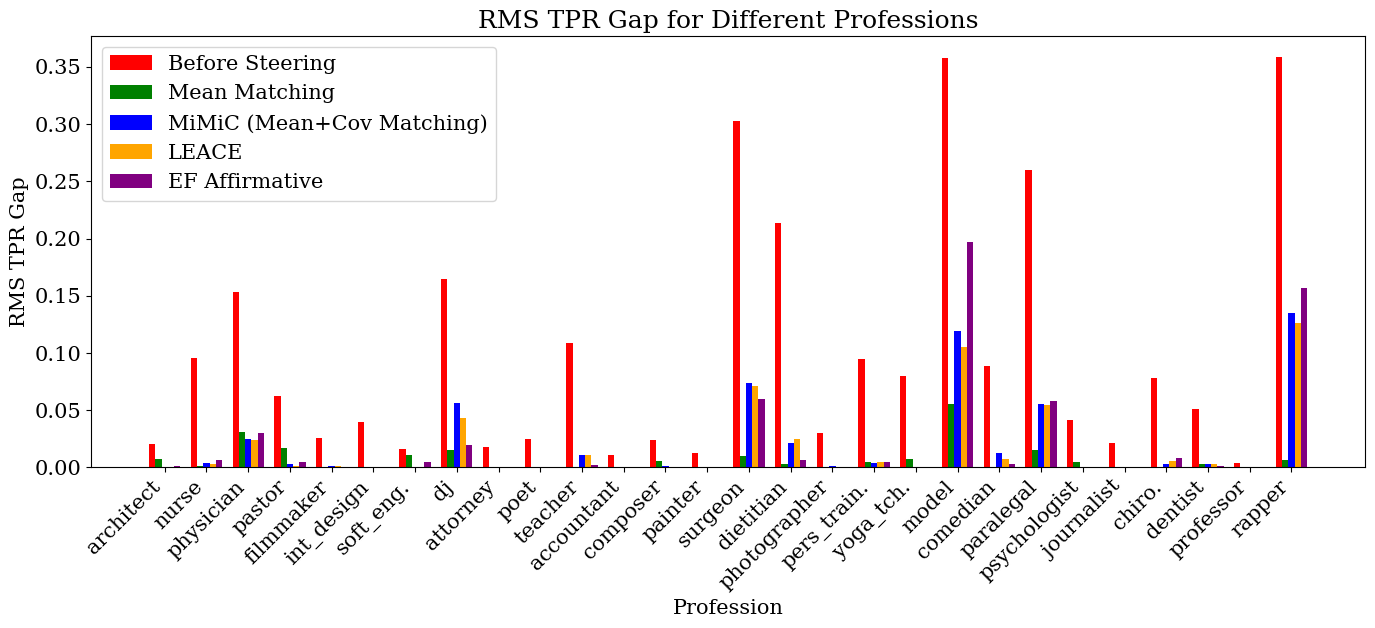}
    \caption{TPR-gap between Gender groups for all professions. All methods to steer feature representations achieve roughly the same accuracy (in the range of 0.77-0.79). Our intervention (EF Affirmative) is able to significantly reduce the TPR-gap for all professions. In many cases, it is even comparable or better than previous interventions \citet{belrose2023leace, singh2024representation}.}
    \label{fig:general_plot}
\end{figure*}


\subsection{Steering activations for Joyful generation} \label{appndx: zhao_steering}

Zhao et al. \cite{zhao2024beyond} propose to obtain a distribution over the steering vectors for a concept instead of a single steering vector. In this section, we lay out all our prompts and the design choices for the emotion steering pipeline.

We first generate training data for each concept (joyful, angry) for each of the groups (horror, comedy), resulting in four subgroups. The following prompt was used to generate an initial set of data:

\begin{promptbox}{Prompt to generate 1000 Comedy movie reviews that are joyful.}
Compose a concise 30-word movie review, assuming it is a comedy movie, that covers these four aspects: plot, sound and music, cultural impact, and emotional resonance. Choose a joyful tone for your review. For the plot, comment on its structure or originality. Regarding sound and music, mention how it enhances the storytelling. For cultural impact, touch on any relevant social commentary. Finally, describe how the film resonates emotionally. Ensure your joyful tone is consistent throughout the review. Please include emotions like “joyful” in these texts and generate 1000 samples.
\end{promptbox}

We obtain the last token embeddings from each layer of a Llama-3.1 8B model \cite{grattafiori2024llama} for each of the samples. We now proceed towards obtaining the steering vectors. We want to obtain a steering vector for each group. Treating angry reviews as an irrelevant sample for the `joyful' concept, we assign $y=0$ to angry samples and $y=1$ to joyful samples. Because we want to estimate a distribution over the steering vectors instead of a single vector, we sample 300 points with replacement and repeat this for 50 iterations. In each of these iterations, we train a Logistic Regression model to classify between the relevant and the irrelevant samples. We get 50 weight vectors using this pipeline, and we use those to obtain the sample mean and covariance. We denote the resulting distribution for the steering vectors for layer $l$ as $\mathcal{N}(\mu_{1a}^l, \Sigma_{1a}^l)$ where $1a$ denotes that this distribution represents the steering vector for joyful emotion for a group $A=a$ (horror or comedy reviews). To apply our intervention later, we also obtain the steering vector in the other direction by flipping the relevance labels, i.e. joyful $\rightarrow$ angry, and we denote the resulting Gaussian distribution with $\mathcal{N}(\mu_{0a}^l, \Sigma_{0a}^l)$.

To perform steering, we now sample a steering vector $v_c^l \sim \mathcal{N}(\mu_{1a}^l, \Sigma_{1a}^l)$ and add it to the last token representation of layer $l$ with strength $a$: $h^l = (1 - a)h^l + av_c^l$. To measure the performance of steering, we ask the Llama model to generate angry reviews using the following prompt:

\begin{promptbox}{Prompt to evaluate the joyfulness of generated reviews (for group horror).}
\textbf{System Prompt:} You are a well informed movie critic. \\
\textbf{User Prompt:} Compose a concise 10-word movie review, assuming it is a horror movie. The tone of the review must be angry. The review should be within 10 words. Generate 100 samples. Please make sure that you do not repeat the same review and keep each review under 10 words. Also, make sure that the reviews are coherent and not repetitive.
\end{promptbox}

We first measure the joyfulness score of the generated reviews without adding any steering vectors. Just like Zhao et al. \cite{zhao2024beyond}, we use the following prompt to ask the GPT 4.1 model to rate the joyfulness of the text:

\begin{promptbox}{Prompt to generate angry 10-word reviews for the horror genre.}
\textbf{System Prompt:} You are a deterministic evaluation agent trained to assess text samples using two strictly defined criteria: (1) emotional tone (joyful score) and (2) linguistic structure (chaotic score).

For each input, your task is to produce exactly two integer scores between 0 and 10. These must follow the user's scoring rubric precisely. The first score reflects emotional positivity or dark exuberance without considering sentence structure. The second score reflects linguistic disorder or repetition, without considering sentiment.

You must produce output in CSV format, strictly matching the following structure:
"sample", joyful score, chaotic score

Do not include any additional content, commentary, or variation. Your outputs must be fully deterministic: identical input always produces identical output.

Your responses should match the tone and evaluation logic demonstrated in the provided reference examples. \\
\textbf{User Prompt:} You are an automated evaluator tasked with rating horror movie reviews using two fixed criteria: **joyful expression** and **textual coherence and novelty**.

For each review, you must generate exactly two integer scores as described below:

---

\#\#\# **1. Joyful Score**

Rate from 0 to 10 based strictly on emotional tone:

* **0** = not joyful at all (strongly negative, angry, or emotionally flat tone)\\
* **10** = extremely joyful, enthusiastic, or darkly exuberant in tone\\
* Intermediate values reflect a mix of enthusiasm and criticism\\
* Focus solely on affect and sentiment; ignore linguistic structure for this score\\

---

\#\#\# **2. Chaotic Score**

Rate from 0 to 10 based strictly on linguistic organisation:

* **0** = fully coherent, fluent, and structurally sound\\
* **10** = highly chaotic, incoherent, repetitive, or nonsensical\\
* Intermediate values reflect partial disorganisation, verbosity, or broken phrasing\\
* Focus only on syntax, flow, and novelty of expression; ignore emotional content\\

---

\#\#\# **Reference Examples**

**Joyful and Coherent Horror Reviews:**

* “Blood-soaked fun ensues in this delightfully terrifying slasher film.”\\
* “Chilling thrills abound in this creepy haunted mansion tale.”\\
* “Jump scares galore in this electrifying horror comedy gem.”\\
* “Unsettling unease fills this unnerving psychological horror masterpiece.”\\
* “Bone-chilling chills chill to the bone in this one.”\\

**Angry and Coherent Horror Reviews:**

* “Abysmal plot twists ruined what could’ve been a decent film.”\\
* “Mind-numbing terror fails to deliver in this lazy horror.”\\
* “Weak jump scares can't save this trainwreck disaster.”\\
* “Poor production values ruin what little suspense exists.”\\
* “Frustratingly predictable, making it boring and unscary too.”\\

---

\#\#\# **Output Format**

* For each sample, return one line in strict CSV format:
  `"sample", joyful score, chaotic score`

* **Example Output:**

  ```
  sample, joyful score, chaotic score\\
  “This horror film was painfully dull and predictable.”, joyful\_score\_1, chaotic\_score\_1\\
  “Terrifying, stylish, and packed with chilling moments!”, joyful\_score\_2, chaotic\_score\_2\\
  ```

* Do **not** include explanations, commentary, or additional formatting.

* Output must be **fully deterministic**: the same input must always yield the same scores.

Begin processing the dataset now. Here is the batch of review samples:
\end{promptbox}

A few notes on evaluation are in order. Zhao et al. \cite{zhao2024beyond} report both joyfulness and coherence scores for the generated text. However, we observed that coherence scores were all over the place and did not make sense. Second, Zhao et al. evaluate using the GPT-4o model, whereas we used the GPT 4.1 model since we observed that the joyful scores corroborated more with the qualitative inspection of the generated samples.

Following the above pipeline, we observe an increase in joyfulness scores of the generated reviews by a Llama model after steering. However, since the effectiveness of joyful steering was not the same for the horror and comedy movie review generations, we apply our affirmative intervention (Theorem 4.1), assuming that the horror and comedy movie reviews define two groups. Let the modified steering vector be denoted by $\tilde{v}_c^l \sim \mathcal{N}(\tilde{\mu}_{1a}^l, \Tilde{\Sigma}_{1a}^l)$, where the new gaussian distribution is obtained after applying the affirmative action intervention from Theorem 4.1 assuming horror group is the under-privileged group. 

However, simply replacing $v_c^l$ will not work. We demonstrate that empirically in the main text, where in Figure 4, $\alpha=1$ corresponds to using $\tilde{v}_c^l$ instead of $v_c^l$. But we can always use $\tilde{v}_c^l$ to nudge the existing steering vector $v_c^l$ in the right direction. To do that, we modify the steering vector and the representation $h^l$ with the following rule: $h^l = (1 - a)h^l + a((1- \alpha)v_c^l + \alpha \tilde{v}_c^l)$, where $\alpha$ controls the strength of mixing the old and new steering vectors. In Figure 4, we show that for small values of $\alpha$, the steering vector indeed starts performing better in steering the reviews of the horror group towards a more joyful tone.


\newpage
\section*{NeurIPS Paper Checklist}

\begin{enumerate}

\item {\bf Claims}
    \item[] Question: Do the main claims made in the abstract and introduction accurately reflect the paper's contributions and scope?
    \item[] Answer: \answerYes{} 
    \item[] Justification: We believe our contributions and scope of work is accurately reflected in the abstract and the introduction.
    \item[] Guidelines:
    \begin{itemize}
        \item The answer NA means that the abstract and introduction do not include the claims made in the paper.
        \item The abstract and/or introduction should clearly state the claims made, including the contributions made in the paper and important assumptions and limitations. A No or NA answer to this question will not be perceived well by the reviewers. 
        \item The claims made should match theoretical and experimental results, and reflect how much the results can be expected to generalize to other settings. 
        \item It is fine to include aspirational goals as motivation as long as it is clear that these goals are not attained by the paper. 
    \end{itemize}

\item {\bf Limitations}
    \item[] Question: Does the paper discuss the limitations of the work performed by the authors?
    \item[] Answer: \answerYes{} 
    \item[] Justification: We discuss the limitations of our paper in Section \ref{sec:discussion}
    \item[] Guidelines:
    \begin{itemize}
        \item The answer NA means that the paper has no limitation while the answer No means that the paper has limitations, but those are not discussed in the paper. 
        \item The authors are encouraged to create a separate "Limitations" section in their paper.
        \item The paper should point out any strong assumptions and how robust the results are to violations of these assumptions (e.g., independence assumptions, noiseless settings, model well-specification, asymptotic approximations only holding locally). The authors should reflect on how these assumptions might be violated in practice and what the implications would be.
        \item The authors should reflect on the scope of the claims made, e.g., if the approach was only tested on a few datasets or with a few runs. In general, empirical results often depend on implicit assumptions, which should be articulated.
        \item The authors should reflect on the factors that influence the performance of the approach. For example, a facial recognition algorithm may perform poorly when image resolution is low or images are taken in low lighting. Or a speech-to-text system might not be used reliably to provide closed captions for online lectures because it fails to handle technical jargon.
        \item The authors should discuss the computational efficiency of the proposed algorithms and how they scale with dataset size.
        \item If applicable, the authors should discuss possible limitations of their approach to address problems of privacy and fairness.
        \item While the authors might fear that complete honesty about limitations might be used by reviewers as grounds for rejection, a worse outcome might be that reviewers discover limitations that aren't acknowledged in the paper. The authors should use their best judgment and recognize that individual actions in favor of transparency play an important role in developing norms that preserve the integrity of the community. Reviewers will be specifically instructed to not penalize honesty concerning limitations.
    \end{itemize}

\item {\bf Theory assumptions and proofs}
    \item[] Question: For each theoretical result, does the paper provide the full set of assumptions and a complete (and correct) proof?
    \item[] Answer: \answerYes{} 
    \item[] Justification: All theorems and supporting Lemmas are proved in the supplementary material and all assumptions are mentioned in the theorem statements and the proof. 
    \item[] Guidelines:
    \begin{itemize}
        \item The answer NA means that the paper does not include theoretical results. 
        \item All the theorems, formulas, and proofs in the paper should be numbered and cross-referenced.
        \item All assumptions should be clearly stated or referenced in the statement of any theorems.
        \item The proofs can either appear in the main paper or the supplemental material, but if they appear in the supplemental material, the authors are encouraged to provide a short proof sketch to provide intuition. 
        \item Inversely, any informal proof provided in the core of the paper should be complemented by formal proofs provided in appendix or supplemental material.
        \item Theorems and Lemmas that the proof relies upon should be properly referenced. 
    \end{itemize}

    \item {\bf Experimental result reproducibility}
    \item[] Question: Does the paper fully disclose all the information needed to reproduce the main experimental results of the paper to the extent that it affects the main claims and/or conclusions of the paper (regardless of whether the code and data are provided or not)?
    \item[] Answer: \answerYes{} 
    \item[] Justification: Our experiments use the setups used in Singh et al. \cite{singh2024representation} and Zhao et al. \cite{zhao2024beyond}, and we provide anonymized Jupyter notebooks to reproduce all the results included in our paper and all the details of our experiment pipeline in the supplementary material. 
    \item[] Guidelines:
    \begin{itemize}
        \item The answer NA means that the paper does not include experiments.
        \item If the paper includes experiments, a No answer to this question will not be perceived well by the reviewers: Making the paper reproducible is important, regardless of whether the code and data are provided or not.
        \item If the contribution is a dataset and/or model, the authors should describe the steps taken to make their results reproducible or verifiable. 
        \item Depending on the contribution, reproducibility can be accomplished in various ways. For example, if the contribution is a novel architecture, describing the architecture fully might suffice, or if the contribution is a specific model and empirical evaluation, it may be necessary to either make it possible for others to replicate the model with the same dataset, or provide access to the model. In general. releasing code and data is often one good way to accomplish this, but reproducibility can also be provided via detailed instructions for how to replicate the results, access to a hosted model (e.g., in the case of a large language model), releasing of a model checkpoint, or other means that are appropriate to the research performed.
        \item While NeurIPS does not require releasing code, the conference does require all submissions to provide some reasonable avenue for reproducibility, which may depend on the nature of the contribution. For example
        \begin{enumerate}
            \item If the contribution is primarily a new algorithm, the paper should make it clear how to reproduce that algorithm.
            \item If the contribution is primarily a new model architecture, the paper should describe the architecture clearly and fully.
            \item If the contribution is a new model (e.g., a large language model), then there should either be a way to access this model for reproducing the results or a way to reproduce the model (e.g., with an open-source dataset or instructions for how to construct the dataset).
            \item We recognize that reproducibility may be tricky in some cases, in which case authors are welcome to describe the particular way they provide for reproducibility. In the case of closed-source models, it may be that access to the model is limited in some way (e.g., to registered users), but it should be possible for other researchers to have some path to reproducing or verifying the results.
        \end{enumerate}
    \end{itemize}

\item {\bf Open access to data and code}
    \item[] Question: Does the paper provide open access to the data and code, with sufficient instructions to faithfully reproduce the main experimental results, as described in supplemental material?
    \item[] Answer: \answerYes{} 
    \item[] Justification: We provide anonymized Jupyter notebooks to reproduce our experiments, on top of the codebase used by Singh et al. \cite{singh2024representation}. 
    \item[] Guidelines:
    \begin{itemize}
        \item The answer NA means that paper does not include experiments requiring code.
        \item Please see the NeurIPS code and data submission guidelines (\url{https://nips.cc/public/guides/CodeSubmissionPolicy}) for more details.
        \item While we encourage the release of code and data, we understand that this might not be possible, so “No” is an acceptable answer. Papers cannot be rejected simply for not including code, unless this is central to the contribution (e.g., for a new open-source benchmark).
        \item The instructions should contain the exact command and environment needed to run to reproduce the results. See the NeurIPS code and data submission guidelines (\url{https://nips.cc/public/guides/CodeSubmissionPolicy}) for more details.
        \item The authors should provide instructions on data access and preparation, including how to access the raw data, preprocessed data, intermediate data, and generated data, etc.
        \item The authors should provide scripts to reproduce all experimental results for the new proposed method and baselines. If only a subset of experiments are reproducible, they should state which ones are omitted from the script and why.
        \item At submission time, to preserve anonymity, the authors should release anonymized versions (if applicable).
        \item Providing as much information as possible in supplemental material (appended to the paper) is recommended, but including URLs to data and code is permitted.
    \end{itemize}

\item {\bf Experimental setting/details}
    \item[] Question: Does the paper specify all the training and test details (e.g., data splits, hyperparameters, how they were chosen, type of optimizer, etc.) necessary to understand the results?
    \item[] Answer: \answerNo{} 
    \item[] Justification: Our Jupyter notebooks and supplementary material provide all the details for our experiments. For LLM generation experiments, we also include all the prompts required to generate the data. We also provide the file containing all the prompts used. For the multi-class debiasing experiments, we used the representations provided by Singh et al. \cite{singh2024representation} upon request, and hence we cannot include that in our codebase. Those files can be requested from Singh et al. directly. 
    \item[] Guidelines:
    \begin{itemize}
        \item The answer NA means that the paper does not include experiments.
        \item The experimental setting should be presented in the core of the paper to a level of detail that is necessary to appreciate the results and make sense of them.
        \item The full details can be provided either with the code, in appendix, or as supplemental material.
    \end{itemize}

\item {\bf Experiment statistical significance}
    \item[] Question: Does the paper report error bars suitably and correctly defined or other appropriate information about the statistical significance of the experiments?
    \item[] Answer: \answerNo{} 
    \item[] Justification: We report results over deterministic generations from LLMs and deterministic evaluation using GPT-4o model. We provide all the prompts and evaluation code. However, due to the unavailability of compute, for the first experiment of TPR-gap reduction, we were only able to fit a few inference and generation cycles from these LLMs. 
    \item[] Guidelines:
    \begin{itemize}
        \item The answer NA means that the paper does not include experiments.
        \item The authors should answer "Yes" if the results are accompanied by error bars, confidence intervals, or statistical significance tests, at least for the experiments that support the main claims of the paper.
        \item The factors of variability that the error bars are capturing should be clearly stated (for example, train/test split, initialization, random drawing of some parameter, or overall run with given experimental conditions).
        \item The method for calculating the error bars should be explained (closed form formula, call to a library function, bootstrap, etc.)
        \item The assumptions made should be given (e.g., Normally distributed errors).
        \item It should be clear whether the error bar is the standard deviation or the standard error of the mean.
        \item It is OK to report 1-sigma error bars, but one should state it. The authors should preferably report a 2-sigma error bar than state that they have a 96\% CI, if the hypothesis of Normality of errors is not verified.
        \item For asymmetric distributions, the authors should be careful not to show in tables or figures symmetric error bars that would yield results that are out of range (e.g. negative error rates).
        \item If error bars are reported in tables or plots, The authors should explain in the text how they were calculated and reference the corresponding figures or tables in the text.
    \end{itemize}

\item {\bf Experiments compute resources}
    \item[] Question: For each experiment, does the paper provide sufficient information on the computer resources (type of compute workers, memory, time of execution) needed to reproduce the experiments?
    \item[] Answer: \answerYes{} 
    \item[] Justification: We mention the compute resources used for our experiments in the supplementary material.
    \item[] Guidelines:
    \begin{itemize}
        \item The answer NA means that the paper does not include experiments.
        \item The paper should indicate the type of compute workers CPU or GPU, internal cluster, or cloud provider, including relevant memory and storage.
        \item The paper should provide the amount of compute required for each of the individual experimental runs as well as estimate the total compute. 
        \item The paper should disclose whether the full research project required more compute than the experiments reported in the paper (e.g., preliminary or failed experiments that didn't make it into the paper). 
    \end{itemize}
    
\item {\bf Code of ethics}
    \item[] Question: Does the research conducted in the paper conform, in every respect, with the NeurIPS Code of Ethics \url{https://neurips.cc/public/EthicsGuidelines}?
    \item[] Answer: \answerYes{} 
    \item[] Justification: The paper conforms with the NeurIPS code of Ethics. 
    \item[] Guidelines:
    \begin{itemize}
        \item The answer NA means that the authors have not reviewed the NeurIPS Code of Ethics.
        \item If the authors answer No, they should explain the special circumstances that require a deviation from the Code of Ethics.
        \item The authors should make sure to preserve anonymity (e.g., if there is a special consideration due to laws or regulations in their jurisdiction).
    \end{itemize}

\item {\bf Broader impacts}
    \item[] Question: Does the paper discuss both potential positive societal impacts and negative societal impacts of the work performed?
    \item[] Answer: \answerYes{} 
    \item[] Justification: We discuss the broader impacts of our work in the Discussion section (Section \ref{sec:discussion}).
    \item[] Guidelines:
    \begin{itemize}
        \item The answer NA means that there is no societal impact of the work performed.
        \item If the authors answer NA or No, they should explain why their work has no societal impact or why the paper does not address societal impact.
        \item Examples of negative societal impacts include potential malicious or unintended uses (e.g., disinformation, generating fake profiles, surveillance), fairness considerations (e.g., deployment of technologies that could make decisions that unfairly impact specific groups), privacy considerations, and security considerations.
        \item The conference expects that many papers will be foundational research and not tied to particular applications, let alone deployments. However, if there is a direct path to any negative applications, the authors should point it out. For example, it is legitimate to point out that an improvement in the quality of generative models could be used to generate deepfakes for disinformation. On the other hand, it is not needed to point out that a generic algorithm for optimizing neural networks could enable people to train models that generate Deepfakes faster.
        \item The authors should consider possible harms that could arise when the technology is being used as intended and functioning correctly, harms that could arise when the technology is being used as intended but gives incorrect results, and harms following from (intentional or unintentional) misuse of the technology.
        \item If there are negative societal impacts, the authors could also discuss possible mitigation strategies (e.g., gated release of models, providing defenses in addition to attacks, mechanisms for monitoring misuse, mechanisms to monitor how a system learns from feedback over time, improving the efficiency and accessibility of ML).
    \end{itemize}
    
\item {\bf Safeguards}
    \item[] Question: Does the paper describe safeguards that have been put in place for responsible release of data or models that have a high risk for misuse (e.g., pretrained language models, image generators, or scraped datasets)?
    \item[] Answer: \answerNA{} 
    \item[] Justification: We do not release any such data or models.
    \item[] Guidelines:
    \begin{itemize}
        \item The answer NA means that the paper poses no such risks.
        \item Released models that have a high risk for misuse or dual-use should be released with necessary safeguards to allow for controlled use of the model, for example by requiring that users adhere to usage guidelines or restrictions to access the model or implementing safety filters. 
        \item Datasets that have been scraped from the Internet could pose safety risks. The authors should describe how they avoided releasing unsafe images.
        \item We recognize that providing effective safeguards is challenging, and many papers do not require this, but we encourage authors to take this into account and make a best faith effort.
    \end{itemize}

\item {\bf Licenses for existing assets}
    \item[] Question: Are the creators or original owners of assets (e.g., code, data, models), used in the paper, properly credited and are the license and terms of use explicitly mentioned and properly respected?
    \item[] Answer: \answerYes{} 
    \item[] Justification: We use publicly available codebases and assets. Data representations were requested from Singh et al. \cite{singh2024representation} and they are credited for this in the paper.
    \item[] Guidelines:
    \begin{itemize}
        \item The answer NA means that the paper does not use existing assets.
        \item The authors should cite the original paper that produced the code package or dataset.
        \item The authors should state which version of the asset is used and, if possible, include a URL.
        \item The name of the license (e.g., CC-BY 4.0) should be included for each asset.
        \item For scraped data from a particular source (e.g., website), the copyright and terms of service of that source should be provided.
        \item If assets are released, the license, copyright information, and terms of use in the package should be provided. For popular datasets, \url{paperswithcode.com/datasets} has curated licenses for some datasets. Their licensing guide can help determine the license of a dataset.
        \item For existing datasets that are re-packaged, both the original license and the license of the derived asset (if it has changed) should be provided.
        \item If this information is not available online, the authors are encouraged to reach out to the asset's creators.
    \end{itemize}

\item {\bf New assets}
    \item[] Question: Are new assets introduced in the paper well documented and is the documentation provided alongside the assets?
    \item[] Answer: \answerNA{} 
    \item[] Justification: The paper does not release any new assets.
    \item[] Guidelines:
    \begin{itemize}
        \item The answer NA means that the paper does not release new assets.
        \item Researchers should communicate the details of the dataset/code/model as part of their submissions via structured templates. This includes details about training, license, limitations, etc. 
        \item The paper should discuss whether and how consent was obtained from people whose asset is used.
        \item At submission time, remember to anonymize your assets (if applicable). You can either create an anonymized URL or include an anonymized zip file.
    \end{itemize}

\item {\bf Crowdsourcing and research with human subjects}
    \item[] Question: For crowdsourcing experiments and research with human subjects, does the paper include the full text of instructions given to participants and screenshots, if applicable, as well as details about compensation (if any)? 
    \item[] Answer: \answerNA{} 
    \item[] Justification: The paper does not involve such aspects.
    \item[] Guidelines:
    \begin{itemize}
        \item The answer NA means that the paper does not involve crowdsourcing nor research with human subjects.
        \item Including this information in the supplemental material is fine, but if the main contribution of the paper involves human subjects, then as much detail as possible should be included in the main paper. 
        \item According to the NeurIPS Code of Ethics, workers involved in data collection, curation, or other labor should be paid at least the minimum wage in the country of the data collector. 
    \end{itemize}

\item {\bf Institutional review board (IRB) approvals or equivalent for research with human subjects}
    \item[] Question: Does the paper describe potential risks incurred by study participants, whether such risks were disclosed to the subjects, and whether Institutional Review Board (IRB) approvals (or an equivalent approval/review based on the requirements of your country or institution) were obtained?
    \item[] Answer: \answerNA{} 
    \item[] Justification: The paper does note involve working with human subjects.
    \item[] Guidelines:
    \begin{itemize}
        \item The answer NA means that the paper does not involve crowdsourcing nor research with human subjects.
        \item Depending on the country in which research is conducted, IRB approval (or equivalent) may be required for any human subjects research. If you obtained IRB approval, you should clearly state this in the paper. 
        \item We recognize that the procedures for this may vary significantly between institutions and locations, and we expect authors to adhere to the NeurIPS Code of Ethics and the guidelines for their institution. 
        \item For initial submissions, do not include any information that would break anonymity (if applicable), such as the institution conducting the review.
    \end{itemize}

\item {\bf Declaration of LLM usage}
    \item[] Question: Does the paper describe the usage of LLMs if it is an important, original, or non-standard component of the core methods in this research? Note that if the LLM is used only for writing, editing, or formatting purposes and does not impact the core methodology, scientific rigorousness, or originality of the research, declaration is not required.
    \item[] Answer: \answerYes{} 
    \item[] Justification: We show the applications of our theorems on the LLM steering problem. While our problem is motivated from a point of view of distribution steering, steering LLM distributions is a very relevant and important application for us.
    \item[] Guidelines:
    \begin{itemize}
        \item The answer NA means that the core method development in this research does not involve LLMs as any important, original, or non-standard components.
        \item Please refer to our LLM policy (\url{https://neurips.cc/Conferences/2025/LLM}) for what should or should not be described.
    \end{itemize}

\end{enumerate}

\end{document}